\documentclass[nohyperref]{article}

\usepackage{microtype}
\usepackage{graphicx}
\usepackage{subfigure}
\usepackage{booktabs} %
\usepackage{enumitem}
\usepackage{hyperref}

\usepackage[accepted]{icml2022}

\RequirePackage{xcolor}
\usepackage[utf8]{inputenc}
\usepackage{caption}
\usepackage{mathtools}
\usepackage{amsfonts}
\usepackage{amsmath}
\usepackage{amsthm}
\usepackage{amssymb}
\input{mysymbol.sty}
\usepackage{tikz-cd}
\usepackage{longtable}
\usepackage{makecell}
\usepackage{thm-restate}
\usepackage{thmtools}
\usepackage{lipsum}
\usepackage{thm-restate}
\usepackage{hyperref}
\usepackage[capitalize]{cleveref}
\usepackage{adjustbox}
\usepackage{makecell, multirow}
\usepackage{xspace}
\usepackage{stackrel}

\setcellgapes{2.5pt}
\newcommand\numberthis{\addtocounter{equation}{1}\tag{\theequation}} %

\newcommand{\edgeweight}   {{edge weight continuous model}\xspace}
\newcommand{\edgeprob}  {{edge probability discrete model}\xspace}
\newcommand{\cIGN}  {{cIGN}\xspace}
\newcommand{\textNewnorm}  {{Partition-norm}\xspace}
\newcommand{\textnewnorm}  {{partition-norm}\xspace}
\newcommand{\parspace}  {{\Gamma}}

\icmltitlerunning{Convergence of Invariant Graph Networks}

\begin{document}

\twocolumn[
\icmltitle{Convergence of Invariant Graph Networks}

\icmlsetsymbol{equal}{*}
\begin{icmlauthorlist}
\icmlauthor{Chen Cai}{ucsd}
\icmlauthor{Yusu Wang}{ucsd}
\end{icmlauthorlist}
\icmlaffiliation{ucsd}{University of California San Diego, San Diego, USA}

\icmlcorrespondingauthor{Chen Cai}{c1cai@ucsd.edu}

\icmlkeywords{graph neural networks}

\vskip 0.3in
]

\printAffiliationsAndNotice{} %

\begin{abstract}
Although theoretical properties such as expressive power and over-smoothing of graph neural networks (GNN) have been extensively studied recently, its convergence property is a relatively new direction. In this paper, we investigate the convergence of one powerful GNN, Invariant Graph Network (IGN) over graphs sampled from graphons.

We first prove the stability of linear layers for general $k$-IGN (of order $k$) based on a novel interpretation of linear equivariant layers. Building upon this result, we prove the convergence of $k$-IGN under the model of \citet{ruiz2020graphon}, where we access the edge weight but the convergence error is measured for graphon inputs. %

Under the more natural (and more challenging) setting of \citet{keriven2020convergence} where one can only access 0-1 adjacency matrix sampled according to edge probability, we first show a negative result that the convergence of any IGN is not possible. We then obtain the convergence of a subset of IGNs, denoted as \smallIGN{}, after the edge probability estimation. We show that \smallIGN{} still contains function class rich enough that can approximate \sGNN{}s arbitrarily well. %
Lastly, we perform experiments on various graphon models to verify our statements.

\end{abstract}

\section{Introduction}

Graph neural networks (GNNs) have recently become a key framework for the learning and analysis of graph type of data, leading to progress on link prediction, knowledge graph embedding, and property prediction to name a few \citep{wu2020comprehensive,zhou2020graph}.
Although theoretical properties such as expressive power \citep{maron2019universality,keriven2019universal,maron2019provably,garg2020generalization,azizian2020expressive,geerts2020expressive,bevilacqua2021equivariant} and over-smoothing \citep{li2018deeper,oono2019graph, cai2020note, zhou2021dirichlet} of GNNs have received much attention, their convergence property is less understood. In this paper, we systematically investigate the convergence of one of the most powerful families of GNNs, the \emph{Invariant Graph Network (IGN)} \citep{maron2018invariant}.
Different from message passing neural network (MPNN) \citep{gilmer2017neural}, it treats graphs and associated node/edge features as monolithic tensors and processes them in a permutation equivariant manner. \IGN{} can approximate the message passing neural network (MPNN) arbitrarily well on the compact domain. When allowing the use of high-order tensor as the intermediate representation, $k$-IGN is shown at least as powerful as $k$-WL test. As the tensor order $k$ goes to $\mc{O}(n^4)$, it achieves the universality and can distinguish all graphs of size $n$ \citep{maron2019universality,keriven2019universal,azizian2020expressive}. %

The high level question we are interested in is the convergence and stability of GNNs. In particular, given a sequence of graphs sampled from some generative models, does a GNN performed on them also converge to a limiting object? This problem has been considered recently, however, so far, the studies \citep{ruiz2020graphon,keriven2020convergence} focus on the convergence of \emph{spectral GNNs}, which encompasses several models \citep{bruna2013spectral,defferrard2016convolutional} including GCNs with order-1 filters \citep{kipf2016semi}. However, it is known that the expressive power of GCN is limited.
Given that 2(k)-IGN is strictly more powerful than GCN \citep{xu2018powerful} in terms of separating graphs\footnote{In terms of separating graphs, $\text{$k$-IGN}>\text{$2$-IGN}=\text{GIN}>\text{GCN}$ for $k>2$. } and its ability to achieve universality, it is of great interest to study the convergence of such powerful GNN. In fact, it is posted as an open question in \citet{keriven2021universality} to study convergence for models more powerful than spectral GNNs and higher order GNNs. This is the question we aim to study in this paper.

\textbf{Contributions.}
We present the first convergence study of the powerful $k$-IGNs (strictly more powerful than the Spectral GNN which previous work studied).
We first analyze the building block of IGNs: linear equivariant layers, and develop a stability result for such layers.
The case of $2$-IGN is proved via case analysis while the general case of $k$-IGN uses a novel interpretation of the linear equivariant layers which we believe is of independent interest.

There have been two existing models of convergence of \sGNN{}s for graphs sampled from graphons developed in \citet{ruiz2020graphon} and \citet{keriven2020convergence}, respectively.
Using the model of \citet{ruiz2020graphon} (denoted by the \textit{\edgeweight{}}) where we access the edge weight but the convergence error is measured between \textit{graphon inputs} (see \Cref{sec:convergence-ruiz} for details), we obtain analogous convergence results for $k$-IGNs. %
The results cover both deterministic and random sampling for {\bf $k$-IGN} while \citet{ruiz2020graphon}
only covers deterministic sampling for the much weaker {\bf Spectral GNN}s.

Under more natural (and more challenging) setting of \citet{keriven2020convergence} where one can only access 0-1 adjacency matrix sampled according to edge probability (called the \textit{\edgeprob{}}), we first show a negative result that in general the convergence of all IGNs is not possible.
Building upon our earlier stability result, we obtain the convergence of a subset of IGN, denoted as \smallIGN{}, after a step of edge probability estimation. We show that \smallIGN{} still contains rich function class that can approximate Spectral GNN arbitrarily well. %
Lastly, we perform experiments on various graphon models to verify our statements.

\section{Related Work}
One type of convergence in deep learning concerns the limiting behavior of neural networks when the width goes to infinity \citep{jacot2018neural,du2018gradient,arora2019fine,lee2019wide,du2019gradient}.  In that regime, the gradient flow on a normally initialized, fully connected neural network with a linear output layer in the infinite-width limit turns out to be equivalent to kernel regression with respect to the Neural Tangent Kernel \citep{jacot2018neural}.

Another type of convergence
concerns the limiting behavior of neural networks when the depth goes to infinity. In the continuous limit, models such as residual networks, recurrent neural network decoders, and normalizing flows can be seen as an Euler discretization of an ordinary differential equation \cite{weinan2017proposal, chen2018neural,lu2018beyond,ruthotto2020deep}. %

The type of convergence we consider in this paper concerns when the input objects converge to a limit, does the output of some neural network over such sequence of objects also converge to a limit? In the context of GNNs, such convergence and related notion of stability and transferability have been studied in both graphon  \citep{ruiz2020graphon,keriven2020convergence,gama2020stability,ruiz2021graph} and manifold setting \citet{kostrikov2018surface, levie2021transferability}. In the manifold setting, the analysis is closely related to the literature on convergence of Laplacian operator \citep{xu2004discrete,wardetzky2008convergence, belkin2008discrete,belkin2009constructing,dey2010convergence}. %

Lastly, after ICML 2022 conference it is brought to our attention that the characterization of linear permutation equivariant layers in $k$-IGN bears similarity in \citet{albooyeh2019incidence}. The pooling and broadcasting operations in \citet{albooyeh2019incidence} are the same as what we call the "averaging" and "replication" operations in our paper. This is discussed in details in \Cref{remark:difference}.   

\begin{table*}[htp]
\renewcommand{\arraystretch}{1.5}
\centering
\tiny
\caption{Linear equivariant maps for $\mb{R}^{n \times n} \rightarrow \mb{R}^{n\times n}$ and $\mb{R}^{[0,1]^2} \rightarrow \mb{R}^{[0,1]^2}$. $\one$ is a all-one vector  of size $n\times 1$ and $\mathrm{I}_{u=v}$ is the indicator function. }
\label{tab:R2-R2}
{\makegapedcells %
\begin{tabular}[]{@{}llll@{}}
\toprule
\makecell[l]{Operations} & Discrete & Continuous & Partitions \\ \midrule
\makecell[l]{1-2: The identity and\\ transpose operations} &
\makecell[l]{$T(A) = A$\\ $T(A) = A^T$} &
\makecell[l]{$T(W) = W$ \\ $T(W) = W^T$} &
\makecell[l]{$\{\{1,3\},\{2,4\}\}$ \\ $\{\{1,4\},\{2,3\}\}$}
\\ \hline

3: The diag operation &
\makecell[l]{$T(A) = \text{Diag}(\text{Diag}^{*}(A))$} &
\makecell[l]{$T(W)(u, v) = W(u, v)\mathrm{I}_{u=v}$}  &
\makecell[l]{$\{\{1,2,3,4\}\}$}
\\ \hline

\makecell[l]{4-6: Average of rows replicated \\ on rows/ columns/ diagonal} &
\makecell[l]{$T(A) = \frac{1}{n}A\one\one^T$ \\ $T(A) = \frac{1}{n}\one(A\one)^T$\\ $T(A) = \frac{1}{n}\text{Diag}(A\one)$}
&
\makecell[l]{$T(W)(*, u) = \int W(u, v)dv$\\ $T(W)(u, *) = \int W(u, v)dv$ \\ $T(W)(u, v) = \mathrm{I}_{u=v}\int W(u, v')dv' $}  &
\makecell[l]{$\{\{1,4\},\{2\},\{3\}\}$ \\ $\{\{1,3\},\{2\},\{4\}\}$ \\ $\{\{1,3,4\},\{2\}\}$}
\\\hline

\makecell[l]{7-9: Average of columns replicated \\on rows/ columns/ diagonal } &
\makecell[l]{$T(A) = \frac{1}{n}A^T\one\one^T$ \\ $T(A) = \frac{1}{n}\one(A^T \one)^T$ \\ $T(A) = \frac{1}{n}\text{Diag}(A^T\one).$} &
\makecell[l]{$T(W)(*, v) = \int W(u, v)du$\\ $T(W)(v, *) = \int W(u, v)du$ \\ $T(W)(u, v) = \mathrm{I}_{u=v}\int W(u', v)du' $} &
\makecell[l]{$\{\{1\},\{2,4\},\{3\}\}$ \\ $\{\{1\},\{2,3\},\{4\}\}$ \\ $\{\{1\},\{2,3,4\}\}$}
\\ \hline

\makecell[l]{10-11: Average of all elements \\replicated on all matrix/ diagonal} &
\makecell[l]{$T(A)=\frac{1}{n^2}(\one^T A\one) \cdot \one \one^T$
\\ $T(A) = \frac{1}{n^2}(\one^T A\one) \cdot \text{Diag}(\one).$} &
\makecell[l]{$T(W)(*, *) = \int W(u, v)dudv$\\ $T(W)(u, v) = \mathrm{I}_{u=v}\int W(u', v')du'dv'$ } &
\makecell[l]{$\{\{1\},\{2\},\{3\},\{4\}\}$ \\ $\{\{1\},\{2\},\{3,4\}\}$}
\\ \hline

\makecell[l]{12-13: Average of diagonal elements \\ replicated on all matrix/diagonal} &
\makecell[l]{$T(A) = \frac{1}{n}(\one^T \text{Diag}^{*}(A)) \cdot \one \one^T$\\ $T(A) = \frac{1}{n}(\one^T \text{Diag}^{*}(A)) \cdot \text{Diag}(\one)$} &
\makecell[l]{$T(W)(*, *) = \int \mathrm{I}_{u=v}W(u, v) dudv$\\ $T(W)(u,v) = \mathrm{I}_{u=v}\int W(u', u') du'$ } &
\makecell[l]{$\{\{1,2\},\{3\},\{4\}\}$ \\ $\{\{1,2\},\{3,4\}\}$}
\\ \hline

\makecell[l]{14-15: Replicate diagonal elements \\ on rows/columns} &
\makecell[l]{$T(A) = \text{Diag}^{*}(A)\one^T$ \\ $T(A) = \one\text{Diag}^{*}(A)^T$} &
\makecell[l]{$T(W)(u, v)= W(u, u)$\\ $T(W)(u, v) = W(v, v)$} &
\makecell[l]{$\{\{1,2,4\},\{3\}\}$ \\ $\{\{1,2,3\},\{4\}\}$}
\\
\bottomrule
\end{tabular}
}
\end{table*}
\section{Preliminaries}
\label{sec:pre}
\subsection{Notations}
To talk about convergence/stability, we will consider graphs of different sizes sampled from a generative model. Similar to the earlier work in this direction, the specific general model we consider is a graphon model.

\textbf{Graphons.} A graphon is a bounded, symmetric and measurable function~$W: [0,1]^2 \to [0,1]$. We denote the space of graphon as $\mc{W}$. It can  be intuitively thought of as an undirected weighted graph with an uncountable number of nodes: roughly speaking, given $u_i, u_j \in [0,1]$, we can consider there is an edge $(i,j)$ with weight $W(u_i, u_j)$.
Given a graphon $W$, we can sample {\bf unweighted} graphs of any size from $W$, either in a deterministic or stochastic manner. We defer the definition of the sampling process until we introduce the \edgeweight{} in \Cref{sec:convergence-ruiz} and \edgeprob{} in \Cref{sec:EP-convergence}.

\textbf{Tensor.} Let $[n]$ denote $\{1, ..., n\}$. A tensor $X$ of order $k$, called a \emph{$k$-tensor}, is a map from $[n]^{\otimes k}$ to $\mb{R}^d$. If we specify a name $\name{i}$ for each axis, we then say $X$ is indexed by $(\name{1}, ..., \name{k})$. With slight abuse of notation, we also write that $X \in \dtensor{k}{d}$. %
We refer to $d$ as the \emph{feature dimensions} or the \emph{channel dimensions}. If $d = 1$, then we have a $k$-tensor $\mb{R}^{n^k\times 1}  = \mb{R}^{n^k}$.
Although the name for each axis acts as an identifier and can be given arbitrarily, we will use \textit{set} to name each axis in this paper. For example, given a 3-tensor $X$, we use $\{1\}$ to name the first axis, $\{2\}$ for the second axis, and so on. The benefits of doing so will be clear in \Cref{subsec:stability-of-k-ign}.

\textbf{Partition.} A partition of $[k]$, denoted as $\gamma$, is defined to be a set of disjoint sets $\gamma:=\{\gamma_1, ..., \gamma_s \}$  with $s\leqslant k$ such that the following condition satisfies,
1) for all $i\in [s],  \gamma_{i} \subset [k]$, 2) $\gamma_i \cap \gamma_j = \emptyset, \forall$ $i, j\in [s]$, and 3) $\cup_{i=1}^{s} \gamma_i = [k]$.
We denote the space of all partitions of $[k]$ as $\parspace_k$. %
Its cardinality is called the $k$-th \emph{bell number} $\bell{k}=|\parspace_k|$.

\textbf{Other conventions.}
 By default, we use 2-norm (Frobenius norm) to refer \lnorm\
for all vectors/matrices and \Lnorm\ for functions on $[0, 1]$ and $[0,1]^2$. $\|\cdot\|_2$ or $\|\cdot\|$ denotes the 2 norm for discrete objects while $\|W\|_{L_2}\coloneqq\int \int W(u, v)dudv$ denotes the norm for continuous objects. Similarly, we use $\| \cdot \|_{{\infty}}$ and $\| \cdot \|_{L_\infty}$ to denotes the infinity norm.
When necessary, we use $\|\cdot \|_{L_2([0, 1])}$ to specify the support explicitly. We use $\specnorm{\cdot}$ to denote spectral norm. $\Phi_c$ and $\Phi_d$ refers to the continuous IGN and discrete IGN respectively. We sometimes call a function $f: [0, 1] \rightarrow \mb{R}^d$ a \emph{graphon signal}. Given $A\in \mb{R}^{n^k \times d_1}, B\in \mb{R}^{n^k \times d_2}$, $[A, B]$ is defined to be the concatenation of $A$ and $B$ along feature dimensions, i.e., $[A, B] \in \mb{R}^{n^k \times (d_1 + d_2)}$. See \Cref{table:symbol_notation} in Appendix for the full symbol list.

\subsection{Invariant Graph Network}

\begin{definition}\label{def:ign}
An Invariant Graph Network (IGN) is a function $\Phi: \mb{R}^{n^{2} \times d_{0}} \rightarrow \mb{R}^d$ of the following form:
\begin{equation}\label{eqn:ign}
F=%
h \circ L^{(T)} \circ \sigma \circ \cdots \circ \sigma \circ L^{(1)},
\end{equation}
where each $L^{(t)}$ is a linear equivariant (LE) layer \citep{maron2018invariant} from $\mb{R}^{n^{k_{t-1}} \times d_{t-1}} \text { to } \mb{R}^{n^{k_{t}} \times d_{t}}$ (i.e., mapping a $k_{t-1}$ tensor with $d_{t-1}$ channels to a $k_t$ tensor with $d_t$ channels), $\sigma$ is nonlinear activation function, $h$ is a linear invariant layer from $k_T$-tensor $\mb{R}^{n^{k_T} \times d_{T}} \text { to vector in } \mb{R}^d$. $d_t$ is the channel number, and $k_t$ is tensor order in $t$-th layer. %
\end{definition}

Let $\dg{\cdot}$ be the operator of constructing a diagonal matrix from vector and $\dgdual{\cdot}$ be the operation of extracting a diagonal from a matrix.
Under the IGN framework, we view a graph with $n$ nodes as a $2$-tensor: In particular,
given its adjacency matrix $A_n$ of size $n\times n$ with node features $X_{n}\in \mb{R}^{n\times d_{\text{node}}}$ and edge features $E_{n \times n} \in \mb{R}^{n^2\times d_{\text{edge}}}$, the input of IGN is the concatenation of $[A_n, \text{Diag}(X_n), E_{n\times n}]\in \mb{R}^{n^2\times (1 + d_{\text{node}}+ d_{\text{edge}})}$ along different channels. %
We drop the subscript when there is no confusion.
We use {\bf $2$-IGN} to denote the IGN whose largest tensor order within any intermediate layer is $2$,
while {\bf $k$-IGN} is one whose largest tensor order across all layers is $k$. We use IGN to refer to the general IGN for any order $k$. %

Without loss of generality, we consider input and output tensor to have a single channel. The extension to multiple channels case is presented in \Cref{subsec:IGN-details}.
Consider all linear equivariant maps from $\mb{R}^{n^{\ell}}$ to $\mb{R}^{n^{m}}$, denoted as $\LE_{\ell + m}$. \citet{maron2018invariant} characterizes the basis of the space of $\LE_{\ell, m}$. It turns out that the cardinality of the basis equals to the bell number $\bell{\ell + m}$, thus depending only on the order of input/output tensor and independent from graph size $n$.
As an example, we list a specific basis of the space of \LE{} maps for \IGN{} (thus with tensor order at most $2$) in \Cref{tab:R2-R2,tab:R1-R2,tab:R2-R1} when input/output channel numbers are both 1. Extending the \LE{} layers to multiple input/output channels is straightforward, and can be achieved by parametrizing the \LE{} layers according to indices of input/output channel. See \Cref{remark:IGN} in Appendix.
Note that one difference of the operators in  \Cref{tab:R2-R1,tab:R1-R2,tab:R2-R2} from those given in the original paper is that
here we normalize all operators appropriately w.r.t. the graph size $n$. (This normalization is also in the official implementation of the IGN paper.)
This is necessary when we consider the continuous limiting case.

To talk about convergence, one has to define the continuous analog of IGN for graphons. In \Cref{tab:R2-R2,tab:R2-R1,tab:R1-R2} we extend all \LE{} operators %
defined for graphs to graphons, %
resulting in the continuous analog of $2$-IGN, denoted as 2-\cIGN{} or $\Phi_c$ in the remaining text. Similar operation can be done in general for $k$-IGN as well, where the basis elements for $k$-IGNs will be described in \Cref{subsec:stability-of-k-ign}.
\begin{definition}[$2$-cIGN]\label{def:cign}
By extending all \LE{} layers for $2$-IGN to the graphon case as shown in \Cref{tab:R2-R2,tab:R2-R1,tab:R1-R2}, we can definite the corresponding 2-\cIGN{} %
via \cref{eqn:ign}.
\end{definition}

\section{Stability of Linear Layers in IGN}
\label{sec:stability}
In this section, we first show a stability result for a single linear layer of IGN. That is, given two graphon $W_1, W_2$, we show that if $\|W_1 - W_2\|_{\tn{pn}}$ is small, then the distance between the objects after applying a single \LE{} layer remain close. Here $\| \cdot \|_{\tn{pn}}$ is a \textnewnorm{} that will be introduced in a moment.
Similar statements also hold for the discrete case when the input is a graph. We first describe how to prove stability for $2$-(c)IGN as a warm-up. We then prove it for $k$-(c)IGN, which is significantly more interesting and requires a new interpretation of the elements in a specific basis of the space of \LE{} operators in \citet{maron2018invariant}.

A the general \LE{} layer $T: \mb{R}^{n^\ell} \to \mb{R}^{n^m}$%
can be written as $T = \sum_{\gamma} c_{\gamma}T_{\gamma}$, where $T_{\gamma} \in \mc{B}:=\set{T_{\gamma}|\gamma \in \parspace_{\ell +m}}$ is the basis element of the space of $\LE_{\ell, m}$ and $c_{\gamma}$ are denoted as filter coefficients. Hence proving the stability of $T$ can be reduced to showing the stability for each element in $\mc{B}$, which we focus from now on.

\subsection{Stability of Linear Layers of $2$-IGN}

A natural way to show stability is by showing that the spectral norm of each \LE{} operator in a basis is bounded. However, even for 2-IGN, as we see some \LE{} operator %
requires replicating ``diagonal elements to all rows" (e.g., operator 14-15 in Table \ref{tab:R2-R2}), and has unbounded spectral norm. To address this challenge, we need a more refined analysis. In particular, below we will introduce a ``new" norm that treats the diagonal differently from non-diagonal elements for the $2$-tensor case. We term it \emph{\textnewnorm{}} as later when handling high order $k$-IGN, we will see that this norm arises naturally w.r.t. the partition of index set of tensors.

\begin{restatable}[\textNewnorm{}]{definition}{partitionnormdef}
\label{def:new-norm}
The \emph{\textnewnorm{}} of 2-tensor $A\in \mb{R}^{n^2}$ is defined as $\|A\|_{\tn{pn}}:=(\frac{\|\dgdual{A}\|_2}{\sqrt{n}}, \frac{\|A\|_2}{n})$.
The continuous analog of the \textnewnorm{} for graphon $W \in \mc{W}$ is defined as $\|W\|_{\tn{pn}} = \left(\sqrt{\int W^2(u,u)du}, \sqrt{\iint W^2(u, v) dudv} \right)$.

We refer to the first term as the \emph{normalized diagonal norm} and the second term as the \emph{normalized matrix norm}. Furthermore, we define operations like addition/comparison on the \textnewnorm{} simply as component-wise operations. For example, $\newnorm{A} \le \newnorm{B}$ if each of the two terms of $A$ is at most the corresponding term of $B$.
\end{restatable}

As each term in \textnewnorm{} is a norm on different parts of the input, the \textnewnorm{} is also a norm.
By summing over the finite feature dimension both for finite and infinite cases, the definition of the \textnewnorm{} can be extended to multi-channel tensors $\mb{R}^{n^2\times d}$ and its continuous version $ \mb{R}^{[0,1]^2 \times d}$. See \Cref{subsec:extending-new-norm} for details.  %

The following result shows that each basis operation for 2-IGN, shown in Tables \ref{tab:R2-R2}, \ref{tab:R1-R2} and \ref{tab:R2-R1}, is stable w.r.t. the \textnewnorm{}. Hence a \LE{} layer consisting of a finite combination of these operations will remain stable. The proof is via a case-by-case analysis and can be found in \Cref{app:2-IGN-linear-stability}.

\begin{restatable}[]{proposition}{RtwoRtwo}
\label{prop:R2-R2}
For all \LE{} operators $T_i:\mb{R}^{n^2} \rightarrow \mb{R}^{n^2}$ of discrete $2$-IGN listed in \Cref{tab:R2-R2}, $\|T_i(A)\|_{\tn{pn}} \leqslant  \newnorm{A}$  for any
$A \in \mb{R}^{n^2}$. Similar statements hold for $T_i: \mb{R}^{n} \rightarrow \mb{R}^{n^2}$ and $T_i: \mb{R}^{n^2} \rightarrow \mb{R}^{n}$ in \Cref{tab:R1-R2,tab:R2-R1} in \Cref{app:tables}. In the case of continuous 2-cIGN, the stability also holds. %
\end{restatable}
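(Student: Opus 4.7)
The plan is a direct case analysis over the $\bell{4}=15$ basis elements $T_i$ listed in \Cref{tab:R2-R2} (and analogously over the basis elements for the R1-R2 and R2-R1 variants in the appendix tables). Since any $\LE$ layer is a linear combination of basis operators and the \textnewnorm{} is a norm, the per-basis bound asserted in the proposition immediately yields stability of a general $\LE$ layer up to a factor depending on the filter coefficients. For each $T_i$, I would write both components of $\|T_i(A)\|_{\tn{pn}}$, namely $\tfrac{\|\dgdual{T_i(A)}\|_2}{\sqrt{n}}$ and $\tfrac{\|T_i(A)\|_2}{n}$, in closed form and compare to the components of $\|A\|_{\tn{pn}}$.

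The fifteen operators fall into three groups. The identity/transpose and $\dg{\dgdual{\cdot}}$ entries (rows 1--3) reduce to the elementary inequality $\|\dgdual{A}\|_2\le\|A\|_2$ together with the fact that transpose preserves both components of $\|\cdot\|_{\tn{pn}}$. The averaging entries (rows 4--13) are handled by Cauchy--Schwarz, which gives $\bigl(\tfrac{1}{n}\sum_k A_{ik}\bigr)^{2} \le \tfrac{1}{n}\sum_k A_{ik}^{2}$ for a single row or column average, with two applications for the global scalar average in rows 10--11. The subsequent replication of an averaged scalar onto a row, column, diagonal, or whole matrix is tracked by the factor $n$ (or $\sqrt{n}$) in the denominator of the appropriate component of $\|\cdot\|_{\tn{pn}}$, and the diagonal-restricted averaging entries (rows 12--13) additionally use $\|\dgdual{A}\|_2\le\|A\|_2$ before averaging.

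The main obstacle is the pure \emph{diagonal-replication} operators $T(A)=\dgdual{A}\,\one^T$ and $T(A)=\one\,\dgdual{A}^{T}$ (rows 14--15). Their Frobenius norm blows up as $\|T(A)\|_{2}=\sqrt{n}\,\|\dgdual{A}\|_2$ and so cannot be bounded by $\|A\|_2$ in a dimension-free way. This is precisely what motivates the two-component \textnewnorm{}: normalizing by $n$ yields $\tfrac{\|T(A)\|_2}{n}=\tfrac{\|\dgdual{A}\|_2}{\sqrt{n}}$, which is controlled by the \emph{diagonal} component of $\|A\|_{\tn{pn}}$, while $\dgdual{T(A)}=\dgdual{A}$ handles the first component trivially. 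Neither the Frobenius nor the diagonal norm of $A$ in isolation would suffice to capture these operators; it is the cross-coupling between the two components that makes $\|\cdot\|_{\tn{pn}}$ the right object. The continuous case repeats the argument with integrals in place of sums and the integral form of Cauchy--Schwarz; graphon replication operators such as $T(W)(u,v)=W(u,u)$ behave identically since $\|T(W)\|_{L_2}=\sqrt{\int W(u,u)^2\,du}$ is again exactly the diagonal component of $\|W\|_{\tn{pn}}$. The R1-R2/R2-R1 cases are handled by identical arguments over their smaller bases.
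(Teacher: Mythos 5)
Your proof takes essentially the same route as the paper's: a direct case-by-case pass over the basis operators, using Jensen/Cauchy--Schwarz for the averaging entries and tracking how the normalized diagonal and matrix components interchange. Your explicit observation that the two components of $\|\cdot\|_{\text{pn}}$ can cross-couple (so that for the diagonal-replication operators the matrix component of the output is bounded by the \emph{diagonal} component of the input rather than component-wise) is exactly the key point that justifies the two-component norm, and it actually states more sharply what the paper's terse ``mnorm is the same as dnorm'' remark is getting at.
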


\begin{remark}
Note that this also implies that given $W_1, W_2 \in \mc{W}$, we have that $\newnorm{T_i(W_1) - T_i(W_2)} \le \newnorm{W_1 - W_2}$. Similarly, given $A_1, A_2 \in \mb{R}^{n^2 \times 1}$ $= \mb{R}^{n^2}$, we have $\newnorm{T_i(A_1) - T_i(A_2)} \le \newnorm{A_1 - A_2}$.
\end{remark}

\subsection{Stability of Linear Layers of \kIGN{}}
\label{subsec:stability-of-k-ign}
We now consider the more general case of \kIGN{}. In principle, the proof of \IGN{} can still be extended to \kIGN{}, but going through all $\bell{k}$ number of elements of \LE{} basis of \kIGN{} one by one
 can be quite cumbersome. %
In the next two subsections, we provide a new interpretation of elements of the basis of space of $\LE_{\ell, m}$ in a unified framework so that we can avoid a case-by-case analysis. Such an interpretation, detailed in \Cref{subsec:interpretation}, is potentially of independent interest.
First, we need some notations. %

\begin{definition}[Equivalence pattern]\label{def:equivpattern}
Given a $k$-tensor $X$, denote the space of its indices $\{(i_1, ..., i_k) \mid i_1\in [n], ..., i_k\in [n]\}$ by $\mc{I}_{k}$. Given $X$, $\gamma =\{\gamma_1, ..., \gamma_d\}\in \parspace_k$ and an element $\bs{a}=(a_1, ..., a_k) \in \mc{I}_{k}$, we say $\bs{a} \in \gamma$ if $i, j \in \gamma_{l}$ for some $l\in[d]$ always implies $a_i = a_j$. %
  Alternatively, we also say $\bs{a}$ satisfies the equivalence pattern of $\gamma$ if $\bs{a} \in \gamma$.
\end{definition}
As an example, suppose $\gamma=\{\{1, 2\}, \{3\}\}$. Then $(x, x, y) \in \gamma$ %
while $(x, y, z) \notin \gamma$.
Equivalence patterns can induce ``slices''/sub-tensors of a tensor.
 \begin{definition}[Slice/sub-tensor of $X \in \dtensor{k}{1}$ for $\gamma \in \parspace_k$]
 \label{def:slice}
 Let $X \in \dtensor{k}{1}$ be a $k$-tensor indexed by $(\set{1}, ..., \set{k})$.
 Consider a partition $\gamma = \{\gamma_1, ..., \gamma_{k'}\} \in \parspace_k$ of cardinality $k' \leqslant k$.
 The \emph{slice (sub-tensor) of $X$ induced by $\gamma$} is a $k'$-tensor $X_{\gamma}$, indexed by $(\gamma_1, ..., \gamma_{k'})$, and defined to be
  $X_{\gamma}(j_1, ..., j_{k'}) := X(\iota_{\gamma}(j_1, ..., j_{k'}))$ where $j_{\cdot} \in [n]$ and $\iota_{\gamma}(j_1, ..., j_{k'})\in \gamma$.
  $\iota_{\gamma}: [n]^{k'} \rightarrow [n]^k$ is defined to be $\iota_{\gamma}(j_1, ..., j_{k'}) := (i_1, ..., i_k) $ such that $\{a, b\}\subseteq \gamma_c$   implies $i_a = i_b := j_c$. Here $a, b\in [k], c\in [k']$. %
As an example, we show five slices of a $3$-tensor in \Cref{fig:slices}.
 \end{definition}
Consider the \LE{} operators from $\mb{R}^{n^{\ell}}$ to $\mb{R}^{n^m}$. Each such map $T_{\gamma}$ can be represented by a matrix of size $n^{\ell}\times n^m$ which can further considered as a $(\ell+m)$-tensor $\bs{B}_{\gamma}$. %
\citet{maron2018invariant} showed that a specific basis for such operators can be characterized as follows: Each basis element will correspond to one of the $\bell{\ell+m}$ partitions in $\parspace_{\ell+m}$. In particular, given a partition $\gamma\in \parspace_{\ell+m}$, we have a corresponding basis \LE{} operator $T_{\gamma}$ and its tensor representation $\mathbf{B}_{\gamma}$ defined as follows:
\begin{equation}
\text{for any}~\bs{a} \in \mc{I}_{\ell +m}, ~~
\mathbf{B}_{\gamma}(\boldsymbol{a})= \begin{cases}1 & \boldsymbol{a} \in \gamma \\ 0 & \text { otherwise }\end{cases}
\end{equation}\label{eqn:kIGNbasis}
The collection $\mc{B} = \{ T_\gamma \mid \gamma \in \parspace_{\ell+m}\}$ form a basis for all $\LE_{\ell, m}$ maps. In \Cref{subsec:interpretation}, we will provide an interpretation of each element of $\mc{B}$, making it easy to reason its effect on an input tensor using a unified framework.

Before the main theorem, we also need to extend the \textnewnorm{} in \Cref{def:new-norm} from 2-tensor to high-order tensor. %
Intuitively, for $X \in \mb{R}^{n^{k}}$, $\newnorm{X}$ has $\bell{k}$ components, where each component corresponds to the normalized norm of $X_{\gamma}$, the slice of $X$ induced by $\gamma \in \parspace_{k}$. See \Cref{fig:slices} for examples of slices of a 3-tensor.  The \textnewnorm{} of input and output of a $\LE_{\ell, m}$ will be of dimension $\bell{\ell }$ and $\bell{m}$ respectively. See \Cref{subsec:extending-new-norm} for details.

\begin{figure}[htp]
  \centering
  \includegraphics[width=1\linewidth]{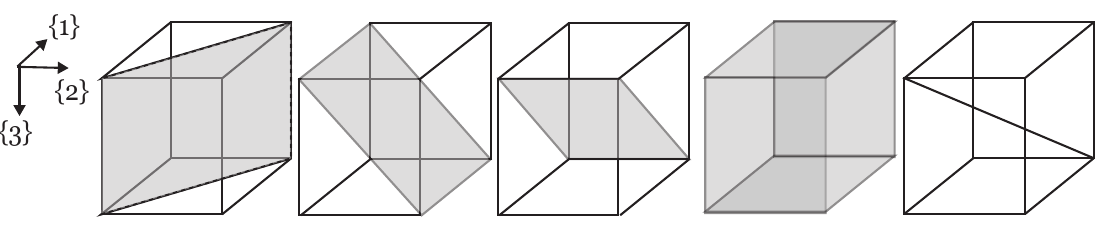}
\caption{Five possible ``slices'' of a 3-tensor, corresponding to $\bell{3}=5$ partitions of $[3]$. From left to right: a) $\{\{1, 2\}, \{3\}\}$ b) $\{\{1\}, \{2, 3\}\}$ c) $\{\{1, 3\}, \{2\}\}$ d) $\{\{1\}, \{2\}, \{3\}\}$ e) $\{\{1, 2, 3\}\}$.}
\label{fig:slices}
\end{figure}

 The following theorem characterizes the effect of each operator in $\mc{B}$ in terms of \textnewnorm{} of input and output, generalizing \Cref{prop:R2-R2} from matrix to high order tensor.

\begin{restatable}[Stability of \LE{} layers for $k$-IGN]{theorem}{kignlinearstability}
\label{thm:linear-layer-stability}
Let $T_{\gamma}: \mb{R}^{[0,1]^{\ell}} \rightarrow \mb{R}^{[0,1]^m}$ be a basis element of the space of $\LE_{\ell, m}$ maps where $\gamma \in \parspace_{\ell + m}$. %
If $\newnorm{X} \leqslant  \epsilon \one_{\bell{\ell}}$, then the \textnewnorm{} of $Y:=T_{\gamma}(X)$ satisfies  $\newnorm{Y}\leqslant  \epsilon \one_{\bell{m}}$ for all $\gamma \in \parspace_{\ell+m}$.
\end{restatable}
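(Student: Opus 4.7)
The plan is to use the partition-based interpretation of the basis operators $T_\gamma$ developed in Section~4.2 to reduce the stability of $T_\gamma$ to a single application of Cauchy--Schwarz, coordinate-by-coordinate of $\newnorm{\cdot}$. Fix any output partition $\beta \in \parspace_m$; the goal is to bound the $\beta$-component of $\newnorm{Y}$ by $\epsilon$.

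First, I identify the input partition that controls $Y_\beta$. Let $\beta^{\uparrow}$ denote $\beta$ translated to act on the output index set $\{\ell+1,\dots,\ell+m\}$ with input indices kept as singletons, and set $\delta := \gamma \vee \beta^{\uparrow} \in \parspace_{\ell+m}$ (join in the partition lattice). Classify the blocks of $\delta$ as pure-input (contained in $[\ell]$), mixed, or pure-output, with counts $p$, $q$, $r$, and let $\alpha \in \parspace_\ell$ be the restriction of $\delta$ to $[\ell]$. By construction $|\alpha| = p+q$ and $|\beta| = q+r$.

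Second, the partition-based interpretation of $T_\gamma$ gives an explicit formula for $Y_\beta$ in terms of $X_\alpha$: taking the $\beta$-slice on the output side tightens the equivalence pattern from $\gamma$ to $\delta$, so each mixed block of $\delta$ contributes one output coordinate tied to an input coordinate of $X_\alpha$, each pure-output block contributes a free output coordinate that does not appear in $X$, and each pure-input block is averaged away. Writing $\vec{z}_{\text{mix}} \in [0,1]^q$, $\vec{z}_{\text{out}} \in [0,1]^r$, and $\vec{w} \in [0,1]^p$,
\begin{equation*}
Y_\beta(\vec{z}_{\text{mix}}, \vec{z}_{\text{out}}) \;=\; \int_{[0,1]^p} X_\alpha(\vec{w}, \vec{z}_{\text{mix}}) \, d\vec{w}.
\end{equation*}
Applying Cauchy--Schwarz (Jensen) to the averaging gives $Y_\beta(\vec{z})^2 \le \int X_\alpha(\vec{w},\vec{z}_{\text{mix}})^2 \, d\vec{w}$, and integrating over $\vec{z}_{\text{mix}}$ and $\vec{z}_{\text{out}}$ (the latter contributing unit measure since the integrand is independent of it) yields $\|Y_\beta\|_{L_2}^2 \le \|X_\alpha\|_{L_2}^2 \le \epsilon^2$. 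The discrete case is parallel: the factor $n^r$ from summing over $\vec{z}_{\text{out}}$ combines with the averaging factor $1/n^p$ so that $\|Y_\beta\|_2^2 / n^{|\beta|} \le \|X_\alpha\|_2^2 / n^{|\alpha|} \le \epsilon^2$, matching the $n$-normalizations in \Cref{def:new-norm}. Since $\beta$ was arbitrary, $\newnorm{Y} \le \epsilon\,\one_{\bell{m}}$.

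The hard part is Step two: formalizing the claim that the ``relevant'' input partition for the $\beta$-slice is exactly $\alpha = (\gamma \vee \beta^{\uparrow})|_{[\ell]}$, and that $Y_\beta$ factors through $X_\alpha$ via pure-input averaging. This is combinatorial bookkeeping on how $\gamma$ dictates the action of $T_\gamma$---which input indices get integrated, which get tied to output indices, which output indices are replicated. Once the interpretation of Section~4.2 makes this transparent, the analytic core is the single Cauchy--Schwarz step, and the counting identities $|\alpha|=p+q$ and $|\beta|=q+r$ ensure the normalization factors align with \textNewnorm{} without further work.
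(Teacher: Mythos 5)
Your proof takes a genuinely different route from the paper's. The paper decomposes $T_\gamma$ compositionally into the four-step pipeline (select slice $\to$ average $\to$ align $\to$ replicate $\to$ zero-pad into $Y$) and controls the \textnewnorm{} across each step via the four items of \Cref{lemma:property-of-partition-norm}; in particular, item~4 of that lemma handles the zero-padding step. You instead fix an output partition $\beta$, identify the controlling input slice $X_\alpha$ via the join $\delta = \gamma \vee \beta^{\uparrow}$, and apply Jensen once. This is an attractive, more direct argument, and the choice of $\alpha = \delta|_{[\ell]}$ with $|\alpha| = p + q$ is indeed correct. But there is a genuine gap.

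The identity $|\beta| = q + r$ is false in general. What is true is that $\delta|_{\ell + [m]}$ coarsens $\beta$ (since $\delta$ coarsens $\beta^{\uparrow}$), hence $|\beta| \geq q + r$, and the inequality can be strict whenever $\gamma^{\textnormal{out}} := \gamma|_{\ell+[m]}$ merges output indices that $\beta$ keeps separate. A concrete instance: take $\ell = 1$, $m = 2$ and $\gamma = \{\{1\},\{2,3\}\}$ (``replicate mean to diagonal'', operator~4 in \Cref{tab:R1-R2}), and $\beta = \{\{1\},\{2\}\}$. Then $\beta^{\uparrow}$ consists of singletons, so $\delta = \gamma$, giving $p = 1$, $q = 0$, $r = 1$, hence $q + r = 1$ while $|\beta| = 2$. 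When $|\beta| > q + r$, your formula
\begin{equation*}
Y_\beta(\vec{z}_{\textnormal{mix}}, \vec{z}_{\textnormal{out}}) = \int_{[0,1]^p} X_\alpha(\vec{w}, \vec{z}_{\textnormal{mix}})\, d\vec{w}
\end{equation*}
is dimensionally inconsistent: the left side should be a $|\beta|$-tensor indexed over $[0,1]^{|\beta|}$, but the right side only depends on $q + r < |\beta|$ coordinates. What actually happens is that $Y$ is zero outside the $\gamma^{\textnormal{out}}$-slice (the paper's ``entries of $Y$ outside $\yrep$ are set to $0$''), so $Y_\beta$ carries an extra support constraint: it vanishes unless $\iota_\beta(\cdot)$ satisfies the equivalence pattern of $\gamma^{\textnormal{out}}$. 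Your formula holds only on that support; off the support, $Y_\beta = 0$.

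The gap is repairable and the slack even works in your favor. In the continuous case with $|\beta| > q + r$, the support is a lower-dimensional subset of $[0,1]^{|\beta|}$, so $Y_\beta = 0$ almost everywhere and $\|Y_\beta\|_{L_2} = 0$ trivially. In the discrete case, the support has $n^{q+r}$ points, and your Jensen step gives $\|Y_\beta\|_2^2 \leq n^{r-p}\|X_\alpha\|_2^2$, so
\begin{equation*}
\frac{\|Y_\beta\|_2^2}{n^{|\beta|}} \leq \frac{n^{r-p}\|X_\alpha\|_2^2}{n^{|\beta|}} = \frac{\|X_\alpha\|_2^2}{n^{p+q}} \cdot \frac{n^{q+r}}{n^{|\beta|}} \leq \epsilon^2 \cdot \frac{n^{q+r}}{n^{|\beta|}} \leq \epsilon^2,
\end{equation*}
using $|\beta| \geq q + r$. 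So replace ``$|\beta| = q+r$'' by ``$|\beta| \geq q+r$'', record the support constraint on $Y_\beta$, and the argument closes.
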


The proof relies on a new interpretation of elements of $\mc{B}$ in $k$-IGN. We give only an intuitive sketch using an example in the next subsection. See \Cref{subsec:linear-layer-stability-proof} for the proof.

\subsection{Interpretation of Basis Elements}
\label{subsec:interpretation}
For better understanding, we color the input axis %
 $\{1, ..., \ell\}$ as red and output axis $\{\ell +1, ..., \ell +m \}$ as blue. Each $T_{\gamma}$ corresponds to one partition $\gamma$ of $[\ell + m]$. %

For any partition $\gamma\in \parspace_{l+k}$, we can write this set as disjoint union $\gamma=S_1 \cup S_2 \cup S_3$ where $S_1$ is a set of set(s) of input axis, and $S_3$ is a set of set(s) of output axis. $S_2$ is a set of set(s) where each set contains both input and output axis. With slight abuse of notation, we omit the subscript $\gamma$ for $S_1, S_3, S_3$ when its choice is fixed or clear, and denote $\{\ell +1, ..., \ell +m \}$ as $\ell + [m]$.
As an example, one basis element of the space of $\LE_{3, 3}$ maps is $\gamma =\{ \{1,2\}, \{3,6\}, \{4\}, \{5\}\}$
\begin{equation}
\label{eq:partition-example}
\underbrace{S_1=\{\{\red{1},\red{2}\}\}}_{\text{Only has \red{input} axis}} \cup \underbrace{S_2=\{\{\red{3},\blue{6}\}\}}_{\substack{\text{has both} \\ \text{\red{input} and \blue{output} axis}}} \cup \underbrace{S_3=\{\{\blue{4}\}, \{\blue{5}\}\}}_{\text{only has \blue{output} axis}}
\end{equation}
where $1, 2, 3$ specifies the axis of input tensor and $4,5,6$ specifies the axis of the output tensor.
\begin{figure}[htp]
  \centering
  \vspace{-5pt}
  \includegraphics[width=.9\linewidth]{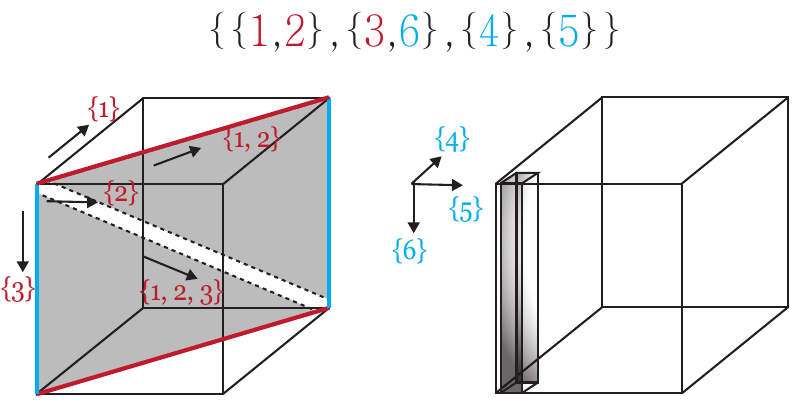}
\caption{An illustration of the one basis element of the space of $\LE_{3, 3}$. The partition is $\{\{1, 2\},\{3, 6\}, \{4\},\{5\}\}$. It selects area spanned by axis $\{1,2\}$ and $\{3\}$ (grey shaded), average over the (red) axis $\{1, 2\}$, and then align the resulting 1D tensor with axis $\{6\}$ in the output tensor, and finally replicate the slices along axis $\{4\}$ and $\{5\}$ to fill in the whole cube on the right. }
\label{fig:partition}
\end{figure}
Recall that there is a one-to-one correspondence between the partitions over $[\ell+m]$ and the base elements in $\mathcal{B}$ as in Eqn (\ref{eqn:kIGNbasis}). The basis element $T_\gamma$ corresponding to $\gamma = S_1 \cup S_2 \cup S_3$ operates on an input tensor $X\in \mb{R}^{n^\ell}$ and produce an output tensor $Y \in \mb{R}^{n^m}$ as follows:  \vspace{-5pt}
\begin{quotation}
Given input $X$, (step 1) obtain its slice $X_{\gamma}$ on $\Pi_1$ (selection axis),
(step 2) average $X_{\gamma}$ over $\Pi_2$ (reduction axis), resulting in $\xred$.
(step 3) Align $\xred$ on $\Pi_3$ (alignment axis) with $Y_{\gamma}$  and
(step 4) replicate $Y_{\gamma}$ along $\Pi_4$ (replication axis), resulting $\yrep$, a slice of $Y$. Entries of $Y$ outside $\yrep$ will be set to be 0.
\end{quotation}
In general, $\Pi_i$ can be read off from $S_1$-$S_3$. See \Cref{subsec:linear-layer-stability-proof} for details.
As a running example, \Cref{fig:partition} illustrates the basis element corresponding to $\gamma = S_1\cup S_2 \cup S_3$ where $S_1 =\{\{\red{1},\red{2}\}\} \cup S_2=\{\{\red{3},\blue{6}\}\} \cup S_3=\{\{\blue{4}\}, \{\blue{5}\}\}$. In the first step, given 3-tensor $X$, indexed by $\set{\set{1}, \set{2}, \set{3}}$ we select slices of interest $X_{\gamma}$ on $\Pi_1 = \set{\set{1,2}, \set{3}}$, colored in grey in the left cube of \Cref{fig:partition}. In the second step, we average $X_{\gamma}$ over axis $\Pi_2 = \set{\set{1,2}}$ to reduce 2-tensor $X_{\gamma}$, indexed by $\set{\set{1,2}, \set{3}}$ to a 1-tensor $\xred$, indexed by $\set{\set{3}}$. In the third step, the $\xred$ is aligned with $\Pi_3 = \set{\set{6}}$,  resulting in the grey cuboid $Y_{\gamma}$ indexed by $\set{\set{6}}$, shown in the right cube in \Cref{fig:partition}. Here the only difference between $\xred$ and $Y_{\gamma}$ is the index name of two tensors. In the fourth step, we replicate the grey cuboid $Y_{\gamma}$ over axis $\Pi_4 = \set{\set{4}, \set{5}}$ to fill in the cube, resulting in $\yrep$, indexed by $\set{\set{3}, \set{4}, \set{5}}$. %
Note in general $\yrep$ is a slice of $Y$ and does have to be the same as $Y$. %

These steps are defined formally in the Appendix.
For each of the four steps, we can control the \textnewnorm{} of output for each step (shown in \Cref{lemma:property-of-partition-norm} in Appendix), and therefore control the \textnewnorm{} of the final output for every basis element. See \Cref{subsec:linear-layer-stability-proof} for full proofs.

\section{Convergence of IGN in the Edge Weight Continuous Model}
\label{sec:convergence-ruiz}
\citet{ruiz2020graphon} consider  the convergence of $\|\Phi_c(W)-\Phi_c(W_n)\|_{L_2}$ in the \textit{graphon space}, where $W$ is the original graphon and $W_n$ is a piecewise constant graphon induced from graphs of size $n$ sampled from $W$ (to be defined soon). We call this model as the \textit{\edgeweight{}}. The main result of \citet{ruiz2021graph} is the convergence of \textit{continuous} \sGNN{} in the \textit{deterministic} sampling case where graphs are sampled from $W$ deterministically.
Leveraging our earlier stability result of linear layers of continuous IGNs in \Cref{thm:linear-layer-stability}, we can prove an analogous convergence result of cIGNs in the \edgeweight{} for both the deterministic and random sampling cases.

\textbf{Setup of the \edgeweight{}.} %
Given a graphon $W \in \mc{W}$ and a signal
$X \in \mb{R}^{[0, 1]\times d}$, the input of cIGN will be
$[W, \text{Diag}(X)] \in \mb{R}^{[0,1]^2 \times (1+d)}$. In the random sampling setting, we sample a graph of size $n$ from $W$ by setting the following edge weight matrix and discrete signal:
\begin{align}\label{eqn:det_gcn-random} \begin{split}
&[\widetilde{A_n}]_{ij} := W(u_i,u_j) \quad \mbox{and} \quad
[\widetilde{x_n}]_i := X(u_i)
\end{split}\end{align}
where $u_i$ is the $i$-th smallest point from $n$ i.i.d points sampled from uniform distribution on $[0,1]$.
We further lift the discrete graph $(\widetilde{A_n}, \widetilde{x_n})$ to a piecewise-constant graphon $\inducedW$ with signal $\inducedX$. Specifically, partition $[0,1]$ to be $I_1 \cup \ldots \cup I_n$ with $I_i = (u_i, u_{i+1}]$. We then define
\begin{align}\begin{split} \label{eqn:induced-cgcn-random}
&\inducedW(u,v) := {[\widetilde{A_n}]_{ij}} \times \mathrm{I}(u \in I_i)\mathrm{I}(v \in I_j) \quad \mbox{and} \\{}
&\inducedX(u) := [\widetilde{x_n}]_i \times \mathrm{I}(u \in I_i)
\end{split}\end{align}
 where $\mathrm{I}$ is the indicator function.
 Replacing the random sampling with fixed grid, i.e., let $u_i = \frac{i-1}{n}$,  we can get the deterministic \edgeweight{}, where $W_n$ and $X_n$ can be defined similarly as the lifting of a discrete sampled graph to a piecewise constant graphon. Note that $\inducedW$ is a piecewise constant graphon where each block is not of the same size, while all blocks $W_n$ are of size $\frac{1}{n} \times \frac{1}{n}$.
 We use $\widetilde{\cdot}$ to emphasize that $\widetilde{W_n}$/$\widetilde{X_n}$  are random variables, in contrast to the deterministic $W_n$/$X_n$.

We also need a few assumptions on the input and IGN.
\begin{assumption} \label{as:graphon-lip}
    The graphon $W$ is $A_1$-Lipschitz, i.e.
     $|W(u_2,v_2)-W(u_1,v_1)| \leqslant  A_1(|u_2-u_1|+|v_2-v_1|)$.
    \end{assumption}

    \begin{assumption} \label{as:filter-bound}
      The filter coefficients $c_{\gamma}$ are upper bounded by $\filterbound{}$. %
    \end{assumption}

    \begin{assumption} \label{as:signal-lip}
     The graphon signal $X$ is $A_3$-Lipschitz.
    \end{assumption}

    \begin{assumption} \label{as:activation-lip}
     The activation functions in IGNs are normalized Lipschitz, i.e.
     $|\rho(x)-\rho(y)| \leqslant  |x-y|$, and $\rho(0)=0$.
    \end{assumption}

Such four assumptions are quite natural and also adopted in \citet{ruiz2020graphon}. With AS 1-4, we have the following key proposition.
The proof leverages the stability of linear layers for $k$-IGN from \Cref{thm:linear-layer-stability}; see \Cref{app:proofs-EW} for details.

\begin{restatable}[Stability of $\Phi_c$]{proposition}{phistable}
\label{prop:Phi-stable}
If cIGN $\Phi_c: \mb{R}^{[0,1]^2 \times d_{\tn{in}}}   \rightarrow \mb{R}^{d_{\tn{out}}}$ satisfy AS\ref{as:filter-bound}, AS\ref{as:activation-lip} and $\newnorm{W_1-W_2}\leqslant  \epsilon \one_2$, then
 $\newnorm{\Phi_c(W_1) - \Phi_c(W_2)} = \| \Phi_c(W_1) - \Phi_c(W_2)\|_{L_2} \leqslant  C(A_2)\epsilon$ . The same statement still holds if we change the underlying norm of \textNewnorm{} from $L_2$ to $L_{\infty}$.
 \end{restatable}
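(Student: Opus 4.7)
The plan is to prove the stability bound by induction on the depth of the cIGN, tracking how the \textnewnorm{} of the difference propagates through each layer. Write $\Phi_c = h \circ L^{(T)} \circ \sigma \circ \cdots \circ \sigma \circ L^{(1)}$ and let $Z_i^{(t)}$ denote the intermediate representation produced at layer $t$ when the original input is $W_i$ (for $i \in \{1,2\}$). The inductive claim is that $\newnorm{Z_1^{(t)} - Z_2^{(t)}} \leqslant \epsilon_t \one_{\bell{k_t}}$ component-wise, where $\epsilon_0 = \epsilon$ (given) and $\epsilon_t$ is obtained from $\epsilon_{t-1}$ by a controlled multiplicative factor.

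For the linear layer step I would use linearity: $L^{(t)}(Z_1^{(t-1)}) - L^{(t)}(Z_2^{(t-1)}) = L^{(t)}(Z_1^{(t-1)} - Z_2^{(t-1)})$. Expanding $L^{(t)}$ in the basis $\{T_\gamma : \gamma \in \parspace_{k_{t-1}+k_t}\}$ of \Cref{subsec:stability-of-k-ign} and applying \Cref{thm:linear-layer-stability} to each basis element, together with AS\ref{as:filter-bound} on the coefficients and the triangle inequality for the \textnewnorm{} (which is a norm, component-wise), produces a bound of the form $\newnorm{L^{(t)}(Z_1^{(t-1)} - Z_2^{(t-1)})} \leqslant A_2 \cdot \bell{k_{t-1}+k_t} \cdot d_{t-1} \cdot \epsilon_{t-1} \one_{\bell{k_t}}$, where the channel factor $d_{t-1}$ reflects that the multi-channel \LE{} layer is a sum of single-channel basis operators over the finite input channels (see the multi-channel extension in \Cref{subsec:IGN-details}). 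For the nonlinearity, since $\sigma$ is $1$-Lipschitz pointwise with $\sigma(0)=0$ by AS\ref{as:activation-lip}, applying $\sigma$ entrywise is non-expansive on each slice $X_\gamma$ in the $L_2$ sense, so $\newnorm{\sigma(Z_1) - \sigma(Z_2)} \leqslant \newnorm{Z_1 - Z_2}$ component-wise.

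For the final invariant layer $h$, I treat it as a basis-expansion $\LE{}$ map into a $0$-tensor with $d_{\tn{out}}$ channels; \Cref{thm:linear-layer-stability} again gives a factor depending only on $A_2$, $\bell{k_T}$, and $d_T$. Since the output lives in $\mb{R}^{d_{\tn{out}}}$, its \textnewnorm{} has a single component equal to the Euclidean norm, which is precisely $\Ltwo{\Phi_c(W_1) - \Phi_c(W_2)}$ as stated. Chaining the layer-wise bounds gives $\Ltwo{\Phi_c(W_1) - \Phi_c(W_2)} \leqslant C(A_2)\,\epsilon$ with
\begin{equation*}
C(A_2) \;=\; \prod_{t=1}^{T+1} \bigl(A_2 \cdot \bell{k_{t-1}+k_t} \cdot d_{t-1}\bigr),
\end{equation*}
treating $h$ as the $(T{+}1)$-th layer. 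For the $L_\infty$ variant of the \textNewnorm{}, the identical chain of inequalities applies because \Cref{thm:linear-layer-stability} is proved at the level of basis elements (selection/reduction/alignment/replication) that are non-expansive in $L_\infty$ as well, and $\sigma$ is pointwise $1$-Lipschitz in either norm.

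The main obstacle is the bookkeeping for the \textnewnorm{}, which is a vector indexed by partitions of varying size as the tensor order $k_t$ changes across layers: I need to carefully verify that the partition-indexed inequalities are preserved under (i) multi-channel linear combinations of basis operators and (ii) pointwise nonlinearities acting on every slice simultaneously. In particular, the step bounding $\newnorm{\sum_\gamma c_\gamma T_\gamma(X)}$ by $\sum_\gamma |c_\gamma|\,\newnorm{T_\gamma(X)}$ must be justified component-wise using the triangle inequality of each single-slice $L_2$ (or $L_\infty$) norm, and the channel-wise extension of \Cref{thm:linear-layer-stability} must be invoked to absorb the $d_{t-1}$ factor cleanly into $C(A_2)$.
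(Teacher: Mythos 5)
Your proposal is correct and follows essentially the same route as the paper's own proof: expand each \LE{} layer in the partition basis, bound it via \Cref{thm:linear-layer-stability} together with the triangle inequality and AS\ref{as:filter-bound} (giving a $\bell{\cdot}\cdot A_2$-type factor), pass the bound through the nonlinearity using AS\ref{as:activation-lip}, and chain across layers. The paper simply fixes the $2$-IGN case (so the bell-number factor is $15$) and remarks that general $k$ changes only the constant, whereas you carry the bell numbers and channel dimensions explicitly; this is purely a bookkeeping difference.
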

 \begin{remark}
 Statements in \Cref{prop:Phi-stable} holds for discrete IGN $\Phi_d$ as well.
 \end{remark}

 From AS\ref{as:signal-lip} we can also bound the difference between the original signal $X$ and the induced signal ($X_n$ and $\inducedX$).

\begin{restatable}[]{lemma}{xdiffrandom}
\label{lem:x-diff}
Let $X \in \mb{R}^{[0,1]\times d}$ be an $A_3$-\lip{} graphon signal satisfying AS\ref{as:signal-lip}, and let $\inducedX$ and $X_n$ be the induced graphon signal as in \cref{eqn:det_gcn-random,eqn:induced-cgcn-random}. Then we have i) $\newnorm{X-X_n}$ converges to 0 and ii) $\newnorm{X-\inducedX}$ converges to 0 in probability.
\end{restatable}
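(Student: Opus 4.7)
The plan is to reduce both statements to controlling a single Lipschitz-based error on piecewise intervals, and then to handle the deterministic and random spacings separately. First, I would observe that for a $1$-tensor (graphon signal) the \textnewnorm{} collapses to a single component (since $\bell{1}=1$) and coincides with the $L_2$ norm (summed over feature channels) of the signal over $[0,1]$. So both claims reduce to showing that $\|X-X_n\|_{L_2}\to 0$ and $\|X-\inducedX\|_{L_2}\to 0$ in probability. On each interval $I_i$, the induced signal is constant and equal to $X(u_i)$, so AS\ref{as:signal-lip} gives the pointwise bound $\|X(u)-X(u_i)\|_2 \le A_3 |u-u_i|$ for $u \in I_i$, where $|\cdot|_2$ is the Euclidean norm on $\mathbb{R}^d$.

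\textbf{Deterministic case.} Here $I_i = ((i-1)/n, i/n]$ has length $1/n$ and $u_i$ is an endpoint, so $|u-u_i|\le 1/n$ on $I_i$. Integrating the squared Lipschitz bound gives
\[
\|X-X_n\|_{L_2}^2 \;=\; \sum_{i=1}^n \int_{I_i}\|X(u)-X(u_i)\|_2^2\,du \;\le\; \sum_{i=1}^n \frac{A_3^2}{n^2}\cdot \frac{1}{n} \;=\; \frac{A_3^2}{n^2},
\]
which vanishes as $n\to\infty$. This is a straightforward computation.

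\textbf{Random case.} The interval $I_i=(u_i,u_{i+1}]$ now has random length $\ell_i := u_{i+1}-u_i$, and $|u-u_i|\le \ell_i$ for $u \in I_i$. The same integration yields
\[
\|X-\inducedX\|_{L_2}^2 \;\le\; A_3^2 \sum_{i=1}^{n-1}\int_{I_i} (u-u_i)^2\,du \;=\; \frac{A_3^2}{3}\sum_{i=1}^{n-1} \ell_i^{\,3}.
\]
(Boundary intervals beyond the order statistics contribute a negligible term of the same order and can be handled analogously.) The remaining task is to show $\sum_i \ell_i^3 \to 0$ in probability. I would do this via one of two standard routes: either bound $\sum_i \ell_i^3 \le \max_i \ell_i^2 \cdot \sum_i \ell_i \le \max_i \ell_i^2$ and use the classical maximum-spacings result $\max_i \ell_i = O(\log n/n)$ with high probability for $n$ i.i.d.\ uniform samples on $[0,1]$; or compute $\mathbb{E}[\sum_i \ell_i^3]$ directly using the fact that the spacings $(\ell_1,\ldots,\ell_n)$ are exchangeable with $\mathbb{E}[\ell_i^3] = O(1/n^3)$, giving $\mathbb{E}[\sum_i \ell_i^3] = O(1/n^2)$ and convergence in probability by Markov's inequality.

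\textbf{Main obstacle.} The deterministic part is essentially immediate from Lipschitzness. The only mildly delicate piece is the random case, where one must argue that the cube-sum of uniform spacings vanishes. This is standard probability, and the Markov's-inequality route with $\mathbb{E}[\ell_i^3]=O(1/n^3)$ seems cleanest and avoids needing uniform control on $\max_i \ell_i$. The multi-channel extension is immediate since the pointwise Lipschitz bound already uses the Euclidean norm on $\mathbb{R}^d$, and summing over channels only affects constants absorbed into $A_3$.
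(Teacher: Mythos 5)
Your proposal is correct and follows essentially the same approach as the paper's proof: reduce the \textnewnorm{} for a $1$-tensor to its single $L_2$ component, bound the per-interval error via the Lipschitz assumption, and for the random case control $\sum_i \ell_i^3$ using the third moment of the uniform spacings ($\mathbb{E}[\ell_i^3]=\Theta(1/n^3)$, exactly the Beta$(1,n-1)$ distribution used in the paper's Lemma~\ref{lemma:lengh-distribution}) together with Markov's inequality. The only cosmetic difference is that the paper applies Jensen to $\mathbb{E}\bigl[\sqrt{\tfrac{A_3}{3}\sum_i D_i^3}\bigr]$ and then Markov on the un-squared norm, whereas you apply Markov to the squared quantity directly; these are interchangeable, and your alternative max-spacing route also works but is not needed.
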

We have the similar statements for $W$ as well. %
\begin{restatable}[]{lemma}{wdiffrandom}
\label{lem:w-diff}
If $W$ satisfies AS\ref{as:graphon-lip}, $\|W-W_n\|_{\tn{pn}} $ converges to 0. $\|W-\inducedW\|_{\tn{pn}} $ converges to 0 in probability.
\end{restatable}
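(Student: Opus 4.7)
By \Cref{def:new-norm}, the \textnewnorm{} decomposes into two components, so
\[
\newnorm{W - W_n} = \left( \sqrt{\textstyle\int (W(u,u)-W_n(u,u))^2\, du},\; \sqrt{\textstyle\iint (W(u,v)-W_n(u,v))^2\, dudv} \right),
\]
and it suffices to show that each component vanishes (respectively, vanishes in probability in the random case). Both components are controlled by a single pointwise bound on $|W - W_n|$, so the plan is to derive a uniform pointwise bound via the Lipschitz assumption AS\ref{as:graphon-lip} and then integrate over $[0,1]^2$ and over the diagonal.

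First I would handle the deterministic case. With $u_i = (i-1)/n$ and $I_i = (u_i, u_{i+1}]$, the graphon $W_n$ is piecewise constant with value $W(u_i, u_j)$ on the block $I_i \times I_j$. For any $(u,v) \in I_i \times I_j$, AS\ref{as:graphon-lip} gives $|W(u,v) - W_n(u,v)| = |W(u,v) - W(u_i,u_j)| \leq A_1(|u - u_i| + |v - u_j|) \leq 2A_1/n$. Integrating this pointwise bound over $[0,1]^2$ yields $\|W - W_n\|_{L_2} \leq 2A_1/n$. Restricting the same bound to the one-dimensional diagonal (noting that for $u \in I_i$ we have $W_n(u,u) = W(u_i,u_i)$) yields $\sqrt{\int (W(u,u) - W_n(u,u))^2\, du} \leq 2A_1/n$. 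Both components of $\newnorm{W - W_n}$ therefore go to $0$ at rate $O(1/n)$.

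For the random case, the structure of the argument is identical, but the mesh of the partition is random. Let $M_n := \max_{0 \leq i \leq n}(u_{i+1} - u_i)$ denote the maximum spacing of the uniform order statistics (with the boundary conventions $u_0 := 0$, $u_{n+1} := 1$). The same Lipschitz computation gives $|W(u,v) - \inducedW(u,v)| \leq A_1(|u - u_i| + |v - u_j|) \leq 2 A_1 M_n$ for $(u,v) \in I_i \times I_j$, and similarly $|W(u,u) - \inducedW(u,u)| \leq 2 A_1 M_n$ on each diagonal sub-interval. Hence both components of $\newnorm{W - \inducedW}$ are at most $2 A_1 M_n$.

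The final ingredient is that $M_n \to 0$ in probability, which is the classical maximum-spacing result for uniform order statistics (indeed $n M_n / \log n \to 1$ almost surely), so both components converge to $0$ in probability. Nothing here is conceptually hard once \textnewnorm{} is unpacked; the main technical input beyond the Lipschitz assumption is this standard fact about the maximum uniform spacing, and the only care needed is that the \emph{diagonal} component is a one-dimensional integral, which is handled by restricting the very same pointwise bound to the set $\{u=v\}$.
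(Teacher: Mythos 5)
Your proof is correct, but it takes a genuinely different route from the paper's. Where you work with pointwise ($L_\infty$-type) Lipschitz bounds and integrate directly to $L_2$, the paper bounds the $L_1$ norm block-by-block and then converts to $L_2$ via $\|W - W_n\|_{L_2} \leq \sqrt{\|W - W_n\|_{L_1}}$, which holds because $W - W_n$ is $[-1,1]$-valued. Your approach yields a tighter deterministic rate ($O(1/n)$ versus the paper's $\sqrt{A_1/n}$), though both suffice for the claim. The real divergence is in the random case: you control everything through a single uniform bound $|W - \inducedW| \leq 2A_1 M_n$ with $M_n$ the maximum spacing, and then invoke the classical max-spacing asymptotics $M_n \to 0$ in probability. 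The paper instead computes $\|W-\inducedW\|_{L_1(I_i\times I_j)} \leq \tfrac{A_1}{2}(D_iD_j^2 + D_jD_i^2)$, sums to $\sqrt{A_1\sum_i D_i^2}$ using $\sum_i D_i = 1$, and then applies Jensen plus Markov with the second moment $E(D_i^2) = \Theta(1/n^2)$ from its \Cref{lemma:lengh-distribution}. Your argument is more unified (one pointwise bound covers both components and both sampling regimes) and gives a slightly sharper rate $O(\log n/n)$, but requires importing the max-spacing result; the paper's second-moment route is a bit more pedestrian yet stays within the toolkit it already set up for \Cref{lem:x-diff}. Both are sound.
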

The following main theorem (for $k$-cIGN of any order $k$) of this section can be shown by combining \Cref{prop:Phi-stable} with \Cref{lem:x-diff,lem:w-diff}; see  \Cref{app:proofs-EW} for details.

\begin{restatable}[Convergence of cIGN in the edge weight continuous model]{theorem}{EWconvergence}
\label{thm:EW-convergence}
Under the fixed sampling condition, IGN converges to cIGN, i.e., $\| \Phi_c\left([W, \dg{X}]\right) - \Phi_c([W_n, \dg{X_n}])\|_{L_2}$ converges to 0.

An analogous statement hold for the random sampling setting, where $\| \Phi_c([W, \dg{X}]) - \Phi_c([\inducedW, \dg{\inducedX}])\|_{L_2}$ converges to 0 in probability.

\end{restatable}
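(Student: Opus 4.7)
The plan is to reduce the theorem to the three pieces already assembled: the stability statement in Proposition~\ref{prop:Phi-stable} and the input-convergence Lemmas~\ref{lem:x-diff} and~\ref{lem:w-diff}. The input of $\Phi_c$ is the multi-channel $2$-tensor $[W,\dg{X}] \in \mb{R}^{[0,1]^2 \times (1+d)}$, so the first step is to control
\[
\newnorm{\,[W,\dg{X}] - [W_n,\dg{X_n}]\,} \;\leqslant\; \newnorm{W-W_n} + \newnorm{\dg{X-X_n}},
\]
which follows from the triangle inequality in the multi-channel partition-norm (Section~\ref{subsec:extending-new-norm} of the appendix). The second summand is controlled purely by $\|X-X_n\|_{L_2}$: indeed, $\dg{X-X_n}$ is supported on the measure-zero diagonal of $[0,1]^2$, so the matrix component of its partition-norm vanishes, while the diagonal component equals $\sqrt{\int (X-X_n)^2(u)\,du}$.

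For the deterministic part, I would invoke Lemmas~\ref{lem:w-diff} and~\ref{lem:x-diff} to obtain a (non-random) sequence $\epsilon_n \to 0$ with
\[
\newnorm{\,[W,\dg{X}] - [W_n,\dg{X_n}]\,} \leqslant \epsilon_n \one_2,
\]
and then apply Proposition~\ref{prop:Phi-stable} to conclude
\[
\big\| \Phi_c([W,\dg{X}]) - \Phi_c([W_n,\dg{X_n}]) \big\|_{L_2} \leqslant C(A_2)\,\epsilon_n \longrightarrow 0.
\]
The AS\ref{as:filter-bound} and AS\ref{as:activation-lip} hypotheses needed by Proposition~\ref{prop:Phi-stable} are built into the definition of the cIGN we are considering, so no additional assumption is introduced.

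For the random-sampling part, the same decomposition applies with $\inducedW,\inducedX$ in place of $W_n,X_n$, and the random analogues of Lemmas~\ref{lem:w-diff} and~\ref{lem:x-diff} give convergence in probability of each summand. Fix $\eta,\delta>0$; a union-bound argument produces $n_0$ such that for all $n \geqslant n_0$, both $\newnorm{W-\inducedW}$ and $\newnorm{\dg{X-\inducedX}}$ are at most $\eta/(2C(A_2))$ componentwise with probability at least $1-\delta$. On this event Proposition~\ref{prop:Phi-stable} yields $\big\| \Phi_c([W,\dg{X}]) - \Phi_c([\inducedW,\dg{\inducedX}]) \big\|_{L_2} \leqslant \eta$, which is precisely convergence in probability.

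The main obstacle is really bookkeeping rather than mathematics: one must verify that Proposition~\ref{prop:Phi-stable}, phrased for a single-channel input in the statement, still applies to the $(1+d)$-channel input $[W,\dg{X}]$, and that the componentwise ordering of the partition-norm plays well with the triangle inequality when stacking $W-W_n$ and $\dg{X-X_n}$ across channels. Both points are handled by the multi-channel extension of $\newnorm{\cdot}$ from Section~\ref{subsec:extending-new-norm}, after which nothing beyond the linear-layer stability of Theorem~\ref{thm:linear-layer-stability} (already propagated through $\sigma$ and $h$ in Proposition~\ref{prop:Phi-stable}) is required.
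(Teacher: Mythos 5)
Your proposal follows exactly the same route as the paper: bound the partition-norm of the input difference $[W,\dg{X}]-[W_n,\dg{X_n}]$ via Lemmas~\ref{lem:w-diff} and~\ref{lem:x-diff}, then push through Proposition~\ref{prop:Phi-stable} (and likewise in probability for the random-grid case). Your additional bookkeeping — making explicit that the multi-channel partition-norm sums over channels and that $\dg{X-X_n}$ contributes only through its diagonal component — is implicitly used but not spelled out in the paper, so your write-up is, if anything, a slightly more careful version of the same argument.
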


\section{Convergence of IGN in the Edge Probability Discrete Model}
\label{sec:EP-convergence}

In this section, we will consider the convergence setup of  \citet{keriven2020convergence}, which we call the \emph{\edgeprob}.
The major difference from the \edgeweight{} of \citet{ruiz2020graphon} is that (1) we only access 0-1 adjacency matrix instead of full edge weights and (2) the convergence error is measured in the graph space (instead of graphon space).

This model is more natural. However, we will first show a  negative result that in general IGN does not converge in the \edgeprob{} in \Cref{subsec:negative-result}. This motivates us to consider a relaxed setting where we estimate the edge probability from data. With this extra assumption, we can prove the convergence of \smallIGN{}, a subset of IGN, in the edge probability discrete model in  \Cref{subsec:smallign-convergnce}. Although this is not entirely satisfactory, we show that nevertheless, the family of functions that can be represented by \smallIGN{} is still rich enough to for example approximate any \sGNN{} arbitrarily well.

\subsection{Setup: Edge Probability Continuous Model}
\label{subsec: EP}
We first state the setup and results of \citet{keriven2020convergence}. We keep the notation close to the original paper for consistency.
A random graph model $(P, W, f)$ is represented as a probability distribution $P$ uniform over latent space $\mc{U}=[0,1]$, a symmetric kernel $W: \mc{U} \times \mc{U} \rightarrow [0, 1]$ and a bounded function (graph signal) $f: \mc{U} \rightarrow \mathbb{R}^{d_{z}}$. A random graph $G_n$ with $n$ nodes is then generated from $(P,W,f)$ according to latent variables $U := \set{u_1, ..., u_n}$ as follows:
\begin{gather*}
\forall j<i \leqslant n: \quad \mbox{graph node}~~ u_{i} \stackrel{i i d}{\sim} P, \quad z_{i}=f\left(u_{i}\right), \\
\quad \mbox{graph edge}~~ a_{i j} \sim \operatorname{Ber}\left(\alpha_{n} W(u_{i}, u_{j})\right) \numberthis
\label{equ:graph-generation-EP}
\end{gather*}
where Ber is the Bernoulli distribution and $\alpha_n$ controls the sparsity of sampled graph.  %
Note that in our case, we assume that the sparsification factor $\alpha_n = 1$ (which is the classical graphon model).
We define a degree function by $d_{W, P}(\cdot):= \int W(\cdot, u) d P(u)$.
We assume the following
\begin{gather*}
\|W(\cdot, u)\|_{L_\infty} \leqslant c_{\max }, \quad d_{W, P}(u) \geqslant c_{\min }, \\
\quad W(\cdot, u) \text { is }\left(c_{\text {Lip. }}, n_{\mathcal{U}}\right) \text {-piecewise Lipschitz. }\numberthis
\end{gather*}
A function $f: \mathcal{U} \rightarrow \mathbb{R}$ is said to be $\left(c_{\text {Lip. }}, n_{\mathcal{U}}\right) \text {-piecewise Lipschitz}$ if there is 
a partition $\mathcal{U}_1, ..., \mathcal{U}_n$ of $\mathcal{U}$ such that, for all $u, u'$ in the same $\mathcal{U}_i$, 
we have $|f(u)-f(u')|< c_{Lip.} d(u, u')$.
We introduce two normalized sampling operator $S_U$ and $S_n$ that sample a continuous function to a discrete one over $n$ points. %
For a function %
 $W': \mc{U}^{\otimes k} \rightarrow \mb{R}^{d_{\tn{out}}}$, $S_UW'(i_1, ..., i_k) := (\frac{1}{\sqrt{n}})^k(W'(u_{(i_1)}), ..., W'(u_{(i_k)})$ where $u_{(i)}$ is the i-th smallest number over $n$ uniform random samples over $[0,1]$ and $i_{1}, ..., i_{k}\in[n]$.
Similarly, $S_nW'(i_1, ..., i_k) := (\frac{1}{\sqrt{n}})^k \left(W'(\frac{i_1}{n}), ..., W'(\frac{i_k}{n})\right)$ %
Note that the normalizing constant will depend on the dimension of the support of $W'$.
We have $\|S_UW'\|_2 \leqslant  \| W'\|_{L_{\infty}}$ and $\|S_nW'\|_2 \leqslant  \| W'\|_{L_{\infty}}$. %

To measure the convergence error, we consider root mean square error at the node level: for a signal $x \in \mb{R}^{n^2 \times d_{\tn{out}}}$ and latent variables $U$, we define $\MSE_{U}(f, x)  \coloneqq \| S_U f - \frac{x}{n}\|_2 = (n^{-2} \sum_{i=1}^{n}\sum_{j=1}^{n}\left\|f\left(u_i, u_j\right)-x(i, j)\right\|^{2})^{1 / 2}.$
Again, there is a dependency on the input dimension -- the normalization term $n^{-2}$ will need to be adjusted when the input order is different from 2.
\subsection{Negative Result}%
\label{subsec:negative-result}

\begin{restatable}[]{theorem}{convfailure}
\label{thm:conv-failure}
Given any graphon $W$ with $c_{\text{max}} < 1$ and an IGN architecture (fix hyper-parameters like number of layers), there exists a set of parameters $\theta$ such that convergence of $IGN_{\theta}$ to c$IGN_{\theta}$ is not possible, i.e., $\MSE_U(\Phi_c\left([W, \dg{X}]\right), \Phi_d([A_n, \dg{\widetilde{x_n}}]))$ does not converge to 0 as $n\to \infty$, where $A_n$ is 0-1 matrix generated according to \cref{equ:graph-generation-EP}, i.e., $A_n[i][j] = a_{i,j}$. %
\end{restatable}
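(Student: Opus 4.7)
My plan is to exhibit, for any graphon $W$ with $c_{\max}<1$ and any IGN architecture, parameters $\theta$ under which $\Phi_{\theta}$ is (essentially) the identity on the adjacency channel, and then to observe that the mismatch between the $\{0,1\}$-valued $A_n$ and the $[0,1]$-valued $W$ already yields a non-vanishing RMSE.

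Concretely, in each linear equivariant layer I turn on only the coefficient of the identity basis element (operator $T(A)=A$ in \Cref{tab:R2-R2}) on the adjacency input/output channel, setting every other filter coefficient and bias to zero and discarding the diagonal signal channel. If the activation is identity on $[0,1]$ (e.g.\ ReLU, since the adjacency/graphon channel is non-negative), $\Phi_{\theta}$ is literally the identity on the adjacency channel. For a general $\sigma$ satisfying AS\ref{as:activation-lip}, one can either rescale the adjacency by a small $\epsilon$ at the first layer and by $1/\epsilon$ at the last, exploiting $\sigma(\epsilon x)=\epsilon x+o(\epsilon)$ for $\sigma$ differentiable at $0$, or invoke universality of $k$-IGN to approximate the identity to arbitrary precision.

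Under this $\theta$ we have $\Phi_{c}([W,\dg{X}])=W$ and $\Phi_{d}([A_n,\dg{\inducedX}])=A_n$, so
\begin{align*}
\MSE_U\bigl(W,A_n\bigr)^{2}\;=\;\frac{1}{n^{2}}\sum_{i,j=1}^{n}\bigl(W(u_i,u_j)-A_n[i,j]\bigr)^{2}.
\end{align*}
Taking expectation over the latents and the Bernoulli draws, and using independence of the edges conditional on the latents, gives
\begin{align*}
\mathbb{E}\bigl[\MSE_U^{2}\bigr]\;=\;\mathbb{E}\bigl[(W(U,V)-B)^{2}\bigr]\;=\;\iint_{[0,1]^{2}}W(u,v)\bigl(1-W(u,v)\bigr)\,du\,dv,
\end{align*}
where $B\mid U,V \sim \tn{Ber}(W(U,V))$. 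A second-moment/Chebyshev argument on the double sum upgrades this to in-probability convergence of $\MSE_U^{2}$ to the same limit. Since $c_{\max}<1$ forces $1-W>0$ everywhere and $d_{W,P}\geqslant c_{\min}>0$ forces $W>0$ on a set of positive measure, the limiting integral is strictly positive; because moreover $\MSE_U^{2}\leqslant 1$, dominated convergence rules out $\MSE_U\to 0$ in probability.

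The main obstacle I anticipate is realising the identity-on-adjacency construction for every IGN architecture, since activations that are strictly nonlinear on $[0,1]$ (tanh, sigmoid, \ldots) distort the signal after a single layer and the exact trick above fails. I plan to handle this either via the rescaling/universality fallback just described, or, in the worst case, by replacing ``identity'' with a concrete $\Phi_{\theta}$ whose entrywise action $\phi:[0,1]\to\mathbb{R}$ satisfies $\mathrm{Var}(\phi(B)\mid W)>0$ on a set of positive measure---by the same LLN computation, this again yields a strictly positive limit of $\MSE_U^{2}$.
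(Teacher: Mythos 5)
Your proof is a genuinely different route from the paper's, but it has a gap that the paper's construction is specifically designed to avoid.

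The paper's proof constructs a thresholding first layer $L^{(1)}(x)=x-b$ with $c_{\max}<b<1$, followed by ReLU, followed by a global-average layer $L^{(2)}$. Because every entry of $W$ is $\leqslant c_{\max}<b$, the continuous network $\Phi_c(W)$ is identically $0$; meanwhile the $\{0,1\}$-valued $A_n$ has a positive fraction of $1$'s, so $\Phi_d(A_n)$ converges to $\sigma(c)\,p^*>0$ for some $p^*$ depending on $W$. The non-convergence is a \emph{bias} between two different limits. Your construction instead keeps $\Phi$ as the identity on the adjacency channel and uses the \emph{per-entry variance} of the Bernoulli noise: conditional on the latents, $\mathbb{E}[(W(u_i,u_j)-A_n[i,j])^2]=W(1-W)>0$, so $\MSE_U^2$ does not vanish. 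Conceptually this is a nice, more ``statistical'' explanation of the failure (a single Bernoulli draw cannot consistently recover its success probability), and unlike the paper's argument you do not need the simplifying assumption that $W$ is constant.

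However, there is a real obstruction your plan does not address. In \Cref{def:ign}, an IGN ends with an invariant layer $h$ mapping the $k_T$-tensor to $\mathbb{R}^d$. If the architecture you are handed contains such a pooling at the end (or any pooling-type LE layer in between), then the identity-on-adjacency parameters actually \emph{do} converge: e.g.\ with $h$ = mean-of-all-entries, $\Phi_d(A_n)=\frac{1}{n^2}\sum_{ij}A_n[i,j]\to\iint W=\Phi_c(W)$ a.s.\ by the law of large numbers, so the Bernoulli variance you are exploiting is averaged away. This is exactly the situation the paper's thresholding trick is built for: after $\sigma\circ L^{(1)}$, the continuous and discrete tensors already have different \emph{means}, so averaging preserves the gap instead of annihilating it. As written, your argument only covers architectures whose output is a $2$-tensor with no intermediate order reduction, which is a strict subset of what the theorem statement quantifies over. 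A clean fix is to fold the paper's threshold idea into your construction: set $L^{(1)}$ to be the same biased identity, so that the entrywise function $\phi$ applied to $W$ is identically $0$ while $\phi$ applied to a Bernoulli draw is $\sigma(c)\mathbf{1}\{A_n[i,j]=1\}$, and then whatever pooling follows yields a nonzero limit on the discrete side and zero on the continuous side. Your rescaling/universality fallbacks for general activations are plausible but are not needed once this modification is made, since the paper's argument (and yours after the fix) already hardwires ReLU into the chosen parameters, which is permitted because the theorem only asserts existence of bad parameters for some choice of nonlinearity admitted by the architecture.
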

The proof of \Cref{thm:conv-failure} hinges on the fact that the input to IGN in discrete case is 0-1 matrix while the input to cIGN in the continuous case has edge weight upper bounded by $c_{\tn{max}} < 1$. The margin between 1 and $c_{\tn{max}}$ makes it easy to construct counterexamples.  See \Cref{app:missing-proofs-neg} for details.

Theorem \ref{thm:conv-failure} states that we cannot expect every IGN will converge to its continuous version cIGN.
As the proof of this theorem crucially uses the fact that we can only access 0-1 adjacency matrix, a natural question is what if we can estimate the edge probability from the data?
Interestingly, we can obtain the convergence of for a subset of IGNs (which is still rich enough), called \smallIGN{}, in this case. %

\subsection{Convergence of \smallIGN{}}
\label{subsec:smallign-convergnce}

Let $\widehat{W}_{n \times n}$ be the estimated $n\times n$ edge probability matrix from $A_n$. $\widetilde{W_n}$ is the induced graphon defined in \cref{eqn:induced-cgcn-random}.
 To analyze the convergence error for general IGN after edge probability estimation, we first decompose the convergence error of the interest using triangle inequality. Assuming the output is 1-tensor, then
\begin{align*}
\label{}
& \MSE_U(\Phi_c(W), \Phi_d(\widehat{W}_{n \times n})) \\
& = \|S_U \Phi_c(W)-\frac{1}{\sqrt{n}}\Phi_d (\widehat{W}_{n \times n}) \| \\
&\leqslant  \underbrace{\|S_U \Phi_c(W)- S_U\Phi_c(\inducedW)\|}_\text{First term: discretization error} +
\underbrace{\|S_U\Phi_c(\inducedW) - \Phi_dS_U(\inducedW)\|}_\text{Second term: sampling error} \\
& + \underbrace{\|\Phi_dS_U(\inducedW) - \frac{1}{\sqrt{n}}\Phi_d (\widehat{W}_{n \times n})\|}_\text{Third term: estimation error} \numberthis
\end{align*}
\begin{figure*}[htp!]
  \centering
  \includegraphics[width=.33\linewidth]{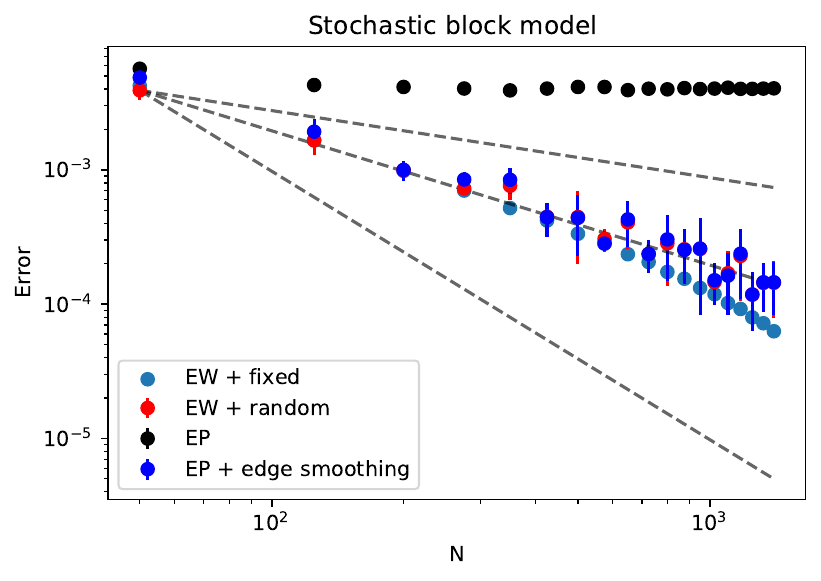}
  \includegraphics[width=.33\linewidth]{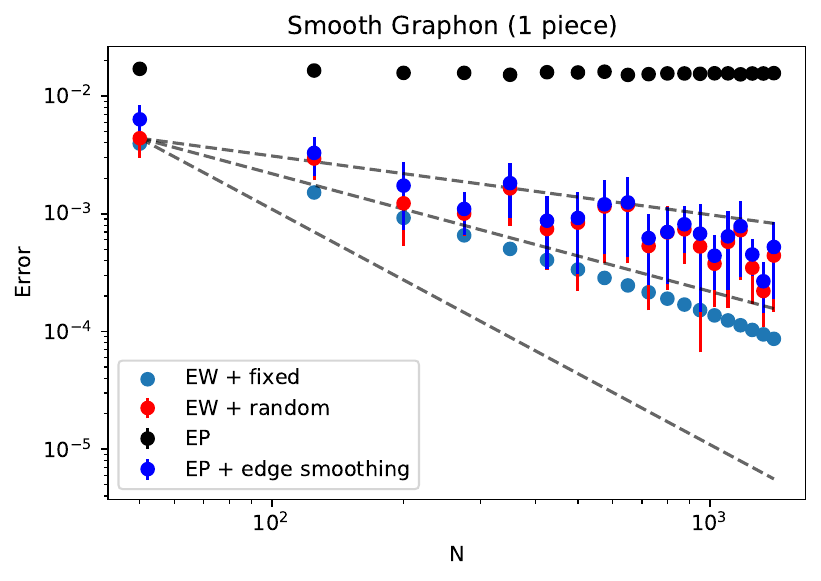}
  \includegraphics[width=.33\linewidth]{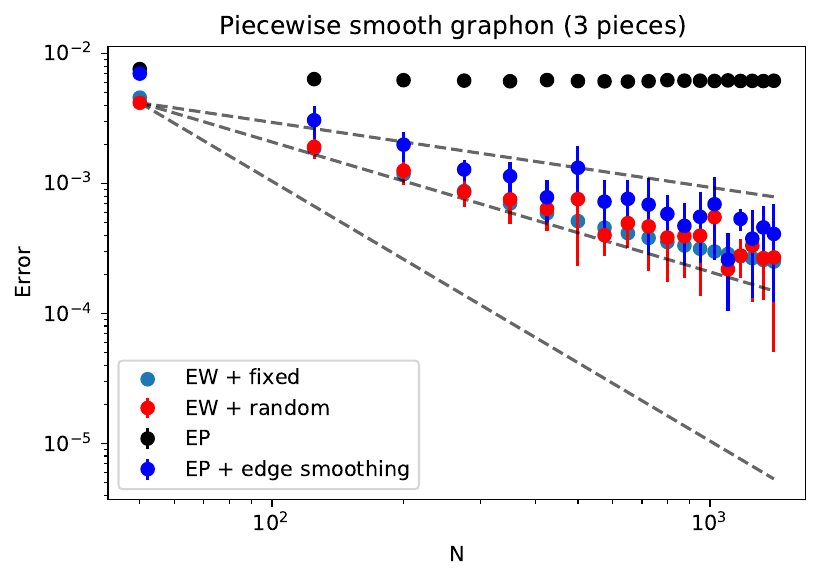}
  \label{fig:sfig1}
\vspace{-15pt}
\caption{The convergence error for three generative models: (left) stochastic block model, (middle) smooth graphon, (right) piece-wise smooth graphon. EW and EP stands for edge weight continuous model (\cref{eqn:det_gcn-random}) and edge probability discrete model (\cref{equ:graph-generation-EP}). Three dashed line in each figure indicates the decay rate of $n^{-0.5}, n^{-1}$ and $n^{-2}$. }
\vspace{-5pt}
\label{fig:convergence}
\end{figure*}
The three terms measure the different sources of error. First-term is concerned with the discretization error, which can be controlled via a property of $S_U$ and \Cref{prop:Phi-stable}. The  Second term concerns the sampling error from the randomness of $U$. This term will vanish if we consider only $S_n$ instead of $S_U$ under the extra condition stated below. The third term concerns the edge probability estimation error, which can also be controlled by leveraging existing literature on the statistical guarantee of the \textit{edge probability estimation} algorithm from \citet{zhang2015estimating}. \footnote{For better readability, here we only use the $W$ as input instead of $[W, \dg{X}]$. Adding $\dg{X}$ into the input is easy and is included in the full proof in \Cref{app:smallign-convergnce}. }

Controlling the second term is more involved. This is also the place where we have to add an extra assumption to constrain the IGN space in order to achieve convergence after edge smoothing.

\begin{definition}[\smallIGN{}]
Let $\inducedEW$ be a graphon with ``chessboard pattern'' \footnote{See full definition in \Cref{def:chessboard} in Appendix.}, i.e., it is a piecewise constant graphon where each block is of the same size. Similarly, define $\inducedEX$ as the 1D analog.
\smallIGN{} denotes a subset of IGN that satisfies $S_n\Phi_c([\inducedEW, \dg{\inducedEX}]) = \Phi_d S_n([\inducedEW, \dg{\inducedEX}])$.
\end{definition}

\begin{restatable}[convergence of \smallIGN{} in the edge probability discrete model]{theorem}{convergenceafterEM}
\label{thm:convergenceafterEM}
Assume AS 1-4, and let $\widehat{W}_{n \times n}$ be the estimated edge probability that satisfies $\frac{1}{n}\|W_{n \times n} - \widehat{W}_{n \times n}\|_2$ converges to 0 in probability. Let $\Phi_c, \Phi_d$ be continuous and discrete \smallIGN{}. Then  $\MSE_U\left(\Phi_c\left([W, \dg{X}]\right), \Phi_d\left([\widehat{W}_{n \times n}, \dg{\widetilde{x_n}}]\right)\right)$ converges to 0 in probability.
\end{restatable}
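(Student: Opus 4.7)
The plan is to carry out the three-term triangle inequality decomposition already displayed in the excerpt, with
$T_1 = \|S_U\Phi_c(W) - S_U\Phi_c(\inducedW)\|$,
$T_2 = \|S_U\Phi_c(\inducedW) - \Phi_d S_U(\inducedW)\|$, and
$T_3 = \|\Phi_d S_U(\inducedW) - \tfrac{1}{\sqrt n}\Phi_d(\widehat W_{n\times n})\|$,
and show each tends to $0$ in probability. To declutter the exposition I will suppress the signal slot $[\cdot,\dg{\cdot}]$; the arguments for the node-signal channel are identical in spirit, using \Cref{lem:x-diff} in place of \Cref{lem:w-diff}.

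For $T_1$ (discretization error), I would pull $S_U$ inside via the bound $\|S_U f\|_2 \le \Linf{f}$ noted in \Cref{subsec: EP}, then apply the $L_\infty$ version of the stability estimate in \Cref{prop:Phi-stable} to reduce to $\newnorminf{W-\inducedW}$. Since $W$ is Lipschitz by AS\ref{as:graphon-lip} and the maximal block width $\max_i|I_i|$ of $\inducedW$ is $O(\log n/n)$ with high probability (classical uniform-spacing bound), $\Linf{W-\inducedW}\to 0$ in probability, and the diagonal component of the partition-norm is controlled in the same way.

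For $T_2$ (sampling error) I would pass through a chessboard surrogate. Let $\inducedEW$ be the equal-block chessboard graphon carrying the per-block value $W(u_i,u_j)$ on $[(i-1)/n, i/n]\times[(j-1)/n, j/n]$. Three observations chain together: (a) the sampled matrices agree, $S_n(\inducedEW)=S_U(\inducedW)$, so $\Phi_d S_U(\inducedW)=\Phi_d S_n(\inducedEW)$; (b) by the IGN-small hypothesis, $\Phi_d S_n(\inducedEW)=S_n\Phi_c(\inducedEW)$; and (c) the remaining piece $\|S_U\Phi_c(\inducedW) - S_n\Phi_c(\inducedEW)\|$ I would control layer-by-layer, noting that at each LE integration step the computation on $\inducedW$ weights entries by $|I_j|$ while the corresponding one on $\inducedEW$ weights by $1/n$, so the per-layer discrepancy is $O\!\bigl(\|W\|_\infty \sum_j ||I_j|-1/n|\bigr)$, which tends to $0$ w.h.p.\ by a Dvoretzky--Kiefer--Wolfowitz-type estimate on uniform order statistics. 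Combined with the stability of LE layers from \Cref{thm:linear-layer-stability} and the normalized-Lipschitz activations of AS\ref{as:activation-lip}, this yields $T_2\to 0$ in probability.

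For $T_3$ (estimation error), apply stability of the discrete $\Phi_d$ (the discrete analogue of \Cref{prop:Phi-stable} flagged in its remark) to bound $T_3 \le C(A_2)\,\newnorm{S_U(\inducedW) - \tfrac{1}{n}\widehat W_{n\times n}}$ after aligning the $1/\sqrt n$ output normalizations. Because $S_U(\inducedW) = \tfrac{1}{n}[W(u_i,u_j)]_{ij}$, the matrix component is exactly $\tfrac{1}{n}\|W-\widehat W\|_2$, which tends to $0$ in probability by hypothesis; the diagonal component is handled either by the diagonal guarantee of the estimator of \citep{zhang2015estimating} or by plugging in diagonal values separately. Combining the three limits gives the theorem. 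The main obstacle I anticipate is $T_2$: unlike $T_1$ and $T_3$ which reduce to fairly routine stability-plus-concentration calculations, $T_2$ requires the chessboard reduction because the IGN-small hypothesis is stated only for equal-block graphons, and cIGN is \emph{not} invariant under the non-measure-preserving relabeling $I_i \leftrightarrow [(i-1)/n, i/n]$, so the uneven-block discrepancy must be propagated carefully through every LE layer. A secondary bookkeeping hurdle is keeping the $(1/\sqrt n)^k$ normalizations in $S_U$ and $S_n$ consistent across tensors of different orders $k$ appearing inside $\Phi_c$.
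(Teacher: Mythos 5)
Your overall decomposition and the reductions (a)--(b) in the treatment of $T_2$ match what the paper does: the chessboard surrogate $\inducedEW$, the identity $S_U(\inducedW)=S_n(\inducedEW)$, and the IGN-small commutation $\Phi_d S_n(\inducedEW) = S_n\Phi_c(\inducedEW)$ are precisely the paper's moves, and your handling of $T_1$ and $T_3$ is in the same spirit as the paper's. The concrete gap is in step (c) for $T_2$. The quantity $\sum_j \bigl| |I_j| - 1/n \bigr|$ does \emph{not} tend to zero: the spacings $|I_j|$ behave like $\mathrm{Exp}(1)/n$, so $\sum_j \bigl||I_j|-1/n\bigr| = \tfrac{1}{n}\sum_j |n|I_j|-1|$ concentrates around $\mathbb{E}|Z-1|=2/e$ for $Z\sim\mathrm{Exp}(1)$ rather than vanishing; and DKW controls only $\max_k|u_{(k)}-k/n|$, which bounds each summand $||I_j|-1/n|$ by $O(\max_k|u_{(k)}-k/n|)$ and hence the sum by $O(\sqrt{n\log n})$, which diverges. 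In other words, the crude bound $\|W\|_\infty \sum_j ||I_j|-1/n|$ throws away exactly the cancellation that makes the sampling error vanish. The cancellation comes from the fact that both Riemann-type sums approximate the \emph{same} smooth limit, and a term-by-term absolute-value bound cannot see this.

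The fix, and the paper's route, is to interpose the Lipschitz object $W$ rather than compare $\Phi_c(\inducedW)$ with $\Phi_c(\inducedEW)$ directly. One writes
$\|S_U\Phi_c(\inducedW)-S_n\Phi_c(\inducedEW)\| \le \|S_U\bigl(\Phi_c(\inducedW)-\Phi_c(\inducedEW)\bigr)\| + \|(S_U-S_n)\Phi_c(\inducedEW)\|$,
and then splits the last piece through $\Phi_c(W)$:
$\|(S_U-S_n)\Phi_c(\inducedEW)\| \le \|S_U(\Phi_c(\inducedEW)-\Phi_c(W))\| + \|(S_U-S_n)\Phi_c(W)\| + \|S_n(\Phi_c(\inducedEW)-\Phi_c(W))\|$.
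The outer two pieces are controlled by the stability estimate (\Cref{prop:Phi-stable}) together with $\|\inducedEW-W\|_{L_\infty}\to 0$ a.s.\ via Glivenko--Cantelli (\Cref{lem:inducedEW-converges}); the middle piece uses that $\Phi_c(W)$ is piecewise Lipschitz on the diagonal and off-diagonal (\Cref{lem:property-of-T}), so that $(S_U-S_n)\Phi_c(W)$ is bounded by $\max_k|u_{(k)}-k/n|\to 0$ a.s. Your layer-by-layer comparison stumbles precisely because $\Phi_c(\inducedW)$ and $\Phi_c(\inducedEW)$ are piecewise constant, hence not Lipschitz, so there is no smoothness against which to pair the spacing fluctuation and produce a vanishing bound.
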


We leave the detailed proofs in \Cref{app:smallign-convergnce} with some discussion on the challenges for achieving full convergence results in the \Cref{remark:difficulty}.
We note that Theorem \ref{thm:convergenceafterEM} has a practical implication: It suggests that in practice, for a given unweighted graph (potentially sampled from some graphon), it may be beneficial to first perform edge probability estimation before feeding into the general IGN framework, to improve the architecture's stability and convergence.

Finally, although the convergence of \smallIGN{} is not entirely satisfactory, it contains some interesting class of functions that can approximate any \sGNN{} arbitrarily well. See \Cref{app:approx} for proof details.  %

\begin{restatable}{theorem}{smalligngcn}
\smallIGN{} can approximates \sGNN{} (both discrete and continuous ones) arbitrarily well on the compact domain in the $\|\cdot \|_{L_{\infty}}$ sense.
\end{restatable}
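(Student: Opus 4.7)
The strategy has three steps: (i) reduce approximation of a general \sGNN{} to approximation of polynomial spectral filters, (ii) exhibit an IGN construction that realizes such polynomial filters by alternating a ``contraction'' LE basis element with channel-wise multilayer perceptrons, and (iii) verify that the resulting construction lies in \smallIGN{}.

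For the reduction, recall that a continuous \sGNN{} applies filters of the form $\hat h(T_W)$ (possibly interleaved with pointwise nonlinearities), where $T_W$ is the integral operator with kernel $W$. On a compact graphon domain the spectrum of $T_W$ is contained in a fixed compact real interval, so by Stone--Weierstrass any continuous $\hat h$ can be approximated uniformly by polynomials. Hence it suffices to $L_\infty$-approximate polynomial filters $\sum_{k=0}^K c_k T_W^k f$ together with the pointwise nonlinearities of the \sGNN{}.

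For the IGN construction, the key building block is the map $(W,f)\mapsto T_W f$, i.e.\ $u\mapsto \int W(u,v) f(v)\,dv$. This factors as (a) a \emph{pointwise} product of two channels producing $W(u,v)f(v)$ on a 2-tensor, followed by (b) the LE basis element from \Cref{tab:R2-R2} (operators 4 or 7) that integrates over the $v$-axis and places the result on the $u$-axis. The pointwise multiplication is a continuous function of two bounded scalar features, so by the standard universal approximation theorem it can be $L_\infty$-approximated to arbitrary precision by a channel-wise two-layer perceptron, which is precisely an IGN block $L^{(t+1)}\!\circ\sigma\!\circ L^{(t)}$ applied in the feature dimension. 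Cascading $K$ such ``product $\to$ contraction'' blocks, together with a final channel-wise MLP to emulate the \sGNN{} pointwise nonlinearity, yields an IGN $\Phi$ whose overall $L_\infty$-approximation error is controlled by propagating the per-block approximation error through the LE layers, which are stable by \Cref{prop:Phi-stable}.

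It remains to check that $\Phi$ belongs to \smallIGN{}, i.e.\ that $S_n\Phi_c([\inducedEW,\dg{\inducedEX}]) = \Phi_d S_n([\inducedEW,\dg{\inducedEX}])$ on every chessboard input. Channel-wise MLPs act entrywise and hence commute with $S_n$ trivially. For the remaining LE basis elements, observe that on a chessboard graphon, piecewise constant on equal-sized blocks of width $1/n$, the continuous integral of any constant over one block equals $1/n$ times the block value, which is exactly what the normalized discrete LE operator from \Cref{tab:R2-R2} computes after sampling. Hence each LE basis element commutes with $S_n$ on chessboard tensors, and an induction over the layers of $\Phi$ gives the \smallIGN{} identity. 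The hardest part is step~(ii): realizing the bilinear product $W(u,v)f(v)$, and consequently $T_W^k f$, using only \emph{linear} equivariant layers together with channel-wise nonlinearities, while quantitatively controlling $\|\cdot\|_{L_\infty}$-error on a compact domain. Once multiplication-by-MLP is in place, the \smallIGN{} verification in step~(iii) is a direct consequence of the careful normalization chosen for the LE operators in \Cref{tab:R2-R2}.
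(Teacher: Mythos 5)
Your high-level plan matches the paper's: realize the spectral convolution as ``copy diagonal to columns $\to$ pointwise MLP-multiplication $\to$ contraction by an averaging LE operator'', and then check the \smallIGN{} commutation identity on chessboard inputs. However, two of your steps contain real gaps.

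The most serious gap is in step~(iii). You assert that ``each LE basis element commutes with $S_n$ on chessboard tensors, and an induction over the layers of $\Phi$ gives the \smallIGN{} identity.'' Taken literally, this would prove that \emph{every} IGN belongs to \smallIGN{}, which is exactly what the paper refutes in \Cref{remark:difficulty}. The obstruction is that the chessboard property is \emph{not} an invariant of the LE basis: operators such as ``copy row average to diagonal'' (operator~6 of \Cref{tab:R2-R2}) produce an output supported on the diagonal, which is of measure zero in the continuous picture but carries full weight after $S_n$; once such a pattern appears, a subsequent averaging layer differs between the continuous and discrete sides, and the induction collapses. A correct argument must identify the \emph{specific} LE elements appearing in the SGNN-approximating construction, show that each one maps chessboard tensors to chessboard tensors, and only then run the induction. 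This is exactly what \Cref{lemma:sgnn-linear-layer-commutes} and \Cref{lem:property-of-T}.2 do, and the accompanying \Cref{remark: replace-multiplication} is what lets the pointwise MLP be swapped in for exact multiplication. Your claim about arbitrary LE basis elements cannot be salvaged as stated.

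The second gap is the degree normalization. The \sGNN{} in \cref{eq:sgnn} filters with the normalized operator $L=D(A)^{-1/2}AD(A)^{-1/2}$, not with the raw kernel $T_W$. Your construction only realizes $T_Wf$. To reach $L^k x$ you must additionally compute $\degmean=\frac{1}{n}\dg{A\one}$ (exact, via an LE layer), approximate $a\mapsto 1/\sqrt{a}$ by an MLP on a compact domain — which requires the lower bound AS~\ref{as:min-deg-lower-bound} to bound the domain away from zero — and realize $\degmean^{-1/2}A\degmean^{-1/2}$ via the $DMD$ construction of \Cref{lem:ign-multiplication}. Without this, you are approximating a different family of filters than the theorem claims. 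As a smaller point, the Stone--Weierstrass reduction is unnecessary: the filters $h(\lambda)=\sum_k \beta_k\lambda^k$ in \cref{eq:sgnn} are already polynomial, so the relevant truncation argument (controlling $\|L^k\|_{\text{spec}}$) is simpler than passing through an abstract density theorem.
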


\section{Experiments}
\label{sec:exp}

We experiment 2-IGN on three graphon models of increasing complexity: Erdoes Renyi graph with $p=0.1$, stochastic block model of 2 blocks of equal size and probability matrix $[[0.1, 0.25], [0.25, 0.4]]$, a \lip{} graphon model with $W(u, v) = \frac{u + v + 1}{4}$, and a piecewise \lip{} graphon with $W(u, v) = \frac{u\%\frac{1}{3} + v\%\frac{1}{3} + 1}{4}$ where $\%$ is modulo operation. Similar to \cite{keriven2020convergence}, we consider untrained IGN with random weights to assess how convergence depends on the choice of architecture rather than learning.
 We use a 5-layer IGN with hidden dimension 16. We take graphs of different sizes as input and plot the error in terms of the norm of the output difference. The results are plotted in \Cref{fig:convergence}. %
See \Cref{app:all-exps} for full details and results. %

As suggested by the \Cref{thm:EW-convergence}, for both deterministic and random sampling, the error decreases as we increase the size of the sampled graph. %
Interestingly, if we take the 0-1 adjacency matrix as the input, the error does not decrease, which aligns with the negative result in \Cref{thm:conv-failure}. We further implement the edge smoothing algorithm \citep{eldridge2016graphons} and find that after the edge probability estimation, the error again decreases, as implied by \Cref{thm:convergenceafterEM}. We remark that although \Cref{thm:convergenceafterEM} works only for \smallIGN{}, our experiments for the general $2$-IGN with randomized initialized weights still show encouraging convergence results. Understanding the convergence of general IGN after edge smoothing is an important direction that we will leave for further investigation.

\section{Conclusion}
In this paper, we investigate the convergence property of a powerful GNN, Invariant Graph Network. We first prove a general stability result of linear layers in IGNs. We then prove a convergence result under the model of \citet{ruiz2020graphon} for both \IGN{} and high order \kIGN{}. %
Under the model of \citet{keriven2020convergence}
we first show a negative result that in general the convergence of every IGN is not possible.
Nevertheless, we pinpoint the major roadblock and prove that if we preprocess input graphs by edge smoothing \citep{zhang2015estimating}, the convergence of a subfamily of IGNs, called \smallIGN{}, can be obtained. As an attempt to quantify the size of \smallIGN{}, we also show that \smallIGN{} contains a rich class of functions that can approximate any \sGNN.

In the future, we would like to (1) further explore the expressive power of \smallIGN{} and
(2) investigate the convergence for the general IGNs under the \edgeprob{} model, or design variants with convergence property but are equally powerful.

\section*{Acknowledgement}
 The authors would like to thank anonymous reviewers for helpful comments. Chen Cai would like to thank Jinwoo Kim for helping out illustrations, Haggai Maron for helpful discussion, and Hy Truong Son for providing the Pytorch implementation of IGN. This work is in part supported by National Science Foundation under grants CCF-2112665 and IIS-2050360.
\newpage

\bibliography{main}
\bibliographystyle{icml2022}

\newpage
\onecolumn
\appendix

\section{Tables}
\label{app:tables}
We list the all \LE{} maps for $\mb{R}^{n} \rightarrow \mb{R}^{n\times n}$ and $\mb{R}^{n \times n} \rightarrow \mb{R}^{n}$ in \Cref{tab:R1-R2} and \Cref{tab:R2-R1} respectively.

We also summarize the notations used throughout the paper in \Cref{table:symbol_notation}.
\begin{table*}
\renewcommand{\arraystretch}{1.5}
\centering
\caption{Linear equivariant maps for $\mb{R}^{n} \rightarrow \mb{R}^{n\times n}$ and $\mb{R}^{[0,1]} \rightarrow \mb{R}^{[0,1]^2}$.}
\label{tab:R1-R2}
{\makegapedcells
\begin{tabular}[]{@{}llll@{}}
\toprule
Operations & Discrete & Continuous & Partitions\\ \midrule
1-3: Replicate to diagonal/rows/columns &
\makecell[l]{$T(A) = \text{Diag}(A)$\\  $T(A)_{i, j} = A_i$\\$T(A)_{i, j}= A_j$} &
\makecell[l]{$T(W)(u, v) = \mathrm{I}_{u=v}W(u)$\\ $T(W)(u,v) = W(u)$\\$T(W)(u,v)=W(v)$} &
\makecell[l]{ \{\{1,2,3\}\} \\ \{\{1,3\},\{2\}\} \\ \{\{1,2\},\{3\}\}}
\\

\hline

4-5: Replicate mean to diagonal/all matrix &
\makecell[l]{$T(A)_{i, i} = \frac{1}{n}A \one$\\$T(A)_{i, j} = \frac{1}{n}A \one $} &
\makecell[l]{$T(W)(u, v) = \mathrm{I}_{u=v}\int W(u) du$\\$T(W)(u,v) = \int W(u) du$} &
\makecell[l]{ \{\{1\},\{2,3\}\} \\ \{\{1\},\{2\},\{3\}\} }
\\
\bottomrule
\end{tabular}
}
\end{table*}

\begin{table*}
\renewcommand{\arraystretch}{1.5}
\centering
\caption{Linear equivariant maps for $\mb{R}^{n \times n} \rightarrow \mb{R}^{n}$ and $\mb{R}^{[0,1]^2} \rightarrow \mb{R}^{[0,1]}$.}
\label{tab:R2-R1}
{\makegapedcells
\begin{tabular}[]{@{}llll@{}}
\toprule
Operations & Discrete & Continuous & Partitions\\ \midrule
\makecell[l]{1-3: Replicate diagonal/row mean/\\columns mean} &
\makecell[l]{$T(A)=\text{Diag}^{*}(A)$\\$T(A)=\frac{1}{n}A\one$\\$T(A) = \frac{1}{n}A^T\one$} &
\makecell[l]{$T(W)(u) = W(u, u)$\\$T(W)(u) = \int W(u, v)dv$\\$T(W)(u) = \int W(u, v)du$} &
\makecell[l]{ \{\{1,2,3\}\} \\ \{\{1,2\},\{3\}\} \\ \{\{1,3\},\{2\}\} }
\\ \hline

\makecell[l]{4-5: Replicate mean of all elements/\\ mean of diagonal} &
\makecell[l]{$T(A)_i = \frac{1}{n^2}\one^T A \one $ \\$T(A)_i = \frac{1}{n}\one^T \text{Diag}(\dgdual{A}) \one$} &
\makecell[l]{$T(W)(u) = \int W(u, v) du dv$ \\ $T(W)(u) = \int \mathrm{I}_{u, v} W(u, v)dudv$} &
\makecell[l]{ \{\{1\},\{2\},\{3\}\} \\ \{\{1,2\},\{3\}\} }
\\
\bottomrule
\end{tabular}
}
\end{table*}

\begin{table}[!htbp]
\caption{Summary of important notations.}
\begin{center}
{
\begin{tabular}{@{}l|l@{}}
    \hline
    \toprule
    Symbol & Meaning \\
    \midrule
    \midrule
    $\one_n$ & all-one vector of size $n\times 1$ \\
    $\| \cdot \|_2/\| \cdot \|_{L_2}$ & 2-norm for matrix/
    graphon \\
    $\| \cdot \|_\infty / \| \cdot \|_{L_{\infty}}$ & infinity-norm for matrix/graphon \\
    $[\cdot, \cdot]$ & \makecell[l]{Given $A\in \mb{R}^{n^k \times d_1}, B\in \mb{R}^{n^k \times d_2}$, \\ $[A, B]$ is the concatenation of $A$ and $B$ along feature dimension. $[A, B] \in \mb{R}^{n^k \times (d_1 + d_2)}$.}
      \\
    $W: [0, 1]^2 \rightarrow [0,1]$ &  graphon \\
    $X\in \mb{R}^{[0,1]\times d}$ & 1D signal \\
    $\mc{W}$ & space of graphons \\
    $\newnorm{\cdot}$ & \textnewnorm{}. When the underlying norm is $L_{\infty}$ norm, we also use $\newnorminf{\cdot}$. \\
    $\mathrm{I}$ & indicator function \\
    $I$ & interval \\
    SGNN & spectral graph neural networks, defined in \Cref{eq:sgnn} \\
    $\LE_{\ell, m}$ & linear equivariant maps from $\ell$-tensor to $m$-tensor \\
    \hline
     & Notations related to sampling \\ \hline

    $W_n$ & Induced piecewise constant graphon from fixed grid \\
    $\widetilde{W_n}$ & Induced piecewise constant graphon from random grid \\
    $\inducedEW$ & \makecell[l]{Induced piecewise constant graphon from random grid, but resize the all \\ individual blocks to be of equal size (also called chessboard graphon in the paper). \\ $\inducedEW(I_i \times I_j) \coloneqq W(u_{(i)}, u_{(j)})$} \\
    $\sampleW$ & $n\times n$ matrix sampled from $W$; $\sampleW(i, j) = W(u_{i}, u_{j})$ \\
    $\widehat{W}_{n\times n} \in \mb{R}^{n \times n}$ & \makecell[l]{the estimated edge probability from graphs sampled according to \\ edge probability discrete model from \citet{zhang2015estimating}} \\
    $\widetilde{x_n} \in \mb{R}^{n \times d}$ & sampled signal $[\widetilde{x_n}]_i:=X(u_i)$ \\
    $X_n$ & induced 1D piecewise graphon signal from fixed grid \\
    $\inducedX$ & induced 1D piecewise graphon signal from random grid \\
    $S_U$ & normalized sampling operator for random grid. $S_Uf(i, j) = \frac{1}{n}(f(u_{(i)}), f(u_{(j)})$ \\
    $S_n$ & normalized sampling operator for fixed grid. $S_nf(i, j) = \frac{1}{n}(f(\frac{i}{n}), f(\frac{j}{n}))$\\
    $\MSE_U(x, f)$ & $\left(n^{-1} \sum_{i=1}^{n}\left\|x_{i}-f\left(u_{i}\right)\right\|^{2}\right)^{1 / 2}$ for 1D signal; $\left(n^{-2} \sum_{i}\sum_{j}\left\|x_{i, j}-f\left(u_{i}, u_{j}\right)\right\|^{2}\right)^{1 / 2}$ for 2D case\\
    $\alpha_n$ & a parameter that controls the sparsity of sample graphs. Set to be $1$ in the paper. \\
    \hline
     & Notations related to IGN \\ \hline
    $\bell{k}$ & Bell number: number of partitions of $[k]$. $\bell{2} = 2, \bell{3} = 5, \bell{4} = 15, \bell{5} = 52...$ \\
    $\parspace_k$ & space of all partitions of $[k]$ \\
    $\mc{I}_k$ & the space of indices. $\mc{I}_k:=\{(i_1, ..., i_k)| i_1\in [n], ..., i_k\in [n]\}$. Elements of $\mc{I}_k$ is denoted as $\bs{a}$ \\
    $\gamma \in [k]$ & \makecell[l]{partition of $[k]$. For example $\set{\set{1,2}, \set{3}}$ is a partition of $[3]$. \\
    The total number of partitions of $[k]$ is $\bell{k}$.} \\
    $\bs{a} \in \gamma$ & $\bs{a}$ satisfies the equivalence pattern of $\gamma$. For example, $(x, x, y) \in \set{\set{1, 2}, \set{3}}$ where $x, y, z\in [n]$.  \\
    $\gamma < \beta $ & given two partitions $\gamma, \beta \in \parspace_k$, $\gamma < \beta$ if $\gamma$ is finer than $\beta$. For example, $\set{1,2,3}<\set{\set{1,2}, \set{3}}$. \\
    $\bs{B}_{\gamma}$ & \makecell[l]{$l+m$ tensor; tensor representation of $\LE_{l, m}$ maps. \\ we differentiate $T_{\gamma}$ (operators) from  $\bs{B}_{\gamma}$ (tensor representation of operators)}\\
    $\mc{B}$ & a basis of the space of linear equivariant operations from $\ell$-tensor to $m$-tensor. $\mc{B}=\set{T_{\gamma}|\gamma \in \parspace_{l+k}}$ \\
    $T_c/T_d$ & linear equivariant layers for graphon (continuous) and graphs (discrete) \\
    $\Phi_c/\Phi_d$ & IGN for graphon (continuous) and graphs (discrete) \\
    $L^{(i)}$ & i-th linear equivariant layer of IGN \\
    $L$ & normalized graph Laplacian \\
    $T_{i}$ & basis element of the space of linear equivariant maps; sometimes also written as $T_{\gamma}$. \\

    \bottomrule
\end{tabular}
}
\end{center}
\label{table:symbol_notation}
\end{table}

\section{Missing Proofs from \Cref{sec:stability}}

\subsection{Extension of \textNewnorm{}}
\label{subsec:extending-new-norm}
There are three ways of extending  \textNewnorm{} 1) extend the definition of \textnewnorm{} to multiple channels 2) changing the underlying norm from $L_2$ norm to $L_{\infty}$ norm, and 3) extend \textNewnorm{} defined for 2-tensor to $k$-tensor.

First recall the definition \textnewnorm{}.
\partitionnormdef*

To extend \textnewnorm{} to signal $A \in \mb{R}^{n^2 \times d}$ of multiple channels, we denote $A = [A_{\cdot, 1} \in \mb{R}^{n^2 \times 1}, ..., A_{\cdot, d} \in \mb{R}^{n^2 \times 1}]$ where $[\cdot, \cdot]$ is the concatenation along channels. $\newnorm{A}:=\sum_{i=1}^d \newnorm{A_{\cdot, i}}$.
both for multi-channel signal both for graphs and graphons.

Another way of generalizing \textNewnorm{} is to change the $L_2$ to $L_{\infty}$ norm. We denote the resulting norm as $\newnorminf{\cdot}$. For $W \in \mc{W}, \newnorminf{W} := (\max_{u\in[0, 1]} W(u, u), \max_{u\in[0, 1], v\in[0, 1]} W(u, v))$. The discrete case and high order tensor case can be defined similarly as the $L_2$ case.

The last way of extending \textNewnorm{} to $k$-tensor $X \in \dtensor{k}{1}$ is to define the norm for each slice of $X$, i.e., $\newnorm{X}:=((\frac{1}{\sqrt{n}})^{|\gamma_1|}\|X_{\gamma_1} \|_2, ..., \frac{1}{\sqrt{n}})^{|\gamma_{\bell{k}}|}\|X_{\gamma_{\bell{k}}} \|_2)$ where $\gamma_{\cdot} \in \parspace_k$. Note how we order $(\gamma_1, ..., \gamma_{\bell{k}})$ can be arbitrary as long as the order is used consistent.

\subsection{Proof of stability of linear layer for 2-IGN}
\label{app:2-IGN-linear-stability}

\RtwoRtwo*
\begin{proof}

The statements hold in both discrete and continuous cases. Without loss of generality, we only prove the continuous case by going over all linear equivariant maps $\mb{R}^{[0, 1]^2} \rightarrow \mb{R}^{[0, 1]^2}$ in \Cref{tab:R2-R2}.
\begin{itemize}
\item 1-3: It is easy to see that the \textnewnorm{} does not increase for all three cases.

\item 4-6:
It is enough to prove case 4 only. Since $T(W)(*, u) = \int W(u, v)dv$, \dnorm{} $\|\dg{T(W)}\|_{L_2}^2 = \int (\int W(u, v)dv)^2 du \leqslant  \iint W^2(u, v)dudv $.  For matrix norm: $\|T(W)\|^2_{L_2} = \|\dg{T(W)}\|_{L_2} \leqslant   \iint W^2(u, v)dudv$. 
Therefore the statement holds for this linear equivariant operation.

\item 7-9: same as case 4-6.

\item 10-11: It is enough to prove the first case: average of all elements replicated on the whole matrix. The \dnorm{} is the same as the \mnorm{}. Both norms are decreasing so we are done.

\item 12-13: It is enough to prove only case 12. Since \dnorm{} is equal to \mnorm{}, and \dnorm{} is decreasing by Jensen's inequality we are done.

\item 14-15: Since \mnorm{} is the same as \dnorm{}, which stays the same so we are done.
\end{itemize}
As shown in all cases for any $W \in \mc{W}$ with $\|W\|_{\tn{pn}} < (\epsilon, \epsilon)$, $\|T_i(W)\|_{\tn{pn}} < (\epsilon, \epsilon)$. Therefore we finish the proof for $\mb{R}^{[0,1]^2} \rightarrow \mb{R}^{[0,1]^2}$.
We next go over all linear equivariant maps $\mb{R}^{[0, 1]} \rightarrow \mb{R}^{[0, 1]^2}$ in \Cref{tab:R1-R2} and prove it case by case.

\begin{itemize}
\item 1-3: It is enough to prove the second case. It is easy to see diagonal norm is preserved and $\|T(W)\|_2 = \|W\|_2 \leqslant  \epsilon$. Therefore $\|T(W)\|_{\tn{pn}} \leqslant  (\epsilon, \epsilon)$.

\item 4-5: It is enough to prove the second case. Norm on diagonal is no larger than $\|W\|$ by Jensen's inequality. The matrix norm is the same as the diagonal norm therefore also no large than $\epsilon$. Therefore $\|T(W)\|_{\tn{pn}} \leqslant  (\epsilon, \epsilon)$.
\end{itemize}

Last, we prove the cases for $\mb{R}^{[0, 1]^2} \rightarrow \mb{R}^{[0, 1]}$.

For cases 1-3, it is enough to prove case 2. Since the norm of the output is no large than the matrix norm of input by Jensen's inequality, we are done. Similar reasoning applies to cases 4-5 as well.
\end{proof}

\subsection{Proof of \Cref{thm:linear-layer-stability}}
\label{subsec:linear-layer-stability-proof}
We need a few definitions and lemmas first.
\begin{definition}[axis of a tensor]
Given a k-tensor $X \in \dtensor{k}{1}$ indexed by $(\name{1}, ..., \name{k})$. The axis of $X$, denoted as $\axis{X}$, is defined to be $\axis{X}:=(\name{1}, ..., \name{k})$. %
\end{definition}
As an example, the aixs of the first grey sub-tensor in \Cref{fig-app:slices}a, which is a $2$-tensor, is $\set{\set{1,2}, \set{3}}$.

\begin{definition}[replication of a tensor]
\label{def:replication}
Given a k-tensor $X \in \dtensor{k}{1}$ indexed by $(1, ..., k)$, replicating $X$ over new axis $(k+1, ..., k+d)$ means that the resulting new tensor $X'$ of $k+d$ dimension is $X'(i_1, ..., i_k, *, ..., *) := X(i_1, ..., i_k)$. %
\end{definition}

\begin{definition}[partial order of partitions]
\label{def:partial-order}
Given two partitions of $[k]$, denoted as $\gamma = \{\gamma_1, ..., \gamma_{d_1}\}$ and $\beta = \{\beta_1, ..., \beta_{d_2}\}$, we say $\gamma$ is finer than $\beta$, denoted as $\gamma < \beta$, if and only if 1) $\gamma \neq \beta$ and 2) for any $\beta_j \in \beta$, there exists $\gamma_i \in \gamma$ such that $ \beta_j \subseteq \gamma_i$.
\end{definition}
For example, $\{\{1,2,3\}\}$ is finer than $\{\{1, 2\}, \{3\}\}$ but $\set{\set{1,2}, \set{3}}$ is not comparable with $\set{\set{1,3}, \set{2}}$. Note that space of partitions forms a Hasse diagram under the partial order defined above (each set of elements has a least upper bound and a greatest lower bound, so that it forms a lattice). See \Cref{fig:hasse-diagram} for an example.

\begin{figure}%
\centering
\begin{tikzpicture}[scale=.8]
  \node (one) at (0,2) {$\set{\set{1,2,3}}$};
  \node (a) at (-4,0) {$\set{\set{1,2}, \set{3}}$};
  \node (b) at (0,0) {$\set{\set{1,3}, \set{2}}$};
  \node (c) at (4,0) {$\set{\set{2, 3}, \set{1}}$};
  \node (zero) at (0,-2) {$\set{\set{1}, \set{2}, \set{3}}$};
  \draw (zero) -- (a) -- (one) -- (b) -- (zero) -- (c) -- (one) ;
\end{tikzpicture}
\caption{Space of partitions forms a Hasse diagram under the partial order defined in \Cref{def:partial-order}.} Top to bottom corresponds to coarse partition to finer partition. %
\label{fig:hasse-diagram}
\end{figure}

\begin{definition}[average a $k$-tensor $X$ over $\Pi$]\label{def:averaging}
Let $X \in \dtensor{k}{1}$ be a $k$-tensor indexed by $\set{\set{1}, ..., \set{k}}$. Without loss of generality, let $\Pi = \set{\set{1}, ..., \set{d}}$. Denote the resulting $(k-d)$-tensor $X'$, indexed by $\set{\set{d+1}, ..., \set{k}}$.
By averaging $X$ over $\Pi$, we mean %
\begin{equation*}
 X'(\cdot):=\frac{1}{n^d}\sum_{t\in \mc{I}_d} X(t, \cdot).
 \end{equation*}
The definition can be extended to $\mb{R}^{[0, 1]^k}$ by replacing average with integral.
 \end{definition}

\begin{lemma}[properties of \textnewnorm{}]
\label{lemma:property-of-partition-norm}
We list some properties of the \textnewnorm{}. Although all lemmas are stated in the discrete case, the continuous version also holds. The statements also holds for $\newnorminf{\cdot}$ as well.
\begin{enumerate}[label=(\alph*)]
\item Let $X \in \dtensor{k}{1}$ be a $k$-tensor and denote one of its slices $X' \in \dtensor{k'}{1}$ with $k'\leqslant k$.  If $\newnorm{X} \leqslant \epsilon \one_{\bell{k}}$, then $\newnorm{X'} \leqslant \epsilon \one_{\bell{k'}}$.

\item Let $k'<k$. Let $X \in \dtensor{k}{1}$ be a $k$-tensor and $X' \in \dtensor{k'}{1}$ be the resulting $k'$-tensor after averaging over $k-k'$ axis of $X$. If $\newnorm{X} \leqslant \epsilon \one_{\bell{k}}$, then $\newnorm{X'} \leqslant \epsilon \one_{\bell{k'}}$.

\item Let $k'>k$. Let $X \in \dtensor{k}{1}$ be a $k$-tensor and $X'$ be the resulting $k'$-tensor after replicating $X$ over $k'-k$ axis of $X'$. If $\newnorm{X} \leqslant \epsilon \one_{\bell{k}}$, then $\newnorm{X'} \leqslant \epsilon \one_{\bell{k'}}$. 

\item Let $k'<k$ and $X \in \dtensor{k}{1}$ be a $k$-tensor such that it has only one non-zero slice $X_{\gamma}$ of order $k'$, i.e., 
if $\bs{a}\in \mc{I}_k, X(\bs{a})\neq 0 $, it implies $\bs{a} \in \gamma$.
If $\newnorm{X_{\gamma}} \leqslant \epsilon \one_{\bell{k'}}$, then $\newnorm{X} \leqslant \epsilon \one_{\bell{k}}$.
\end{enumerate}

\end{lemma}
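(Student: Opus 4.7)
The plan is to unpack $\newnorm{\cdot}$ as a collection of normalized norms indexed by partitions and reduce each claim to comparing slices of the derived tensor to slices of $X$. Concretely, for a tensor $Y \in \dtensor{m}{1}$ and partition $\delta \in \parspace_m$, we have $\|Y_\delta\|_2^2 = \sum_{\bs{j} \in [n]^{|\delta|}} Y(\iota_\delta(\bs{j}))^2$, and the partition-norm hypothesis gives $(1/\sqrt{n})^{|\delta|}\|Y_\delta\|_2 \leqslant \epsilon$ for every $\delta$. In each of the four claims it then suffices to identify the partition of $[k]$ (or of the axes of the derived tensor) that captures the composition of indexing operations, and to check that the resulting $n$-powers cancel. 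Throughout, the multi-channel and $L_\infty$ extensions follow by applying the same identities per channel and entrywise, so I focus on the single-channel $L_2$ case.

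For part 1, composing two slice operations is itself a slice: if $X' = X_\alpha$ with $|\alpha| = k'$ and $\delta$ is a partition of the $k'$ axes of $X'$, then $(X')_\delta = X_{\tilde{\delta}}$, where $\tilde{\delta} \in \parspace_k$ is obtained by taking unions of those blocks of $\alpha$ that $\delta$ merges. Since $|\tilde{\delta}| = |\delta|$, the bound on the partition-norm of $X$ transfers verbatim. For part 2, Cauchy--Schwarz on the $n^d$-point average yields $\|X'_\delta\|_2^2 \leqslant n^{-d}\|X_\gamma\|_2^2$, where $\gamma \in \parspace_k$ extends $\delta$ with singleton blocks for the $d$ averaged axes; since $|\gamma| = |\delta| + d$, the $n$-powers align. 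For part 3 (replication), a direct unfolding gives $\|X'_\delta\|_2^2 = n^m \|X_\beta\|_2^2$, where $\beta \in \parspace_k$ is obtained by intersecting each block of $\delta$ with $[k]$ (keeping non-empty pieces) and $m$ counts the blocks of $\delta$ sitting entirely outside $[k]$; once again $|\delta| = |\beta| + m$ makes the normalizations cancel.

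For part 4, the main obstacle is that $X_\beta$ may be nonzero at indices $\iota_\beta(\bs{j})$ that happen to land in $\gamma$ only after additional equalities among the $j_i$'s. The key identity I need to establish is $\|X_\beta\|_2 = \|X_{\beta \vee \gamma}\|_2$, where $\beta \vee \gamma$ denotes the join in the partition lattice (the coarsest partition refined by both): the admissible $\bs{j}$ are in bijection with entries of $X_{\beta \vee \gamma}$, which in turn is a slice of $X_\gamma$ by construction. Applying part 1 to $X_\gamma$ then yields $(1/\sqrt{n})^{|\beta\vee\gamma|}\|X_{\beta\vee\gamma}\|_2 \leqslant \epsilon$, and since $|\beta| \geqslant |\beta \vee \gamma|$, the extra factor $(1/\sqrt{n})^{|\beta|-|\beta\vee\gamma|} \leqslant 1$ is harmless. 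The hardest bookkeeping throughout is the careful identification of the partitions $\tilde{\delta}$, $\gamma$, $\beta$, and $\beta \vee \gamma$ in each case and verification of the cardinality relations; once those are set up, the inequalities reduce to a single application of Cauchy--Schwarz (for part 2), a counting argument (for parts 1 and 3), or a reduction to part 1 via the join (for part 4).
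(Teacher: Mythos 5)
Your proof is correct and follows essentially the same route as the paper's: part 1 by composing slice operations, part 2 by Jensen/Cauchy--Schwarz after augmenting the partition with singleton blocks for the averaged axes, part 3 by direct unfolding with the replicated axes contributing an exactly cancelling power of $n$, and part 4 by passing to the join $\beta \vee \gamma$ (the paper's ``most coarse partition finer than both,'' in its inverted terminology) and invoking part 1. The partition bookkeeping you make explicit ($\tilde{\delta}$, $\gamma = \delta \cup \{\text{singletons}\}$, $\beta = \delta|_{[k]}$, $\beta \vee \gamma$) matches what the paper uses implicitly, and the cardinality identities you verify are exactly what make the $n$-powers cancel in the paper's argument.
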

\begin{proof}
We prove statements one by one. Note that although the proof is done for $L_2$ norm, we do not make use of any specific property of $L_2$ norm and the same proof can be applied to $L_{\infty}$ as well. Therefore all statements in the lemma apply to $\newnorminf{\cdot}$ as well.
\begin{enumerate}
    \item By the definition of \textnewnorm{} and slice in \Cref{def:slice}, we know that any slice of $X'$ is also a slice of $X$, therefore any component of $\newnorm{X'}$ will be upper bounded by $\epsilon$, which concludes the proof.

    \item Without loss of generality, we can assume that $k' = k-1$ as the general case can be handled by induction. Let the axis of $X$ that is averaged over is axis $\set{1}$. To bound $\newnorm{X'}$, we need to bound the normalized norm of any slice of $X'$. Let $X'_{\gamma'}$ be arbitrary slice of $X'$. Since $X'$ is obtained by averaging over axis 1 of $X$, we know that $X'_{\gamma'}$ is the obtained by averaging over axis of 1 of $X_{\gamma}$, a slice of $X$, where $\gamma := \gamma' \cup \{\set{1}\}$. Since $\newnorm{X} \leqslant \epsilon \one_{\bell{k}}$, we know that $(\frac{1}{\sqrt{n}})^{|\gamma|}\| X_{\gamma}\| \leqslant \epsilon$. By Jensen's inequality, we have $(\frac{1}{\sqrt{n}})^{|\gamma'|}\| X'_{\gamma'}\| \leqslant (\frac{1}{\sqrt{n}})^{|\gamma|}\| X_{\gamma}\|$, and therefore $(\frac{1}{\sqrt{n}})^{|\gamma'|}\| X'_{\gamma'}\| \leqslant \epsilon $. Since $(\frac{1}{\sqrt{n}})^{|\gamma'|}\| X'_{\gamma'}\| \leqslant \epsilon $ holds for arbitrary slice of $X'$, we conclude that $\newnorm{X'} \leqslant \epsilon \one_{\bell{k'}}$.

    The proof above only handles the case of $k'=k-1$. The general case where $k-k'>1$ can be handled by evoking the proof above multiple times for different reduction axis.

    \item Similar to the \Cref{lemma:property-of-partition-norm} (b), we can handle general case by performing induction. Therefore without loss of generality,  we assume $X$ is indexed by $(\set{1}, ..., \set{k} )$ and $X'$ is indexed by $(\set{1}, ..., \set{k+1})$. Just as the last case, without loss of generality we assume that $X'$ is obtained by replicating $X$ over $1$ new axis, denoted as $\set{k+1}$. In other words, $\axis{X'} = \axis{X} \cup \set{\set{k+1}}$.

    To control $\newnorm{X'}$, we need to bound $(\frac{1}{\sqrt{n}})^{|\gamma|}\| X'_{\gamma}\|$ where $\gamma \in \parspace_{k+1}$. Since $X'$ is obtained from $X$ by replicating it over $\set{k+1}$,  $(\frac{1}{\sqrt{n}})^{|\gamma|}\| X'_{\gamma}\| = (\frac{1}{\sqrt{n}})^{|\beta|}\| X_{\beta}\|$ where $\beta = \gamma |_{[k]}$. As $\newnorm{X} \leqslant \epsilon \one_{\bell{k}}$, it implies that $(\frac{1}{\sqrt{n}})^{|\gamma|}\| X'_{\gamma}\| \leqslant \epsilon $ holds for any $\gamma \in \parspace_{k'}$. Therefore we conclude that $\newnorm{X'} \leqslant \epsilon \one_{\bell{k'}}$.

    \item %
    To bound $\newnorm{X}$, we need to bound the normalized norm of any slice of $X$. Let $X_{\beta}$ be arbitrarily slice of $X$ where $\beta \in \parspace_k$.  Since $\gamma$ and $\beta$ are partitions of $[k]$, there exist partitions that are finer than both $\beta$ and $\gamma$, where the notion of finer between two partitions is defined in \Cref{def:partial-order}. Among all partitions that satisfy such conditions, denote the most coarse one as $\alpha \in \parspace_k$. This can be done because the $\parspace_k$ is finite.
    Note that $|\alpha| < |\beta|$ and $|\alpha| < |\gamma|$.

    Since $X_{\alpha}$ is a slice of $X_{\gamma}$ and $\newnorm{X_{\gamma}} \leqslant \epsilon \one_{\bell{k'}}$,
    $(\frac{1}{\sqrt{n}})^{|\alpha|}\|X_{\alpha}\| \leqslant \epsilon$ according to \Cref{lemma:property-of-partition-norm} (a).
    As $X_{\alpha}$ is the slice of $X_{\beta}$ (implies $\| X_{\alpha} \leqslant X_{\beta}\|$ )
    and $\alpha$ is the most coarse partition that is finer than $\beta$ and $\gamma$ (implies $\|X_{\alpha}\|\geqslant \| X_{\beta}\|$ we have $\| X_{\beta}\| = \| X_{\alpha}\|$.
    This implies $(\frac{1}{\sqrt{n}})^{|\beta|}\| X_{\beta}\| \leqslant (\frac{1}{\sqrt{n}})^{|\alpha|}\| X_{\alpha}\| \leqslant \epsilon$.
    
    As $(\frac{1}{\sqrt{n}})^{k'}\| X_{\beta}\| \leqslant \epsilon $ holds for arbitrary slice $\beta$ of $X$, we conclude that $\newnorm{X}\leqslant \epsilon \one_{\bell{k}}$.
\end{enumerate}

\end{proof}

Now we are ready to prove the main theorem.
\kignlinearstability*
\begin{proof}
Without loss of generality, we first consider discrete cases of mapping from $X \in \mb{R}^{n^{\ell}}$ to $Y \in \mb{R}^{n^m}$. In general, each element $T_{\gamma}$ of linear permutation equivariant basis can be identified with the following operation on input/output tensors.
\begin{quotation}
Given input $X$, (step 1) obtain its subtensor $X_{\gamma}$ on a certain $\Pi_1$ (selection axis),
(step 2) average $X_{\gamma}$ over $\Pi_2$ (reduction axis), resulting in $\xred$.
(step 3) Align $\xred$ on $\Pi_3$ (alignment axis) with $Y_{\gamma}$  and
(step 4) replicate $Y_{\gamma}$ along $\Pi_4$ (replication axis), resulting $\yrep$, a slice of $Y$. Entries of $Y$ outside $\yrep$ will be set to be 0. In general, $\Pi_i$ can be read off from $S_1$-$S_3$. %
\end{quotation}
$\Pi_1$-$\Pi_4$ corresponds to different axis of input/output tensor and can be read off from different parts of $S_{\gamma} = S_1 \cup S_2 \cup S_3$ as we introduced in the main text. Note such operation can be naturally extended to the continuous case, as done in \Cref{tab:R1-R2,tab:R2-R2,tab:R2-R1} for $2$-IGN. We next give detailed explanations of each step.

\begin{figure}[htp]
  \centering
  \includegraphics[width=.8\linewidth]{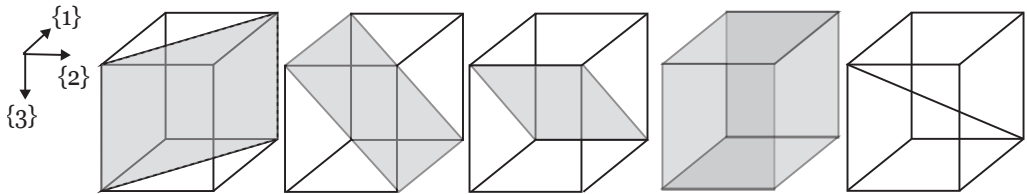}
\caption{Five ``slices'' of a 3-tensor, corresponding to $\bell{3}=5$ partitions of $[3]$. From left to right: a) $\{\{1, 2\}, \{3\}\}$ b) $\{\{1\}, \{2, 3\}\}$ c) $\{\{1, 3\}, \{2\}\}$ d) $\{\{1\}, \{2\}, \{3\}\}$ e) $\{\{1, 2, 3\}\}$.}
\label{fig-app:slices}
\end{figure}

\textbf{First step ($X \rightarrow X_\gamma$):
select $X_{\gamma}$ from $X$ via $\Pi_1$.}

$\Pi_1$ corresponds to $$S|_{[\ell]}\coloneqq\{s\cap [l]\mid s\in S \text{ and } s\cap [l] \neq \emptyset\}. $$ It specifies the what parts (such as diagonal part for 2-tensor) of the input $\ell$-tensor is under consideration. We denote the resulting subtensor as $X_{\gamma}$. See \Cref{def:slice} for formal definition.
As an example in \Cref{eq:partition-example}, $\Pi_1$ corresponds to $\{\{1, 2\},\{3\}\}$, meaning we select a 2-tensor with axises $\{1, 2\}$ and $\{3\}$. Note that the cardinality $|S|_{[\ell]}| = |(S_1 \cup S_2)|_{[\ell]}|\leqslant  l$ encodes the order of $X_{\gamma}$.

\textbf{Second step ($X_\gamma \rightarrow \xred$): average of $X_{\gamma}$ over $\Pi_2$.}
 $\Pi_2$ corresponds axes in  $S_1\subset S|_{[\ell]}$, which tells us along what axis to average over $X_{\gamma}$.
 It will reduce the tensor $X_{\gamma}$ of order $|S_1| + |S_2|$, indexed by $S|_{[\ell]} $, to a tensor of order $|S|_{[\ell]}|-|S_1|=|S_2|$, indexed by $S_2|_{[l]}$. Recall the definition of "averaging" in Definition \ref{def:averaging}.

 In the example of \Cref{fig-app:partition}, this corresponds to averaging over axis $\{\{1, 2\}\}$ %
 , reducing 2-tensor (indexed by axis $\{1,2\}$ and $\{3\}$) to 1-tensor (indexed by axis $\{3\}$). The normalization factor in the discrete case is $n^{|S_1|}$. We denote the tensor after reduction as $\xred$.

As the second step performs tensor order reduction, we end up with a tensor $\xred$ of order $|S_2|$.
The next two steps will describe how to fill in the output tensor $Y$ using $\xred$. To fill in $Y$, we will first align $\xred$ with $Y_{\gamma}$, a subtensor of $Y$, in the third step. We then replicate $Y_{\gamma}$ on $\Pi_4$ in the fourth step, resulting in $\yrep$, a sub-tensor of $Y$.  Finally, we fill all entries of $Y$ outside the subtensor $Y_\gamma$ to be zero.

\textbf{Third step ($\xred \rightarrow Y_{\gamma}$): align $\xred$ with $Y_{\gamma}$}.
To fill in $Y_{\gamma}$, we need to specify how the resulting $|S_2|$-tensor $\xred$ is \textit{aligned} with a certain $|S_2|$-subtensor $Y_\gamma$ of $Y$. After all, there are many ways of selecting a $|S_2|$-tensor from $Y$, which is indexed by $\{ \{l+1\}, ..., \{\ell +m\}\}$.
Specifically, set $Y_{\gamma}$ be the $|S_2|$-tensor indexed by $S_2 |_{\ell+[m]}$. We next define the precise relationship between $\xred$ and $Y_{\gamma}$. $\xred$ is indexed by $S_2|_{[l]}$ while $Y_{\gamma}$ is indexed by $S_2|_{l + [m]}$ and defined to be $Y_{\gamma}(\cdot) = \xred(\cdot)$. %
In the example of \Cref{fig-app:partition}, $\xred$ is a 1D tensor indexed by $\{3\}$ and $Y_{\gamma}$ (the grey cuboid on the right cube of \Cref{fig-app:partition}) is indexed by $\{6\}$.

\textbf{Fourth step ($Y_{\gamma} \rightarrow \yrep$): replicating $Y_{\gamma}$ over $\Pi_4$}. $\Pi_4$ corresponds to axes in $S_3$. It will be used to specify along what axis (axes) we will replicate the $|S_2|$-tensor $Y_{\gamma}$ over. Recall that $Y_{\gamma}$ is indexed by $S_2|_{l + [m]}$. Let $\yrep$ be a subtensor of $Y \in \mb{R}^{n^{l}}$ indexed by $(S_2 \cup S_3)|_{l + [m]}$. Obviously, the tensor $Y_{\gamma}$ output from the Third step is a subtensor of $\yrep$. 
Without loss of generality, let the first $|S_2|$ component are indexed by $S_2|_{l + [m]}$ and the rest components are indexed by $S_3|_{l + [m]}$. The mathematical definition of the fourth step is then $\yrep(\cdot, t) := Y_{\gamma}(\cdot)$ for all $t\in [n]^{|S_3|}$. Note that the order of $\yrep$ can be smaller than order of $Y$.

The example in \Cref{eq:partition-example} has $S_3=\{\{4\}, \{5\}\}$, which means that we will replicate the 1-tensor along axis $\{4\}$ and $\{5\}$. Note that in general, we do not have to fill in the whole $m$-tensor (think about copy row average to diagonal in \Cref{tab:R2-R2}).

\begin{figure}[htp]
  \centering
  \vspace{-5pt}
  \includegraphics[width=.4\linewidth]{fig/partition_crop.pdf}
\caption{An illustration of the one linear equivariant basis from $\mb{R}^{n^3} \rightarrow \mb{R}^{n^3}$. The partition is $\{\{1, 2\},\{3, 6\}, \{4\},\{5\}\}$. It selects area spanned by axis $\{1,2\}$ and $\{3\}$ (grey shaded), average over the (red) axis $\{1, 2\}$, and then align the resulting 1D slice with axis $\{6\}$ in the output tensor, and finally replicate the slices along axis $\{4\}$ and $\{5\}$ to fill in the whole cube on the right. }
\label{fig-app:partition}
\end{figure}

After the interpretation of general linear equivariant maps in $k$-IGN, We now show that %
if $\newnorm{X} \leqslant  \epsilon \one_{\bell{\ell}}$, then $T_{\gamma}(X)\leqslant  \epsilon \one_{\bell{m}}$ holds for all $\gamma$. This can be done easily with the use of \Cref{lemma:property-of-partition-norm}. %

For any partition of $[\ell +m]$ $\gamma$, according to the first step we are mainly concerned about the $\newnorm{X_{\gamma}}$ instead of $\newnorm{X}$.  Since $X_{\gamma}$ is a slice of $X$, then if $\newnorm{X} \leqslant \epsilon \one_{\bell{\ord{X}}}$, by \Cref{lemma:property-of-partition-norm} (a), then $\newnorm{X_{\gamma}} \leqslant \epsilon \one_{\bell{|S_1|+|S_2|}}$.

According to the second step and \Cref{lemma:property-of-partition-norm} (b), we can also conclude that $\newnorm{\xred} \leqslant \epsilon \one_{\bell{|S_2|}}$.

For the third step of align $\xred$ with $Y_{\gamma}$, it is quite obvious that $\newnorm{Y_{\gamma}} = \newnorm{\xred} \leqslant \epsilon \one_{\bell{|S_2|}}$.

For the fourth step of replicating $Y_{\gamma}$ over $\Pi_4$ to get $\yrep$, by \Cref{lemma:property-of-partition-norm} (c), we have $\newnorm{\yrep} \leqslant \epsilon \one_{\bell{|S_2|+|S_3|}}$.

Lastly, we evoke \Cref{lemma:property-of-partition-norm} (d) to get $\newnorm{Y} \leqslant \epsilon \one_{\bell{m}}$, which concludes our proof.

\end{proof}

\begin{remark}[On the difference from Incidence Networks for Geometric Deep Learning.]
\label{remark:difference}

A recent preprint Incidence Networks for Geometric Deep Learning \cite{albooyeh2019incidence} characterize the linear equivariant maps between incidence tensor, which encodes the combinatorial structure of graphs and its higher order analog simplicial complex and polytopes. \cite{albooyeh2019incidence} characterizes the linear permutation equivariant maps in terms of pooling and broadcasting operations. The pooling and broadcasting operations is the same as the averaging and replication operation defined in \Cref{def:averaging} and \Cref{def:replication}.

The main difference of \cite{albooyeh2019incidence} from our paper is 1) their motivation is to characterize the linear permutation equivariant maps between incidence tensors while in our paper, the similar characterization (in the case of linear permutation equivariant maps of $k$-IGN) serves as a building block for our convergence proof; 2) the characterization in \cite{albooyeh2019incidence} is slightly more general as incidence tensor can have different length for different axis while tensors considered in our case has the same length across all axis.
\end{remark}

\section{Missing Proofs from \Cref{sec:convergence-ruiz} (Edge Weight Continuous Model)}
\label{app:proofs-EW}
First we need a lemma on the distribution of gaps between $n$ uniform sampled points on $[0,1]$.
\begin{lemma}
\label{lemma:lengh-distribution}
Let $u_{(i)}$ be $n$ points uniformly sampled on $[0,1]$, sorted from small to large with $u_{(0)}=0$ and $u_{(n+1)}=1$. Let $D_i = u_{(i)} - u_{(i-1)}$. All $D_i$s have same distribution, which is $\tn{Beta}(1, n)$. 
In particular, expectation of $D_i$ $\mathbb{E}(D_i) = \frac{1}{n+1}$, $\mathbb{E}(D_i^2) = \frac{2}{(n+1)(n+2)}$, $\mathbb{E}(D_i^3) = \frac{6}{(n+1)(n+2)(n+3)}$. 
\end{lemma}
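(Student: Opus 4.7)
The plan is to establish three facts in sequence: (i) the spacings $D_1,\ldots,D_{n+1}$ are exchangeable, so they share a common marginal distribution; (ii) the marginal distribution of one specific spacing can be computed directly and identified as a Beta distribution; and (iii) the claimed moments follow from the standard Beta moment formula.

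For (i), the cleanest route is the R\'enyi representation for uniform spacings: if $E_1,\ldots,E_{n+1}$ are i.i.d.\ $\operatorname{Exp}(1)$ and $S = \sum_{j=1}^{n+1} E_j$, then
\[
(D_1,\ldots,D_{n+1}) \stackrel{d}{=} (E_1/S,\,\ldots,\,E_{n+1}/S).
\]
Since $(E_1,\ldots,E_{n+1})$ is exchangeable and division by the symmetric statistic $S$ preserves exchangeability, the spacings are exchangeable. Thus it suffices to compute the law of any single gap, for which $D_1 = u_{(1)} - 0 = u_{(1)}$ is the simplest choice. Alternatively, one can obtain the same conclusion by a change of variables on the density $n!\,\mathbf{1}\{0\leqslant u_1\leqslant\cdots\leqslant u_n\leqslant 1\}$ of the ordered sample, which is the route I would pull out if the R\'enyi representation were not available.

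For (ii), $u_{(1)}$ is the minimum of $n$ i.i.d.\ $\mathrm{Uniform}[0,1]$ variables, so $\Pr[u_{(1)} > t] = (1-t)^n$ for $t\in[0,1]$, giving the density $n(1-t)^{n-1}$. This is the Beta density with the parameters asserted in the statement (the exact shape parameter depends only on how one counts the boundary gaps).

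For (iii), the moments of a $\operatorname{Beta}(\alpha,\beta)$ random variable are $E(X^k) = \Gamma(\alpha+k)\,\Gamma(\alpha+\beta)\,/\,(\Gamma(\alpha)\,\Gamma(\alpha+\beta+k))$. With $\alpha=1$ this telescopes to $E(X^k) = k!\,/\,\bigl((\beta+1)(\beta+2)\cdots(\beta+k)\bigr)$, which immediately yields the three quoted expressions. There is no substantive obstacle here: the whole lemma is classical in the theory of uniform order statistics. The only thing to be careful about is book-keeping the endpoint gaps $D_1 = u_{(1)}$ and $D_{n+1} = 1 - u_{(n)}$ consistently so that the Beta shape parameter matches the normalization used in the rest of the paper.
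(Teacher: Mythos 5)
Your approach is essentially the paper's: argue by exchangeability that all spacings share a marginal law, compute that marginal via the first gap $D_1 = u_{(1)}$, then read off moments from the Beta moment formula. The R\'enyi representation $(D_1,\ldots,D_{n+1}) \stackrel{d}{=} (E_1/S,\ldots,E_{n+1}/S)$ is a cleaner way to nail down exchangeability than the paper's terse appeal to ``a symmetry argument,'' and that is a genuine improvement in rigor.

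There is, however, a substantive discrepancy you gloss over rather than resolve. Your own computation gives the density $n(1-t)^{n-1}$ for $u_{(1)}$, which is $\tn{Beta}(1,n)$ with $E(D_i)=1/(n+1)$, $E(D_i^2)=2/((n+1)(n+2))$, $E(D_i^3)=6/((n+1)(n+2)(n+3))$ --- \emph{not} $\tn{Beta}(1,n-1)$ as the lemma asserts. With $n$ sample points and the two fixed endpoints there are $n+1$ spacings; a $\tn{Beta}(1,n)$ marginal is the only one consistent with $\sum_i E(D_i)=1$, so your version is the correct one. The lemma statement (and the paper's own proof, which speaks of ``all the $n-1$ points'' when there are $n$ of them) carries an off-by-one error, and your parenthetical remark that ``the exact shape parameter depends only on how one counts the boundary gaps'' does not reconcile the two: $\tn{Beta}(1,n)$ and $\tn{Beta}(1,n-1)$ are different distributions and yield different moment sequences. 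You should have flagged the mismatch explicitly rather than assert agreement. The error is asymptotically harmless --- both versions give $E(D_i^3)=\Theta(n^{-3})$, which is all that the downstream use in \Cref{lem:x-diff} and \Cref{lem:w-diff} requires --- but a blind proof should not silently adjust the parameter to match a statement it contradicts.
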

\begin{proof}
By a symmetry argument, it is easy to see that all intervals follow the same distribution.
For the first interval, the probability all the $n$ points are above $x$ is $(1-x)^n$ so the density of the length of the first (and so each) interval is $n(1-x)^{n-1}$. This is a Beta distribution with parameters $\alpha=1$ and $\beta=n$ The expectation of higher moments follows easily. Note that although the intervals are identically distributed, they are not independently distributed, since their sum is 1.
\end{proof}

\xdiffrandom*
\begin{proof}
We first bound the $\|X -X_n\|_{L_2[0, 1]}$ and $\|X -\inducedX\|_{L_2[0, 1]}$. %
For the first case, partitioning the unit interval as $I_i = [(i-1)/n, i/n]$ for $1 \leqslant  i \leqslant   n$ (the same partition used to obtain $x_n$, and thus $X_n$, from $X$), we can use the Lipschitz property of $X$ to derive
\begin{align*}
\left\|X-X_n\right\|^2_{L_2(I_i)} \leqslant  A_3^2\int_0^{1/n} u^2 du =  \frac{A_3^2}{3n^3} %
\end{align*}
We can then write
$\|X-X_n\|^2_{{L_2([0,1])}} = \sum_{i}\left\|X-X_n\right\|^2_{L_2(I_i)} \leqslant  \frac{A_3}{3n^2} $, %
which implies that $\|X-X_n\|_{{L_2([0,1])}} \leqslant  \sqrt{\frac{A_3}{3n^2}}$.

For the second case, since
$
\|X-\inducedX\|^2_{{L_2([0,1])}} = \sum_{i}\|X-\inducedX\|^2_{L_2(I_i)} %
$
, we will bound the $\left\|X-\inducedX\right\|^2_{L_2(I_i)}$.
As
\begin{equation*}
\left\|X-\inducedX\right\|^2_{L_2(I_i)} \leqslant  A_3^2\int_0^{D_i} u^2 du = A_3 D_i^{3} / 3
\end{equation*}
therefore
\begin{equation*}
\left\|X-\inducedX\right\|^2_{L_2(I)} = \sum_i \left\|X-\inducedX\right\|^2_{L_2(I_i)} \leqslant  A_3/3 \sum_i D_i^3
\end{equation*}

where $D_i$ stands for the length of $I_i$, which is a random variable due to the random sampling.

According to \Cref{lemma:lengh-distribution}, all $D_i$ are identically distributed and follows the Beta distribution $B(1, n-1)$. The expectation $E(D_i^3) = \frac{6}{n(n+1)(n+2)}$. %
Since by Jensen's inequality $E(\sqrt{Y}) \leqslant  \sqrt{E(Y)} $ holds for any positive random variable $Y$, $E(\sqrt{\frac{A_3}{3} \sum_i D_i^3)} \leqslant  \sqrt{E(\frac{A_3}{3} \sum_i D_i^3)} = \sqrt{\frac{A_3}{3} \frac{1}{n(n+2)}} = \Theta(\frac{1}{n})$. Using Markov inequality, we can then upper bound the
\begin{equation}
 P(\|X-\inducedX\|_{L_2(I)} \geq \epsilon) \leqslant
 P(\sqrt{\frac{A_3}{3} \sum_i D_i^3} \geq \epsilon) \leq
 \frac{E(\sqrt{\frac{A_3}{3} \sum_i D_i^3})}{\epsilon} =
 \Theta(\frac{1}{n\epsilon})
\end{equation}
Since the $P(\|X-\inducedX\|_{L_2(I)} \geq \epsilon)$ goes to 0 as $n$ increases, we conclude that $\newnorm{X-\inducedX}$ converges to 0 in probability. %
\end{proof}

\wdiffrandom*
\begin{proof}
For the first case, partitioning the unit interval as $I_i = [(i-1)/n, i/n]$ for $1 \leqslant  i \leqslant   n$, we can use the graphon's Lipschitz property to derive
\begin{align*}
\left\|W-W_n\right\|_{L_1(I_i\times I_j)} \leqslant  A_1\int_0^{1/n} \int_0^{1/n} |u| du dv &+ A_1\int_0^{1/n} \int_0^{1/n} |v| dv du =  \frac{A_1}{2n^3} + \frac{A_1}{2n^3} = \frac{A_1}{n^3} \text{.}
\end{align*}
We can then write
$\|W-W_n\|_{{L_1([0,1]^2)}} = \sum_{i,j}\left\|W-W_n\right\|_{L_1(I_i\times I_j)} \leqslant  n^2 \frac{A_1}{n^3} = \frac{A_1}{n}
$
which, since $W-W_n: [0,1]^2 \to [-1,1]$, implies
$\|W-W_n\|_{{L_2([0,1]^2)}} \leqslant  \sqrt{\|W-W_n\|_{{L_1([0,1]^2)}}} \leqslant  \wbound \text{.}$ The second last inequality holds because all entries of $W-W_n$ lies in $[-1, 1]$.

Similarly, $\|\text{Diag}(W-W_n)\|_{L_2[0,1]} \leqslant  \sqrt{\|\text{Diag}(W-W_n)\|_{L_1[0,1]}} \leqslant  \sqrt{2nA_1\int_0^{1/n}udu} = \sqrt{\frac{A_1}{n}}$. Therefore we conclude the first part of the proof.

For the second case, \dnorm{} is similar to the proof of \Cref{lem:x-diff} so we only focus on the $\|W-W_n\|_{{L_2([0,1]^2)}}$. Since $W-\inducedW: [0,1]^2 \to [-1,1]$ implies
\begin{equation*}
\|W-\inducedW\|_{{L_2([0,1]^2)}} \leqslant
\sqrt{\|W-\inducedW\|_{{L_1([0,1]^2)}}} =
\sqrt{\sum_{i,j} \|W-\inducedW\|_{{L_1(I_i \times I_j)}}}
\end{equation*}
where
\begin{equation*}
\|W-\inducedW\|_{{L_1(I_i\times I_j)}} \leqslant  A_1\int_{I_v} \int_{I_u} |u| du dv + A_1\int_{I_u} \int_{I_v} |v| dv du =
\frac{A_1}{2} (D_iD_j^2+D_jD_i^2)
\end{equation*}
Therefore
\begin{equation}
\|W-\inducedW\|_{{L_2([0,1]^2)}} \leqslant
\sqrt{\|W-\inducedW\|_{{L_1([0,1]^2)}}} =
\sqrt{\sum_{i, j} \frac{A_1}{2} (D_jD_i^2 + D_iD_j^2)} =
\sqrt{A_1\sum_i D_i^2}
\end{equation}
where we use the $\sum_i D_i =1$ for the last equality.
Since by Jensen's inequality $E(\sqrt{Y}) \leqslant  \sqrt{E(Y)} $ for any positive random variable $Y$, $E(\sqrt{\sum_i D_i^2}) \leqslant  \sqrt{E(\sum_i D_i^2)} = \Theta(\frac{1}{\sqrt{n}})$ since $E(D_i^2) = \Theta(\frac{1}{n^2})$ by \Cref{lemma:lengh-distribution}. By Markov inequality, we then bound
\begin{equation*}
P(\|W-\inducedW\|_{{L_2([0,1]^2)}}>\epsilon) \leq
P(\sqrt{\|W-\inducedW\|_{{L_1([0,1]^2)}}} > \epsilon) \leqslant
\frac{E(\sqrt{\sum_i D_i^2})}{\epsilon} \leq
\Theta(\frac{1}{\sqrt{n}\epsilon})
\end{equation*}
\end{proof}
Therefore, we conclude that both $\newnorm{W-W_n}$ and $\newnorm{W-\inducedW}$ converges to 0. %

\phistable*
\begin{proof}
Without loss of generality, it suffices to prove for $2$-IGN as $k$-IGN follows the same proof with the constant being slightly different.
Since we have proved stability of every linear layers of IGN in \Cref{thm:linear-layer-stability}, the general linear layer $T$ is just a linear combinations of individual linear basis, i.e. $T = \sum_{\gamma} c_{\gamma}T_{\gamma}$ where $c_i \leqslant  A_2$ for all $i$ according to AS\ref{as:filter-bound}. Without loss of generality, We can assume $T(X)$ is of order 2 and have
\begin{align*}
\|T(W_1)-T(W_2)\|_{\tn{pn}} & = \|\sum_i c_{\gamma}T_{\gamma}(W_1-W_2)\|_{\tn{pn}} \\
& \leqslant  \sum_i \|c_{\gamma}T_{\gamma}(W_1-W_2)\|_{\tn{pn}} \\
& \leqslant  (\sum |c_{\gamma}|\epsilon, \sum |c_{\gamma}|\epsilon) = (15\filterbound \epsilon, 15 \filterbound \epsilon)
\end{align*}

To extend the result to nonlinear layer, note that AS\ref{as:activation-lip} ensures the 2-norm shrinks after passing through nonlinear layers. Therefore $\newnorm{\sigma \circ T(X) - \sigma \circ T(Y)} \leqslant  \newnorm{T(X) - T(Y)} = \newnorm{T(X-Y)}\leqslant  15A_2\newnorm{X-Y}$.
Repeating such process across layers, we finish the proof of the $L_{2}$ case. %

The extension to $L_{\infty}$ is similar to the case of $L_2$ norm. The main modification is to change the definition of the \textnewnorm{} from $L_2$ norm on different slices (corresponding to different partitions of $[\ell]$ where $\ell$ is the order of input) to $L_{\infty}$ norm. The extension to the case where input and output tensor is of order $\ell$ and $m$ is also straightforward according to \Cref{thm:linear-layer-stability}.

\end{proof}

\EWconvergence*
\begin{proof}
By \Cref{prop:Phi-stable}, it suffices to prove that $\newnorm{[W, \dg{X}]) - [W_n, \dg{X_n}]}$ and $\newnorm{[W, \dg{X}]) - [\inducedW, \dg{\inducedX}]}$ goes to 0.

$\newnorm{[W, \dg{X}]) - [W_n, \dg{X_n}]}$ is upper bounded by $(\Theta(\frac{1}{n^{1.5}}), \Theta(\frac{1}{n^{1.5}}))$ according to \Cref{lem:w-diff,lem:x-diff}, which decrease to 0 as $n$ increases. Therefore we finish the proof of convergence for the deterministic case.

For the random sampling case, by \Cref{lem:w-diff,lem:x-diff}, we know that both $\|W-\inducedW\|_{{L_2([0,1]^2)}}$ and $\|X-\inducedX\|_{L_2(I)}$ goes to 0 as $n$ increases in probability at the rate of $\Theta(\frac{1}{n^{1.5}})$. Therefore we can also conclude that the convergence of IGN in probability according to \Cref{prop:Phi-stable}.
\end{proof}

\section{Missing Proof from \Cref{sec:EP-convergence} (Edge Probability Continuous Model)}
\subsection{Missing Proof for \Cref{subsec:negative-result}}
\label{app:missing-proofs-neg}

\convfailure*
\begin{proof}
Given a fixed IGN architecture $\Phi_c$ that maps input $\mathbb{R}^{n^2 \times d_1}$ to $ \mathbb{R}^{n^k \times d_2}$, it suffices to show the case of $k=1$ and $d_2 = 1$. 
Under the case of $k=1$ and $d_2 = 1$, it suffice to show that single layer IGN may not converge. Let $IGN =  \sigma \circ L^{(1)}$ have only one linear layer, and let the input to IGN be $A$ in the discrete case and $W$ in the continuous case. For simplicity, we assume that graphon $W$ is constant $p$ on $[0,1]^2$.
As $A$ consists of only 0 and 1 and all entries of $W$ is below $c_{\text{max}}$, we can set weights of IGN such that its first linear layer consists of only identity map and bias term. By choosing bias term to be any number between $[-1, -c_{\text{max}}]$, $L^{(1)}$ map any number no large than $c_\text{max}$ to negative and maps 1 to positive.

Therefore $L^{(1)}(W)=0$ and $L^{(1)}(A)$ is a positive number $c\in \mb{R}^{+}$ on entries $(i, j)$ where $A(i, j)=1$. Let $\sigma$ be ReLU and $L^{(2)}$ be average of all entries. We can see that c$IGN(W)=0$ for all $n$ while $IGN(A)$ converges to  $\sigma(c)p$ as $n$ increases.

As the construction above only relies on the fact that there is a separation between $c_{\text{max}}$ and $1$ (but not on size $n$), it can be extended to deeper IGNs 
, which means the gap between c$IGN(W)$ and $IGN(A)$ will not decrease as $n$ increases. In the general case of $W$ not being constant, the only difference is that $IGN(A)$ will converge to be $\sigma(c)p^*$ where $p^*$ is a different constant that depends on $W$.
Therefore we conclude the proof. 
\end{proof}
\begin{remark}
The reason that the same argument does not work for \sGNN{} is that \sGNN{} always maintains $Ax$ in the intermediate layer. In contrast, IGN keeps both $A$ and $\dg{x}$ in separate channels, which makes it easy to isolate them to construct counterexamples. 

\end{remark}

\subsection{Missing Proofs from \Cref{subsec:smallign-convergnce}}
\label{app:smallign-convergnce}
\textbf{Notation.} For any $P, Q \in \mb{R}^{n \times n}$, define $d_{2, \infty}$, the normalized $2, \infty$ matrix norm, by
$d_{2, \infty}(P, Q)=n^{-1 / 2}\|P-Q\|_{2, \infty}:=\max _{i} n^{-1 / 2}\left\|P_{i, \cdot}-Q_{i, \cdot} \right\|_{2}$ where $P_{i, \cdot}, Q_{i, \cdot}$ are $i$-th row of $P$ and $Q$, respectively. Note that $d_{2, \infty}(P, Q) \geq  \frac{1}{n}\| P - Q\|_2$. %

 Let $S_U$ be the sampling operator for $W$, i.e., $S_U(W) = \frac{1}{n}[W(U_i, U_j)]_{n\times n}$. Note that as $U$ is randomly sampled, $S_U$ is a random operator. Denote $\sampleE$ as sampling on a fixed equally spaced grid of size $n\times n$, i.e. $S_nW = \frac{1}{n}[W(\frac{i}{n}, \frac{j}{n})]_{n \times n}$. $S_n$ is a fixed operator when $n$ is fixed.

Let $\widehat{W}_{n \times n}$ be the estimated edge probability from graphs $A$ sampled from $W$. Let $\inducedW$ be the piece-wise constant graphon induced from sample $U$ as \cref{eqn:induced-cgcn-random}. Similarly, denote $\sampleW$ be the $n\times n$ matrix realized on sample $U$, i.e., $\sampleW[i, j] = W(u_i, u_j)$. It is easy to see that $S_U(W)  = \frac{1}{n}\sampleW$. Let $\inducedEW$ be the graphon induced by $\sampleW$ with $n\times n$ blocks of the same size. In particular, $\inducedEW(I_i \times I_j) \coloneqq W(u_{(i)}, u_{(j)})$ where $I_i = [\frac{i-1}{n}, \frac{i}{n}]$. $E$ in the subscript is the shorthand for the ``blocks of equal size''. Similarly we can also define the 1D analog of $\inducedW$ and $\inducedEW$, $\inducedX$ and $\inducedEX$.

\textbf{Proof strategy.} We first state five lemmas that will be used in the proof of \Cref{thm:convergenceafterEM}.
\Cref{lem:property-snsx} concerns the property of normalized sampling operator $S_U$ and $S_n$.
\Cref{lem:inducedW-converges,lem:inducedEW-converges} concern the convergence of $\Linf{\inducedW-W}$ and $\Linf{\inducedEW-W}$. \Cref{lem:property-of-T} characterize the effects of linear equivariant layers $T$ and IGN $\Phi$ on \Linfnorm of the input and output. \Cref{lem:sxsn-sampling} bounds the \Linfnorm of the difference of stochastic sampling operator $S_U$ and the deterministic sampling operator $S_n$. \Cref{thm:convergenceafterEM} is built on the results from five lemmas and the existing result on the theoretical guarantee of edge probability estimation from \citet{zhang2015estimating}.

The convergence some lemmas states is almost surely convergence.  Convergence almost surely implies convergence in probability, and in this paper, all theorems concern convergence in probability.  Note that proofs of \Cref{lem:property-snsx,lem:inducedW-converges,lem:inducedEW-converges,lem:sxsn-sampling} for the $W$ and $X$ are almost the same. Therefore without loss of generality, we mainly prove the case of $W$.

\begin{definition}[Chessboard pattern]
\label{def:chessboard}
Let $u_i = \frac{i-1}{n}$ for all $i\in [n]$. A graphon $W$ is defined to have chessboard pattern if and only if there exists a $n$ such that $W$ is a piecewise constant on $[u_i, u_{i+1}]\times [u_j, u_{j+1}]$ for all $i, j\in [n]$. Similarly, $f: [0, 1] \rightarrow \mb{R}$ has 1D chessboard pattern if there exists $n$ such that $f$ is a piecewise constant on $[u_i, u_{i+1}]$ for all $i\in [n]$.
\end{definition}
See \Cref{fig:chessboard} for examples and counterexamples.

\begin{figure*}[htp!]
  \centering
  \includegraphics[width=.7\linewidth]{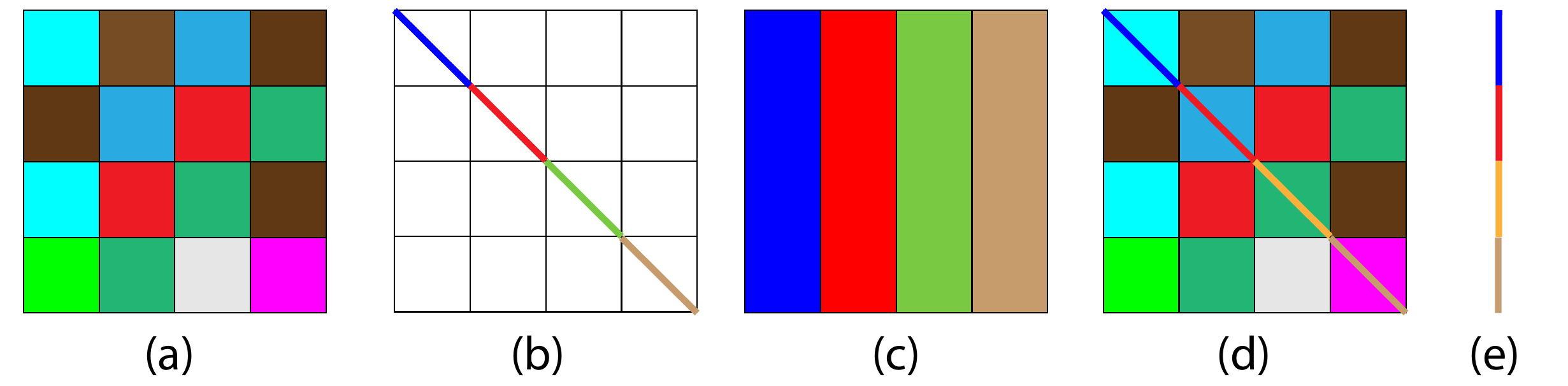}
\caption{(a) and (c) has chessboard pattern. (e) has 1D chessboard pattern. (d) does not has the chessboard pattern. (b) is of form $\dg{\inducedEf}$ and also does not have chessboard pattern, but in the case of IGN approximating Spectral GNN, (b) is represented in the form of c) via a linear equivariant layers of $2$-IGN.}
\label{fig:chessboard}
\end{figure*}

\begin{lemma}[Property of $S_n$ and $S_U$]\label{lem:property-snsx}
We list some properties of sampling operator $S_U$ and $S_n$
\begin{enumerate}
\item $S_U \circ \sigma = \sigma \circ S_U$. Similar result holds for $S_n$ as well.

\item $\|S_Uf_{\text{1d}}\| \leqslant  \| f_{\text{1d}} \|_{L_\infty}$ 

where $f_{\text{1d}}: [0,1] \rightarrow \mb{R}$. Similar result holds for $f_{\text{2d}}: [0,1]^2 \rightarrow \mb{R}$ and $S_n$ as well.

\end{enumerate}
\end{lemma}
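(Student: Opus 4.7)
The plan is to verify both properties by unwinding the definitions of $S_U$ and $S_n$ and then doing a direct pointwise/componentwise computation. The two claims really only depend on the structural fact that $S_U$ (resp.\ $S_n$) takes a function on $[0,1]^k$, evaluates it at sampled (resp.\ deterministic) grid points, and multiplies by a fixed scalar $(1/\sqrt{n})^k$. Since everything is finite and coordinatewise, no tools beyond the definitions and standard $\ell_2$-vs-$L_\infty$ comparisons will be needed.

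For part (1), I would write out both compositions at a single multi-index. Evaluating $(S_U\circ\sigma)(f)$ at $(i_1,\dots,i_k)$ gives $(1/\sqrt{n})^k\sigma\!\big(f(u_{(i_1)},\dots,u_{(i_k)})\big)$, while $(\sigma\circ S_U)(f)$ at the same index gives $\sigma\!\big((1/\sqrt{n})^k f(u_{(i_1)},\dots,u_{(i_k)})\big)$. These agree whenever $\sigma$ commutes with multiplication by a positive scalar, which is the standard case of ReLU (or any positively homogeneous nonlinearity) used in the IGN architecture. The argument for $S_n$ is identical after substituting the deterministic grid $\{i/n\}$ for the order statistics $\{u_{(i)}\}$.

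For part (2), the bound follows from a direct computation. For $f_{\text{1d}}\colon[0,1]\to\mathbb{R}$,
\begin{align*}
\|S_U f_{\text{1d}}\|_2^2 \;=\; \sum_{i=1}^n \frac{1}{n}\, f_{\text{1d}}(u_{(i)})^2 \;\leqslant\; \frac{1}{n}\sum_{i=1}^n \Linf{f_{\text{1d}}}^{\,2} \;=\; \Linf{f_{\text{1d}}}^{\,2},
\end{align*}
and taking square roots yields the stated inequality. For $f_{\text{2d}}\colon[0,1]^2\to\mathbb{R}$ the same computation with a double sum and prefactor $(1/\sqrt{n})^2 = 1/n$ gives $\|S_U f_{\text{2d}}\|_2 \leqslant \Linf{f_{\text{2d}}}$, and the $S_n$ versions are obtained by the same computation on the fixed grid.

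There is no real obstacle here; the only point that warrants a comment is part (1), since Lipschitz activations need not commute with scalar multiplication in general. However, for the positively homogeneous $\sigma$ used throughout (most notably ReLU), the scalar prefactor from $S_U$ can be pulled through $\sigma$ at no cost, so the commutation identity holds exactly as stated. The remainder of the lemma reduces to bookkeeping on the normalization factor and an application of the trivial bound $|f(u)|\leqslant \Linf{f}$ at each sample point.
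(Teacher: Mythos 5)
Your proposal is correct, and the paper does not supply a proof for this lemma, so there is no competing argument to compare against — your unwinding of the definitions of $S_U$ and $S_n$ followed by a pointwise computation is exactly what the statement calls for. The part (2) bound is a one-line Jensen-style estimate that matches the normalization $(1/\sqrt{n})^k$ precisely, and the $f_{\text{2d}}$ and $S_n$ cases are indeed the same computation with a different index set.

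The one point worth flagging — which you already caught — is that part (1) does \emph{not} follow from the paper's stated assumption AS4 alone. AS4 only requires $\sigma$ to be normalized Lipschitz with $\sigma(0)=0$; this is insufficient, because $S_U$ introduces a scalar prefactor $(1/\sqrt{n})^k$, and the identity $(S_U\circ\sigma)(f)=(\sigma\circ S_U)(f)$ reduces to $\sigma(cx)=c\,\sigma(x)$ for $c=(1/\sqrt{n})^k$, which is positive homogeneity. This holds for ReLU (the standard choice in the IGN literature and in Keriven et al.\ on which this setup is modeled), but not, e.g., for a sigmoid-like activation that satisfies AS4. Since the paper states the lemma without proof and later invokes the related identity with only the comment that ``$\sigma$ acts on input pointwise'' (Lemma~\ref{lem:property-of-T}.2), your note that an additional homogeneity hypothesis is implicitly required is a genuine and useful observation, not a gap in your own argument.
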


\begin{lemma}\label{lem:inducedW-converges}
Let $W$ be $[0,1]^2\rightarrow \mb{R}$ and $X$ be $[0,1]\rightarrow \mb{R}$.
If $W$ is \lip{}, $\|\inducedW - W \|_{L_\infty}$ converges to 0 in probability. If $X$ is \lip{}, $\|\inducedX - X \|_{L_\infty}$ converges to 0 in probability.

\end{lemma}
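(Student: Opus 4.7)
The plan is to reduce both statements to the classical fact that the maximum spacing of $n$ i.i.d.\ uniform samples on $[0,1]$ vanishes in probability, and then apply the Lipschitz assumption pointwise on each interval of the induced partition.

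First, recall the construction: given the ordered sample $0 = u_0 \leqslant u_1 \leqslant \cdots \leqslant u_n \leqslant u_{n+1} = 1$, we set $I_i = (u_i, u_{i+1}]$ and define $\inducedW(u,v) = W(u_i, u_j)$ for $(u,v) \in I_i \times I_j$, and analogously $\inducedX(u) = X(u_i)$ for $u \in I_i$. Writing $D_i := u_{i+1} - u_i$, for any $(u,v) \in I_i \times I_j$ the Lipschitz hypothesis on $W$ (Assumption~\ref{as:graphon-lip}) gives
\begin{equation*}
|\inducedW(u,v) - W(u,v)| = |W(u_i,u_j) - W(u,v)| \leqslant A_1(|u - u_i| + |v - v_j|) \leqslant A_1(D_i + D_j).
\end{equation*}
Taking the supremum over $(u,v)$ yields $\Linf{\inducedW - W} \leqslant 2A_1 \max_i D_i$. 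The identical argument using Assumption~\ref{as:signal-lip} gives $\Linf{\inducedX - X} \leqslant A_3 \max_i D_i$.

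It therefore suffices to show $\max_i D_i \to 0$ in probability. This is standard: a union bound gives $\Pr{\max_i D_i > t} \leqslant (n+1)(1-t)^{n-1}$ for $t \in (0,1)$, since each individual spacing $D_i$ follows a $\mathrm{Beta}(1, n-1)$ distribution by Lemma~\ref{lemma:lengh-distribution}. Choosing $t = t_n = 2 \log(n)/n$ gives
\begin{equation*}
\Pr{\max_i D_i > t_n} \leqslant (n+1)\exp\!\big(-(n-1)\, t_n\big) = (n+1)\exp\!\big(-(2(n-1)/n)\log n\big) \xrightarrow{n \to \infty} 0,
\end{equation*}
so $\max_i D_i$ converges to $0$ in probability. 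Combining with the Lipschitz bounds above yields the two claimed convergences.

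The main (though minor) obstacle is handling the boundary intervals $I_0$ and $I_n$ that include the endpoints $0$ and $1$ and whose lengths are not quite $\mathrm{Beta}(1, n-1)$; this is absorbed either by inflating the union bound by a constant factor or by noting that the maximum of all $n+1$ spacings still has the same $\tilde O(\log n / n)$ high-probability tail. No deep concentration inequality is needed beyond the elementary tail bound above.
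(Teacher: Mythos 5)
Your proof is correct and follows the same core strategy as the paper: use the Lipschitz property to reduce $\|\inducedW - W\|_{L_\infty}$ to a constant times $\max_i D_i$, then show the maximum spacing vanishes in probability. Where you differ is in the probabilistic step. The paper quotes the exact distribution of the maximum spacing from the spacings literature, computes $E[\max_i D_i] = \Theta(\log n / n)$, and applies Markov's inequality. You instead use only the marginal Beta tail of a single spacing (which the paper already records in Lemma~\ref{lemma:lengh-distribution}) and a union bound over all $O(n)$ spacings, choosing $t_n = 2\log n / n$. Your route is more self-contained — it avoids citing the distribution of the order-statistic maximum — and it actually delivers a cleaner, explicit high-probability tail rather than just first-moment control, at essentially the same $\tilde O(\log n / n)$ rate. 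Your closing remark about the boundary intervals is well taken; the paper glosses over the same bookkeeping, and your fix (absorbing it into the union bound constant) is the standard one.
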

\begin{proof}
Without loss of generality, we only prove the case for $W$. By the \lip{} condition of $W$, if suffices to bound the $Z_n\coloneqq\text{max}_{i=1}^{n}D_i$ where $D_i$ is the length of i-th interval $|u_{(i)} - u_{(i-1)}|$. Characterizing the distribution of the length of largest interval is a well studied problem \citep{renyi1953theory,pyke1965spacings,holst1980lengths}.  It can be shown that $Z_n$ follows  $P\left(Z_{n} \leqslant  x\right)=\sum_{j=0}^{n+1}\left(\begin{array}{c}n+1 \\ j\end{array}\right)(-1)^{j}(1-j x)_{+}^{n}$ with the expectation $E(Z_k) = \frac{1}{n+1}\sum_{i=1}^{n+1}\frac{1}{i} = \Theta(\frac{\log n}{n})$. By Markov inequality, we conclude that $\|\inducedW - W \|_{L_\infty}$ converges to 0 in probability. %

\end{proof}

\begin{lemma}\label{lem:inducedEW-converges}
Let $W$ be $[0,1]^2\rightarrow \mb{R}$ and $X$ be $[0,1]\rightarrow \mb{R}$.
If $W$ is \lip{}, $\| \inducedEW - W\|_{L_\infty}$ converges to 0 almost surely. If $X$ is \lip{}, $\| \inducedEX - X\|_{L_\infty}$ converges to 0 almost surely.
\end{lemma}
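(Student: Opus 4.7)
\textbf{Proof plan for Lemma \ref{lem:inducedEW-converges}.} I will prove the 2D case for $W$; the 1D case for $X$ follows by an identical (in fact simpler) argument.

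First, unpack the definition of $\inducedEW$: on the block $I_i \times I_j$ with $I_i = [\tfrac{i-1}{n}, \tfrac{i}{n}]$, the graphon $\inducedEW$ is the constant $W(u_{(i)}, u_{(j)})$. Hence for any $(u,v) \in I_i \times I_j$, the Lipschitz assumption AS\ref{as:graphon-lip} gives
\begin{equation*}
|\inducedEW(u,v) - W(u,v)| = |W(u_{(i)}, u_{(j)}) - W(u,v)| \leqslant A_1\bigl(|u_{(i)} - u| + |u_{(j)} - v|\bigr).
\end{equation*}
Since $u \in I_i$ we have $|u_{(i)} - u| \leqslant |u_{(i)} - \tfrac{i}{n}| + \tfrac{1}{n}$, and similarly for $v$. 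Setting $M_n := \max_{1 \leqslant i \leqslant n}|u_{(i)} - \tfrac{i}{n}|$ and taking the supremum over all $(u,v) \in [0,1]^2$ yields
\begin{equation*}
\Linf{\inducedEW - W} \leqslant 2 A_1 \bigl(M_n + \tfrac{1}{n}\bigr).
\end{equation*}

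Second, I will argue $M_n \to 0$ almost surely. This is a classical fact about uniform order statistics: letting $F_n$ be the empirical CDF of the iid samples $u_1, \ldots, u_n \sim \mathrm{Uniform}[0,1]$, one has $M_n \leqslant \sup_{t \in [0,1]}|F_n(t) - t| + \tfrac{1}{n}$, because $F_n$ jumps from $\tfrac{i-1}{n}$ to $\tfrac{i}{n}$ at $u_{(i)}$. The Glivenko--Cantelli theorem then gives $\sup_t |F_n(t) - t| \to 0$ almost surely; alternatively the Dvoretzky--Kiefer--Wolfowitz inequality gives $P(\sup_t |F_n(t) - t| > \epsilon) \leqslant 2 e^{-2n\epsilon^2}$, which by summability and Borel--Cantelli yields the almost sure bound $M_n = O(\sqrt{\log n / n})$.

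Combining the two steps, $\Linf{\inducedEW - W} \to 0$ almost surely, proving the 2D claim. The 1D claim for $X$ uses only one coordinate and is completed by the same argument with $A_1$ replaced by the 1D Lipschitz constant $A_3$ of AS\ref{as:signal-lip}.

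There is no substantive obstacle: the argument is a direct combination of Lipschitz continuity with classical concentration of uniform order statistics. The only subtle point, as contrasted with the earlier Lemmas \ref{lem:x-diff} and \ref{lem:w-diff} for $\inducedW$, is that the \emph{equal-sized} block structure of $\inducedEW$ removes any dependence on the random spacings $D_i$ inside each block, leaving only the deterministic $\tfrac{1}{n}$ plus the concentration term $M_n$. This is precisely why convergence here is almost sure (and uniform), whereas for $\inducedW$ one obtains only convergence in probability under the weaker $L_2$-type \textnewnorm{}.
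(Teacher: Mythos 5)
Your proof is correct and takes essentially the same route as the paper: both reduce to bounding $\max_i |u_{(i)} - i/n|$ and invoke Glivenko--Cantelli for almost-sure convergence. You are slightly more careful in explicitly separating the deterministic within-block $1/n$ term from the random $M_n$ term (the paper treats the within-block variation implicitly via ``piecewise constant plus Lipschitz''), and your DKW remark is a nice optional strengthening, but the substance is identical.
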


\begin{proof}
As $\inducedEW$ is a piecewise constant graphon and $W$ is \lip{}, we only need to examine $\text{max}_{i, j}\|(W - \inducedEW)(\frac{i}{n}, \frac{j}{n})\|$.

It is easy to see that $(W - \inducedEW)(\frac{i}{n}, \frac{j}{n}) = W(\frac{i}{n}, \frac{j}{n}) - W(u_{(i)}, u_{(j)})$ where $u_{(i)}$ stands for the i-th smallest random variable from uniform \iid samples from $[0,1]$. By the \lip{} condition of $W$, if suffices to bound $\|\frac{i}{n} - u_{(i)} \| + \|\frac{j}{n} - u_{(j)} \|$.
Glivenko-Cantelli theorem tells us that the $L_\infty$ of empirical distribution $F_n$ and  cumulative distribution function $F$ converges to 0 almost surely, i.e., $\text{sup}_{u\in [0,1]} |F(u) - F_n(u)| \rightarrow 0$ almost surely. Since $\text{max}_i \|u_{(i)} - \frac{i}{n} \| = \text{sup}_{u\in \{u_{(1)}, ..., u_{(n)}\}} |F(u) - F_n(u)|  \leqslant  \text{sup}_{u\in [0,1]} |F(u) - F_n(u)|$ when $F(u)=u$ (cdf of uniform distribution), we conclude that $\| \inducedEW - W\|_{L_\infty}$ converges to 0 almost surely.

\end{proof}

We also need a lemma on the property of the linear equivariant layers $T$.
\begin{lemma}[Property of $T_c$ and $\sigma$]
\label{lem:property-of-T}
Let $\sigma$ be nonlinear layer. Let $T_c$ be a linear combination of elements of basis of the space of linear equivariant layers of cIGN, with coefficients upper bounded. We have the following property about $T_c$ and $\sigma$
\begin{enumerate}
\item If $W$ is \lip{}, $T_c(W)$ is piecewise \lip{} on diagonal and off-diagonal. Same statement holds for $\Phi_c(W)$.

\item $S_n \circ \sigma (\inducedEW) = \sigma \circ S_n (\inducedEW)$.
\end{enumerate}
\end{lemma}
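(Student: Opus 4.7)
The plan is to handle the two claims separately, with the heavy lifting concentrated in property 1.

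For property 1, since $T_c = \sum_{\gamma} c_\gamma T_\gamma$ is a bounded linear combination of basis elements, and ``piecewise Lipschitz on the diagonal $D = \{(u,u)\}$ and off-diagonal $[0,1]^2 \setminus D$'' is preserved under bounded linear combinations (a sum of Lipschitz functions on a common piece remains Lipschitz), it suffices to verify the claim for each of the 15 basis elements $T_\gamma$ listed in \Cref{tab:R2-R2}. I would split these into two groups. The first group contains operations whose output is supported only on the diagonal (operations 3, the diagonal variants among 4--9, and operations 11, 13). For each of these, the off-diagonal restriction is identically zero, hence trivially Lipschitz; on the diagonal, the value is either $W(u,u)$, $\int W(u,v)\,dv$, $\int W(v,u)\,dv$, or the further reduction of such integrals, and each of these is Lipschitz in $u$ by the standard estimate $|\int (W(u_1,v)-W(u_2,v))dv| \leqslant A_1|u_1-u_2|$. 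The second group contains operations whose output is defined on all of $[0,1]^2$ (operations 1--2, the matrix variants in 4--9, operations 10, 12, 14--15); a direct check shows each of these is jointly Lipschitz on $[0,1]^2$. For example, operation 14 gives $T(W)(u,v)=W(u,u)$, which is $2A_1$-Lipschitz in $(u,v)$ via $|W(u_1,u_1)-W(u_2,u_2)|\leqslant 2A_1|u_1-u_2|$. In each case the output is piecewise Lipschitz on diagonal and off-diagonal.

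For the statement about $\Phi_c(W)$, I would proceed by induction on the number of layers. The base input $W$ is Lipschitz (hence piecewise Lipschitz with trivial partition). At each layer, $T_c$ maps a piecewise Lipschitz graphon to a piecewise Lipschitz graphon by the case analysis above (the pieces remain diagonal/off-diagonal since every basis operator respects this stratification). The activation $\sigma$ is applied pointwise, and by AS\ref{as:activation-lip} it is $1$-Lipschitz, so if $f$ is $L$-Lipschitz on a piece $P$ then $|\sigma(f(u))-\sigma(f(u'))|\leqslant L|u-u'|$ on $P$; this preserves the piecewise Lipschitz property through every nonlinearity as well.

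For property 2, the identity $S_n\circ \sigma(\inducedEW)=\sigma\circ S_n(\inducedEW)$ is an immediate consequence of \Cref{lem:property-snsx}, which already states $S_n\circ \sigma=\sigma\circ S_n$ as pointwise operators. The structure of $\inducedEW$ plays no role in this step beyond membership in the domain of $S_n$.

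The main obstacle is not conceptual but bookkeeping: the first property requires going through the basis operators of $\LE_{2,2}$ and checking each separately. The mild subtlety is confirming that the ``piecewise Lipschitz on diagonal vs.\ off-diagonal'' stratification is invariant under all $15$ basis elements, so that linear combinations keep the same two pieces and do not fragment further; this is ensured because every basis operator either writes into $D$ only, writes into all of $[0,1]^2$ with a jointly Lipschitz function, or produces zero outside a jointly Lipschitz pattern.
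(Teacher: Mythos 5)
Your proof follows essentially the same approach as the paper's: a case analysis of the $\LE_{2,2}$ basis operators in Table~\ref{tab:R2-R2}, observing that each either writes a Lipschitz function only onto the diagonal or writes a globally Lipschitz function onto all of $[0,1]^2$, and that $\sigma$ preserves Lipschitzness while $S_n$ commutes with pointwise nonlinearities as in Lemma~\ref{lem:property-snsx}. The paper's own proof is substantially terser --- it just points at the table and asserts the conclusion --- so your explicit two-group partition of the fifteen operators, together with the layer-by-layer induction for $\Phi_c$ and the check that the diagonal/off-diagonal stratification is preserved through every layer, is precisely the detail the paper leaves implicit.
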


\begin{proof}
We prove two statements one by one.
\begin{enumerate}
\item
We examine the linear equivariant operators from $\mb{R}^{[0,1]^2}$ to $\mb{R}^{[0,1]^2}$ in \Cref{tab:R2-R2}. There are some operations such as ``average of rows replicated on diagonal'' will destroy the \lip{} condition of $T_c(W)$ but $T_c(W)$ will still be piecewise \lip{} on diagonal and off-diagonal.
Since $\sigma$ will preserve the \lip{}ness, $\Phi_c(W)$ is piecewise \lip{} on diagonal and off-diagonal.

\item This is easy to see as $\sigma$ acts on input pointwise.
\end{enumerate}
\end{proof}

\begin{lemma}\label{lem:sxsn-sampling}
Let $W$ be $[0,1]^2\rightarrow \mb{R}$
\begin{enumerate}
\item If $W$ is \lip{},  $\| S_UW - S_nW\|$ converges to 0 almost surely. Similarly, if $X$ is \lip{},  $\| S_U\dg{X} - S_n\dg{X}\|$ converges to 0 almost surely.

\item If $W$ is piecewise \lip{} on $S_1$ and $S_2$ where $S_1$ is the diagonal and $S_2$ is off-diagonal,  then $\| S_UW - S_nW\|$ converges to 0 almost surely. %

\end{enumerate}
\end{lemma}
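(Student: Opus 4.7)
The plan is to handle both parts by reducing the Frobenius-type distance between the two sampled matrices to a uniform control on how far the uniform order statistics $u_{(i)}$ are from the regular grid points $i/n$, and then to apply (piecewise) Lipschitzness of $W$.

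For part (1), I would first write
\[
\| S_U W - S_n W \|_2^2 \;=\; \frac{1}{n^2}\sum_{i,j=1}^{n}\bigl( W(u_{(i)}, u_{(j)}) - W(i/n, j/n)\bigr)^2 .
\]
Let $\epsilon_n := \max_{i}|u_{(i)} - i/n|$. By the Glivenko--Cantelli theorem (already invoked in the proof of \Cref{lem:inducedEW-converges}), $\epsilon_n \to 0$ almost surely. Under AS\ref{as:graphon-lip}, for every $(i,j)$ we have
\[
\bigl| W(u_{(i)}, u_{(j)}) - W(i/n, j/n)\bigr| \;\leqslant\; A_1\bigl(|u_{(i)}-i/n| + |u_{(j)}-j/n|\bigr) \;\leqslant\; 2A_1 \epsilon_n,
\]
so $\| S_U W - S_n W\|_2 \leqslant 2A_1 \epsilon_n \to 0$ almost surely. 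The 1D analogue for $X$ is identical, replacing the double sum by a single sum and using the 1D Lipschitz constant.

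For part (2), the new difficulty is that the global Lipschitz bound on $W$ is no longer available: $W$ is only Lipschitz when restricted to $S_1=\{u=v\}$ or to $S_2=\{u\neq v\}$, so the simple chain of inequalities above can fail for indices where the straight line between $(u_{(i)}, u_{(j)})$ and $(i/n, j/n)$ has to cross the diagonal. The idea is to split the sum into a ``safe'' part and a ``dangerous'' part. Safe pairs are those with $i=j$ (both sampling points on the diagonal, so we can use Lipschitzness of $W|_{S_1}$) and those with $|i-j| \geqslant 2$: for the latter, once $\epsilon_n < 1/(2n)$ (which happens eventually almost surely, since $\epsilon_n = O(\log n / n)$ from the spacing bound used in \Cref{lem:inducedEW-converges}), both $(u_{(i)}, u_{(j)})$ and $(i/n,j/n)$ lie strictly on the same connected component of $S_2$ (either above or below the diagonal), and the Lipschitz bound on $W|_{S_2}$ applies. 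Dangerous pairs are those with $|i-j|=1$, of which there are at most $2n$; each contributes at most $\|W\|_{L_\infty}^2/n^2$, so their total contribution to $\|S_U W - S_n W\|_2^2$ is $O(1/n)$.

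Combining, $\| S_U W - S_n W\|_2^2 \leqslant C A_1^2 \epsilon_n^2 + O(1/n) \to 0$ almost surely. The main obstacle is the off-diagonal/diagonal straddling issue addressed above; once we guarantee (via the almost-sure rate $\epsilon_n = O(\log n/n)$) that dangerous indices are confined to an $O(n)$-sized near-diagonal band, the rest of the argument is a straightforward application of the case-wise Lipschitz bounds.
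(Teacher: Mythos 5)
Your part~(1) matches the paper's argument exactly: control the entrywise deviation by $\max_{i,j}\bigl|W(u_{(i)},u_{(j)})-W(i/n,j/n)\bigr|\leqslant 2A_1\,\max_i|u_{(i)}-i/n|$ and invoke Glivenko--Cantelli.

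In part~(2) you introduce a complication that the paper avoids, and the way you resolve it contains a real error. The supposed danger --- that $(u_{(i)},u_{(j)})$ and $(i/n,j/n)$ could lie on opposite sides of the diagonal --- cannot occur. Since the $u_{(i)}$ are \emph{order statistics}, $u_{(i)}<u_{(j)}$ if and only if $i<j$, which is if and only if $i/n<j/n$; so for every $i\neq j$ both points lie in the same open triangle $\{u<v\}$ or $\{u>v\}$, and for $i=j$ both lie on the diagonal. Each of these three regions is convex, so the (piecewise) Lipschitz bound applies directly, exactly as the paper's proof does. There is no $|i-j|=1$ ``dangerous'' band to excise.

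More importantly, the escape hatch you propose for the dangerous band does not work: you claim $\epsilon_n:=\max_i|u_{(i)}-i/n| = O(\log n/n)$, citing the spacing bound from \Cref{lem:inducedW-converges}, and hence $\epsilon_n<1/(2n)$ eventually almost surely. That conflates two different quantities. The spacing bound controls $Z_n=\max_i(u_{(i)}-u_{(i-1)})=\Theta(\log n/n)$, whereas $\epsilon_n$ is the Kolmogorov--Smirnov statistic, whose almost-sure magnitude is $\Theta\bigl(\sqrt{\log\log n/n}\bigr)$, much larger than $1/n$. So $\epsilon_n<1/(2n)$ essentially never holds for large $n$, and the argument as written would leave the $|i-j|\geqslant 2$ case uncontrolled. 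You are saved only because, as noted above, that case needs no special treatment.

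So: correct statement, correct part~(1), but part~(2) should simply drop the $|i-j|$ case split and observe that order statistics and grid points never straddle the diagonal; the $O(\log n/n)$ rate for $\epsilon_n$ must go.
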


\begin{proof}
Since the case of $X$ is essentially the same with that of $W$, we only prove the case of $W$.
\begin{enumerate}
\item As $n\| S_UW - S_nW\|_{\infty} \geq \| S_UW - S_nW\|$, it suffices to prove that $n\| S_UW - S_nW\|_{\infty} = \text{max}_{i, j}|W(u_{(i)}, u_{(j)}) - W(\frac{i}{n}, \frac{j}{n})|$  converges to 0 almost surely. Similar to \Cref{lem:inducedEW-converges}, using \lip{} condition of $W$ and Glivenko-Cantelli theorem concludes the proof.

\item This statement is stronger than the one above. The proof of the last item can be adapted here. As $W$ is $A_1$ \lip{} on off-diagonal region and $A_2$ \lip{} on diagonal,
\begin{align*}
 n\| S_UW - S_nW\|_{\infty} & =
  \text{max}_{i, j}\left|W(u_{(i)}, u_{(j)}) - W(\frac{i}{n}, \frac{j}{n})\right| \\
  = & \max\left(\text{max}_{i\neq j} \left|W(u_{(i)}, u_{(j)}) - W(\frac{i}{n}, \frac{j}{n})\right|, \text{max}_{i = j}\left|W(u_{(i)}, u_{(j)}) - W(\frac{i}{n}, \frac{j}{n})\right|\right).
\end{align*}
Using \lip{} condition on diagonal and off-diagonal part of $W$ and Glivenko-Cantelli theorem concludes the proof.
\end{enumerate}
\end{proof}
With all lemmas stated, we are ready to prove the main theorem.

\convergenceafterEM*
\begin{proof}

Using the triangle inequality %

\begin{align*}
& \MSE_U(\Phi_c \left([W, \dg{X}] \right), \Phi_d\left([\widehat{W}_{n \times n}, \dg{\widetilde{x_n}}]\right)) \\
& = \left\|S_U \Phi_c\left([W, \dg{X}]\right)-\frac{1}{\sqrt{n}}\Phi_d \left([\widehat{W}_{n \times n}, \dg{\widetilde{x_n}}]\right) \right\| \\
& = \|S_U \Phi_c\left([W, \dg{X}]\right)- S_U\Phi_c\left([\inducedW, \dg{\inducedX}]\right) + S_U\Phi_c\left([\inducedW, \dg{\inducedX}]\right) - \Phi_dS_U([\inducedW, \dg{\widetilde{x_n}}])  \\
&+ \Phi_dS_U([\inducedW, \dg{\inducedX}]) - \frac{1}{\sqrt{n}}\Phi_d ([\widehat{W}_{n \times n}, \dg{\inducedX}])\|\\
&\leqslant  \underbrace{\left\|S_U \Phi_c\left([W, \dg{X}]\right)- S_U\Phi_c\left([\inducedW, \dg{\inducedX}]\right)\right\|}_\text{First term: discretization error} +
\underbrace{\left\|S_U\Phi_c\left([\inducedW, \dg{\inducedX}]\right) - \Phi_dS_U([\inducedW, \dg{\inducedX}])\right\|}_\text{Second term: sampling error} \\
& + \underbrace{\left\|\Phi_dS_U([\inducedW, \dg{\inducedX}]) - \frac{1}{\sqrt{n}}\Phi_d \left([\widehat{W}_{n \times n}, \dg{\widetilde{x_n}}]\right)\right\|}_\text{Third term: estimation error} \numberthis \label{equ:threeterms} %
\end{align*}
The three terms measure the different sources of error. The first term is concerned with the discretization error. The second term concerns the sampling error from the randomness of $U$. This term will vanish if we consider only $S_n$ instead of $S_U$ for \smallIGN{}. The third term concerns the edge probability estimation error.

For the first term, it is similar to the sketch in \Cref{subsec:smallign-convergnce}. $\|S_U \Phi_c([W, \dg{X}]) - S_U \Phi_c ([\inducedW, \dg{\inducedX}]) \| = \| S_U(\Phi_c([W, \dg{X}]) - \Phi_c([\inducedW, \dg{\inducedX}]))\|$, if suffices to upper bound $\|\Phi_c([W, \dg{X}])-\Phi_c([\inducedW, \dg{\inducedX}])\|_{L_\infty}$ according to property of $S_U$ in \Cref{lem:property-snsx}. Since $\| \Phi_c ([W, \dg{X}]) - \Phi_c ([\inducedW, \dg{\inducedX}])\|_{L_\infty} \leqslant  C (\|W - \inducedW\|_{L_\infty} + \|\dg{X} - \dg{\inducedX}\|_{L_\infty})$ by \Cref{prop:Phi-stable}, and $\|W - \inducedW\|_{L_\infty}$ converges to 0 in probability according to \Cref{lem:inducedW-converges}, we conclude that the first term will converges to 0 in probability.

For the third term $\| \Phi_d S_U ([\inducedW, \dg{\inducedX}]) - \frac{1}{\sqrt{n}}\Phi_d ([\widehat{W}_{n \times n}, \dg{\widetilde{x_n}}])\| $$=\| \frac{1}{\sqrt{n}}( \Phi_d ([\sampleW, \dg{\widetilde{x_n}}]) - \Phi_d ([\widehat{W}_{n \times n}, \dg{\widetilde{x_n}}]) )\| $$=\newnorm{\Phi_d([\sampleW, \dg{\widetilde{x_n}}]) - \Phi_d([\widehat{W}_{n \times n}, \dg{\widetilde{x_n}}])}$,

it suffices to control the $\newnorm{[W_{n \times n}, \dg{\widetilde{x_n}}]-[\widehat{W}_{n \times n}, \dg{\widetilde{x_n}}]} = \frac{1}{n}\| \sampleW-\widehat{W}_{n\times n} \|_2 \leqslant  \|\sampleW-\widehat{W}_{n\times n}\|_{2, \infty}$, which will also goes to 0 in probability as $n$ increases according to the statistical guarantee of edge probability estimation of neighborhood smoothing algorithm \citep{zhang2015estimating}, stated in \Cref{thm:graphon-estimation} in \Cref{app:third-party}. 
Therefore by \Cref{prop:Phi-stable}, the third term also goes to 0 in probability.

Therefore the rest work is to control the second term $\|S_U \Phi_c\left([\inducedW, \dg{\inducedX}]\right) - \Phi_d S_U\left([\inducedW, \dg{\inducedX}]\right)\|$. Again, we use the triangle inequality
\begin{align*}
& \tn{Second term} \\
& = \left\|S_U \Phi_c\left([\inducedW, \dg{\inducedX}]\right) - \Phi_d S_U\left([\inducedW, \dg{\inducedX}]\right)\right\| \\
& \leqslant  \left\|S_U \Phi_c\left([\inducedW, \dg{\inducedX}]\right) - \sampleE \Phi_c\left([\inducedEW, \dg{\inducedEX}]\right) \right\| + \left\| \sampleE \Phi_c\left([\inducedEW, \dg{\inducedEX}]\right) - \Phi_d S_U\left([\inducedW, \dg{\inducedX}]\right)\right\| \\
& =  \left\|S_U \Phi_c\left([\inducedW, \dg{\inducedX}]\right) - \sampleE \Phi_c\left([\inducedEW, \dg{\inducedEX}]\right) \right\| + \left\| \sampleE \Phi_c\left([\inducedEW, \dg{\inducedEX}]\right) - \Phi_d \sampleE([\inducedEW, \dg{\inducedEX}])\right\|\\
& = \left\|S_U \Phi_c\left([\inducedW, \dg{\inducedX}]\right) - \sampleE \Phi_c\left([\inducedEW, \dg{\inducedEX}]\right) \right\|\\
& \leqslant  \left\|S_U \Phi_c\left([\inducedW, \dg{\inducedX}]\right) - S_U \Phi_c\left([\inducedEW, \dg{\inducedEX}]\right)\right\| + \left\|S_U \Phi_c\left([\inducedEW, \dg{\inducedEX}]\right) - \sampleE \Phi_c\left([\inducedEW, \dg{\inducedEX}]\right)  \right\| \\
& = \underbrace{\left\|S_U \left(\Phi_c ([\inducedW, \dg{\inducedX}]\right) - \Phi_c \left([\inducedEW, \dg{\inducedEX}]\right)\right\|}_\text{term $a$} +
\underbrace{\left\|(S_U - \sampleE)\Phi_c\left([\inducedEW, \dg{\inducedEX}]\right) \right\|}_\text{term $b$}
\end{align*}

The second equality holds because $S_U([\inducedW, \dg{\inducedX}]) = S_n([\inducedEW, \inducedEX])$ by definition of $\inducedEW$ and \smallIGN{} (See \Cref{remark:difficulty} for more discussion).  The third equality holds by the definition of \smallIGN{}.
We will bound the term a) $\|S_U (\Phi_c ([\inducedW, \dg{\inducedX}]) - \Phi_c ([\inducedEW, \inducedEX]))\|$ and b) $\|(S_U - \sampleE)\Phi_c([\inducedEW, \dg{\inducedEX}]) \|$ next.

For term a) $\|S_U (\Phi_c ([\inducedW, \dg{\inducedX}]) - \Phi_c ([\inducedEW, \inducedEX]))\|$, if suffices to prove that $\|\Phi_c ([\inducedW, \dg{\inducedX}]) -\Phi_c ([\inducedEW, \inducedEX]))\|_{L_\infty}$ converges to 0 in probability.
According to \Cref{prop:Phi-stable}, it suffices to bound the $\| [\inducedW, \inducedX] - [\inducedEW, \inducedEX] \|_{L_\infty}$. Because $[\inducedW, \inducedX] - [\inducedEW, \inducedEX] \|_{L_\infty} =  \|\inducedW - \inducedEW \|_{L_\infty} + \|\dg{\inducedX} - \dg{\inducedEX} \|_{L_\infty}) \leqslant  \|\inducedW - W \|_{L_\infty} + \|\inducedEW - W \|_{L_\infty} + \|\dg{\inducedX} - \dg{X} \|_{L_\infty} + \|\dg{\inducedEX} - \dg{X} \|_{L_\infty}$, we only need to upper bound $\|\inducedW - W \|_{L_\infty}$, $\|\inducedEW - W \|_{L_\infty}$, $\|\dg{\inducedX} - \dg{X} \|_{L_\infty})$ and $\|\dg{\inducedEX} - \dg{X} \|_{L_\infty})$, which are proved by \Cref{lem:inducedW-converges} and \Cref{lem:inducedEW-converges} respectively.

For term b) $\|(S_U - \sampleE)\Phi_c\left([\inducedEW, \dg{\inducedEX}]\right) \|$
\begin{align*}
&\left\|(S_U - \sampleE)\Phi_c\left([\inducedEW, \dg{\inducedEX}]\right) \right\| \\
&= \left\|(S_U \Phi_c\left([\inducedEW, \dg{\inducedEX}]\right) - \sampleE \Phi_c\left([\inducedEW, \dg{\inducedEX}]\right)\right\|\\
& \leqslant  \left\|(S_U \Phi_c\left([\inducedEW, \dg{\inducedEX}]\right) - S_U \Phi_c\left([W, \dg{X}]\right) \right\| + \left\|S_U \Phi_c\left([W, \dg{X}]\right) - S_n \Phi_c\left([W, \dg{X}]\right)\right\|  \\
& + \left\|S_n \Phi_c\left([W, \dg{X}]\right) - \sampleE \Phi_c\left([\inducedEW, \dg{\inducedEX}]\right)\right\|\\
& = \left\|(S_U (\Phi_c\left([\inducedEW, \dg{\inducedEX}]\right)-\Phi_c ([W, \dg{X}])) \right\| + \left\|S_U \Phi_c\left([W, \dg{X}]\right) - S_n \Phi_c\left([W, \dg{X}]\right)\right\| \\
& + \left\|\sampleE (\Phi_c\left([\inducedEW, \dg{\inducedEX}]\right)-\Phi_c ([W, \dg{X}]))\right\|
\end{align*}
For the first and last term, by the property of $S_U, S_n$ and $\Phi_c$, it suffices to bound $\|W-\inducedEW \|_{L_\infty}$ and $\|\dg{X}-\dg{\inducedEX} \|_{L_\infty}$. Without loss of generality, We only prove the case for $W$. As $\|W-\inducedEW \|_{L_\infty}$ converges to 0 almost surely by \Cref{lem:inducedEW-converges}, we conclude that the first and last term converges to 0 almost surely (therefore in probability).
For the second term $\|S_U \Phi_c\left([W, \dg{X}]\right) - S_n \Phi_c\left([W, \dg{X}]\right)\|$, $\Phi_c\left([W, \dg{X}]\right)$ is piecewise \lip{} on diagonal and off-diagonal according to \Cref{lem:property-of-T} 

, and it converges to 0 almost surely according to the second part of \Cref{lem:sxsn-sampling}.

As all terms converge to 0 in the probability or almost surely, we conclude that $\|S_U \Phi_c\left([W, \dg{X}]\right)-\Phi_d ([\widehat{W}_{n \times n}, \dg{\inducedX}]) \|$ converges to 0 in probability.
\end{proof}

\begin{remark}
\label{remark:difficulty}
Note that we can not prove $S_n \cdot \Phi_c (\inducedEW) = \Phi_d \cdot S_n (\inducedEW)$ in general.
The difficulty is that starting with $\inducedEW$ of chessboard pattern, after the first layer, pattern like \Cref{fig:chessboard}(e) may appear in $\sigma \circ T_1(\inducedW)$. If $T_2$ is just a average/integral to map $\dtensor{2}{1}$ to $\mb{R}$, then $S_n \circ T_2 \circ \sigma \circ T_1(\inducedW) = T_2 \circ \sigma \circ T_1(\inducedW)$ will not be equal to $T_2 \circ \sigma \circ T_1(S_n \inducedW)$. The reason is that both $\sigma \circ T_1(\inducedW)$ and $\sigma \circ T_1(S_n \inducedW)$ will no longer be of chessboard pattern (\Cref{fig:chessboard}(e) may occur). The diagonal in the $\sigma \circ T_1(\inducedW)$ has no effect after taking integral in $T_2$ as it is of measure 0. On the other hand, the diagonal in the matrix $\sigma \circ T_1(S_n \inducedW)$ will affect the average. Therefore in general,  $S_n \Phi_c (\inducedEW) = \Phi_d S_n (\inducedEW)$ does not hold.

\end{remark}

\section{\smallIGN{} Can Approximate Spectral GNN}
\label{app:approx}
\textbf{Definition of Spectral GNN.}
The \sGNN{} (SGNN) here stands for GNN with multiple layers of the following form $\forall j=1, \ldots d_{\ell+1}$,
\begin{align}
\label{eq:sgnn}
{\quad }z_{j}^{(\ell+1)}=\sigma \left(\sum_{i=1}^{d_{\ell}} h_{i j}^{(\ell)}(L) z_{i}^{(\ell)}+b_{j}^{(\ell)} 1_{n}\right) \in \mb{R}^{n}
\end{align}
where $L = D(A)^{-\frac{1}{2}} A D(A)^{-\frac{1}{2}}$ stands for normalized adjacency,\footnote{We follow the same notation as \citet{keriven2020convergence}, which is different from the conventional notation.} $z_j^{\ell}, b_j^{\ell} \in \mb{R}$ denotes the embedding and bias at layer $\ell$. $d_\ell$ stands for the number of output channels in $\ell$-th layer. 
 $h: \mb{R} \rightarrow \mb{R}, h(\lambda) = \sum_{k\geq 0} \beta_k \lambda^k, h(L)=\sum_{k} \beta_{k} L^{k}$, i.e., we apply $h$ to the eigenvalues of $L$ when it is diagonalizable. Extending $h$ to multiple input output channels which are indexed in $i$ and $j$, we have $h_{i j}^{(\ell)}(\lambda)=\sum_{k} \beta_{i j k}^{(\ell)} \lambda^{k}$. By defining all components of \sGNN{} for graphon, the continuous version of \sGNN{} can also be defined. %
See \citet{keriven2020convergence} for details.

We first prove IGN can approximate \sGNN{} arbitrarily well, both for discrete SGNN and continuous SGNN. Next, we show that such IGN belongs to \smallIGN{}.
We need the following simple assumption to ensure the input lies in a compact domain.
    \begin{assumption}\label{as:x-compact}
    There exists an upper bound on $\|x\|_{L_\infty}$ for the discrete case and $\|X\|_{L_\infty}$ in the continuous case.
    \end{assumption}

    \begin{assumption}\label{as:min-deg-lower-bound}
    $\text{min}(\degmean) \geq c_{\tn{min}}$ where $\degmean$ is defined to be $\frac{1}{n}\dg{A\one}$. The same lower bound holds for graphon case.
    \end{assumption}

\begin{restatable}[]{lemma}{ignmultiplication}
\label{lem:ign-multiplication}
Assume AS1-AS6 and $DMD$ arbitrarily well in $L_\infty$ sense on a compact domain %
\end{restatable}

    \begin{proof}
    Given diagonal matrix $D$ and matrix $M$,
    to implement $DMD$ with linear equivariant layers of \IGN, we first use operation 14-15 in \Cref{tab:R2-R2} to copy diagonal elements in $D$ to rows and columns of two matrix $D_\tn{row}$ and $D_\tn{col}$. Then calculating $DMD$ becomes entry-wise multiplication of three matrix $D_\tn{row}, M, D_\tn{col}$. Assuming all entries of $D$ and $M$ lies in a compact domain, we can use MLP (which is part of IGN according to \Cref{remark:IGN}) to approximate multiplication arbitrarily well \citep{cybenko1989approximation,hornik1989multilayer}.  for illustration.

    To implement $\frac{1}{n}Mx$ with linear equivariant layers of \IGN,
    first map $x$ into a diagonal matrix $\text{Diag}(x)$ and concatenate it with $M$ as the input
    $[\text{Diag}(x), M] \in \mb{R}^{n\times n \times 2}$ to \IGN. Apply ``copy diagonal to all columns'' to the first channel and use MLP to uniformly approximates up to arbitrary precision $\epsilon$ the multiplication of first channel with the second channel. Then use operation ``copy row mean'' to map $\mb{R}^{n\times n } \rightarrow \mb{R}^{n}$ to get the $\frac{1}{n}Mx$ within $\epsilon$ precision. See \Cref{fig:approx}.
    \end{proof}

    \begin{remark}\label{remark:IGN-mm-multiplication}
    Linear layers in \IGN{} can not implement matrix-matrix multiplication in general. When we introduce the matrix multiplication component, the expressive power of GNN in terms of WL test provably increases from 2-WL to 3-WL \citep{maron2019provably}). 
    \end{remark}

    \begin{restatable}[]{theorem}{ignapproxsgnn}
    \label{thm:ign-approx-sgnn}
    Given $n$, $\epsilon$, and $\tn{SGNN}_{\theta_1}(n)$, there exists a \tn{\IGN{}} $\tn{IGN}_{\theta_2}(n)$ such that it approximates $\tn{SGNN}_{\theta_1}(n)$ on a compact set (support of input feature $x_n$) arbitrarily well in $L_\infty$ sense. %
    \end{restatable}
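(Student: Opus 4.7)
\begin{myproof}
The plan is to implement each SGNN layer inside an IGN by reducing it to a small set of primitive operations that IGN can realize either exactly (via linear equivariant layers) or approximately (via the MLPs that live between linear equivariant layers, cf.\ Lemma~\ref{lem:ign-multiplication}). A single SGNN layer has the form $z_j^{(\ell+1)}=\rho\bigl(\sum_i h_{ij}^{(\ell)}(L) z_i^{(\ell)}+b_j^{(\ell)} \one_n\bigr)$ with $L=D(A)^{-1/2}A D(A)^{-1/2}$ and $h_{ij}^{(\ell)}(L)=\sum_{k\le K} \beta_{ijk}^{(\ell)} L^k$. I will therefore reduce the task to (i) building $L$ from $A$, (ii) computing $L^k z$ for $k=0,\dots,K$, (iii) taking a linear combination against the learnable scalars $\beta_{ijk}^{(\ell)}$, adding the bias, and applying $\rho$. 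Feeding the input $[A,\dg{x}]\in \mb{R}^{n^2\times 2}$ into IGN gives access to $A$ on one channel and to $\dg{x}$ on another, and the multi-channel extension of IGN (Remark~\ref{remark:IGN}) lets me carry $z^{(\ell)}$ as additional diagonal channels between layers.

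For (i), the row average operator in Table~\ref{tab:R2-R2} (operation 4--6) produces $\degmean=\tfrac{1}{n}\dg{A\one}$ as a diagonal matrix in a single linear equivariant step. Under AS~\ref{as:min-deg-lower-bound} the entries of $\degmean$ lie in the compact set $[c_{\min},c_{\max}]$, so a standard MLP (universal approximation) implements $t\mapsto t^{-1/2}$ on this interval to arbitrary uniform precision on the diagonal channel, producing a diagonal matrix $\hat D$ that $\varepsilon$-approximates $(n D(A))^{-1/2}$. The product $D(A)^{-1/2} A D(A)^{-1/2}$ equals $n\cdot \hat D A \hat D$ (up to the same $\varepsilon$), and $\hat D A \hat D$ is exactly the $DMD$ construction of Lemma~\ref{lem:ign-multiplication}; absorbing the factor $n$ into the weights of the next linear equivariant layer yields an approximate $\hat L$. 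For (ii), I compute $L^k z$ iteratively as $y_0=z$, $y_{k}=L y_{k-1}$, each step being a matrix--vector multiplication, which by Lemma~\ref{lem:ign-multiplication} is approximable by an IGN block (the $\tfrac{1}{n}$ normalization of the IGN row-average can be cancelled by an extra scalar weight since we are on graphs of a fixed size $n$). For (iii), the linear combination $\sum_{i,k}\beta_{ijk}^{(\ell)} L^k z_i^{(\ell)}$ is a multi-channel linear equivariant operation among diagonal channels, the bias is a constant-on-diagonal channel produced by operation 1--3 of Table~\ref{tab:R1-R2}, and $\rho$ is handled by IGN's built-in nonlinearity (or again by MLP approximation on compact range).

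The correctness argument is then a standard error-propagation estimate: because AS~\ref{as:x-compact} bounds $\|x\|_{L_\infty}$, AS~\ref{as:min-deg-lower-bound} bounds $\degmean$ away from $0$, and all SGNN weights are fixed, every intermediate quantity $L^k z^{(\ell)}_i$ stays in a fixed compact set depending only on $n$, $\theta_1$, and the input compact set. On this set each primitive is Lipschitz, so choosing the per-primitive tolerance to be $\epsilon/C$ for a suitable constant $C$ (depending on $n$, the SGNN depth, the polynomial degree $K$, and the Lipschitz constants of $\rho$) yields $\|\text{IGN}_{\theta_2}(x_n)-\text{SGNN}_{\theta_1}(x_n)\|_{L_\infty}\le \epsilon$ by induction on layers. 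The main technical obstacle is precisely this propagation: the $K$-fold iterated matrix--vector products can amplify errors multiplicatively, so the construction must choose the MLP precision for $t\mapsto t^{-1/2}$ and for the bilinear multiplications tight enough that the cumulative error across depth, channels, and polynomial order stays controlled. Making the bound quantitative on a \emph{fixed} $n$ avoids the additional $n$-dependence that would appear in an asymptotic regime, which is why the theorem is stated for a given $n$.
\end{myproof}
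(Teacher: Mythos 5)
Your proposal is correct and follows essentially the same approach as the paper's proof: both reduce the problem to approximating $L^k x$ by writing $L=\tfrac{1}{n}\degmean^{-1/2}A\degmean^{-1/2}$, compute $\degmean$ exactly via a linear equivariant layer, use an MLP on a compact domain to approximate $t\mapsto t^{-1/2}$ and bilinear multiplication, invoke Lemma~\ref{lem:ign-multiplication} for the $DMD$ and matrix--vector steps, and close with an error-propagation argument over layers using AS\ref{as:filter-bound}, AS\ref{as:x-compact}, AS\ref{as:min-deg-lower-bound}. The only difference is cosmetic: you spell out the final linear combination, bias, and nonlinearity slightly more explicitly than the paper does.
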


\begin{figure*}[htp!]
  \centering
  \includegraphics[width=.8\linewidth]{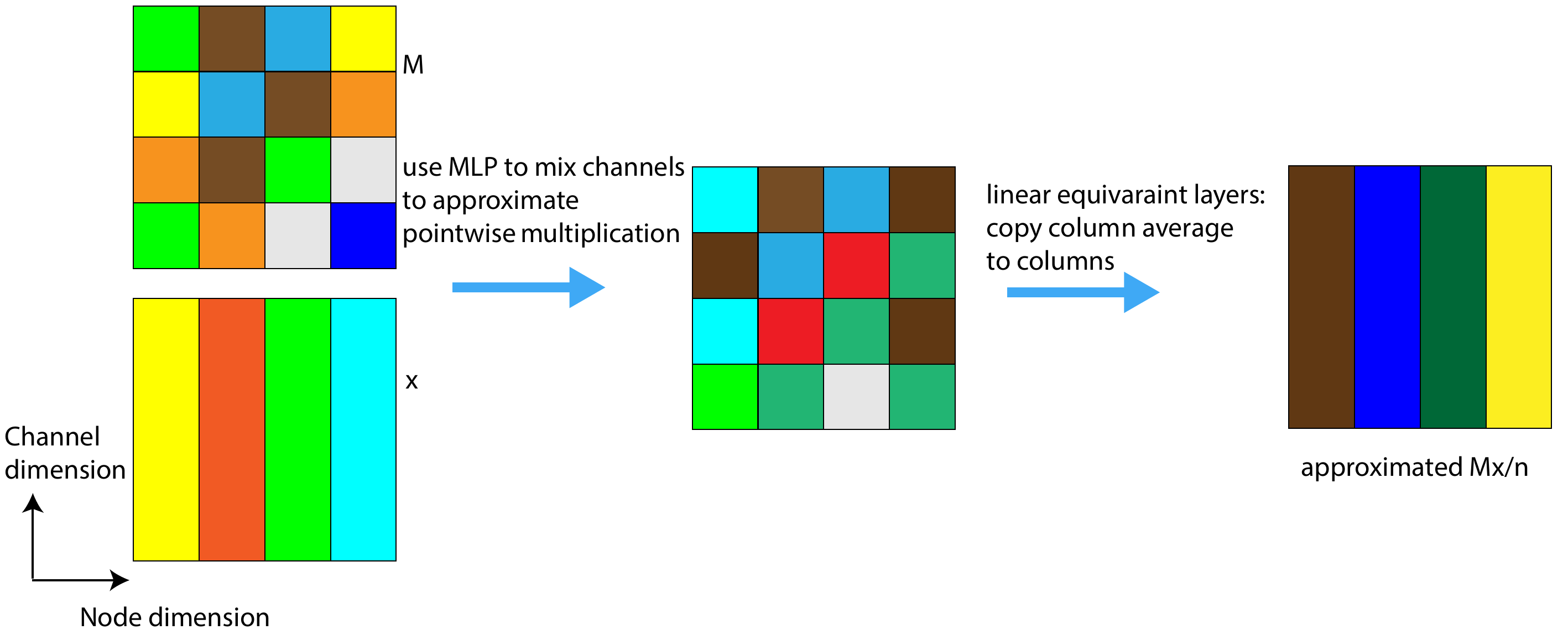}
\caption{An illustration of how we approximate the major building blocks of SGNN: $\frac{1}{n}Ax$. }
  \label{fig:approx}
\end{figure*}

    \begin{proof}
    Since IGN and SGNN has the same non-linearity. To show that IGN can approximate SGNN, it suffices to show that IGN can approximate linear layer of SGNN, which further boils down to prove that IGN can approximate $Lx$.

    Here we assume the input of \IGN{} is $A\in \mb{R}^{n \times n}$ and $x\in \mb{R}^{n \times d}$. We need to first show how $L = D(A)^{-\frac{1}{2}} A D(A)^{-\frac{1}{2}}$ can be implemented by linear layers of IGN. This is achieved by noting that $L = \frac{1}{n} \degmean^{-\frac{1}{2}} A \degmean^{-\frac{1}{2}}$ where $\degmean$ is normalized degree matrix $\frac{1}{n}\dg{A\one}$.
    Representing $L$ as $\frac{1}{n} \degmean^{-\frac{1}{2}} A \degmean^{-\frac{1}{2}}$ ensures that all entries in $A$ and $\degmean$ lies in a compact domain, which is crucial when we extending the approximation proof to the graphon case.

    Now we show how $Lx = \frac{1}{n} \degmean^{-\frac{1}{2}} A \degmean^{-\frac{1}{2}} x$ is implemented.
    First, it is easy to see that \IGN{} can calculate exactly $\degmean$ using equivariant layers. Second, as approximating a) $f(a, b) = ab$ and b) $f(a) = \frac{1}{\sqrt{a}}$ can achieved by MLP on compact domain, approximating $\degmean^{-\frac{1}{2}} A \degmean^{-\frac{1}{2}}$ can also achieved by \IGN{} layers according to \Cref{lem:ign-multiplication}. Third, we need to show $\frac{1}{n} \degmean^{-\frac{1}{2}} A \degmean^{-\frac{1}{2}}x$ can also be implemented. This is proved in \Cref{lem:ign-multiplication}.

    There are two main functions we need to approximate with MLP:
    a) $f(x) = 1/\sqrt{a}$ and b) $ f(a, b) = ab$. 

    For a) the input is entries of $\degmean$ which lie in $[0,1]$. By classical universal approximation theorem \citep{cybenko1989approximation,hornik1989multilayer}, we know MLP can approximate a) arbitrarily well.

    For b) the input is $(\degmean^{-1/2}, A)$ for normalized adjacency matrix calculation, and $(L, x)$ for graph signal convolution.
    
    To ensure the uniform approximation, we need to ensure all of them lie in a compact domain. This is indeed the case as all entries in $\degmean, A, x$ are all upper bounded

    \begin{enumerate}
    \item  every entry in $A$ is either 0 or 1 therefore lies in a compact domain.
    \item similarly, all entries $\degmean$ lies in $[c_\text{min}, 1]$ by AS\ref{as:min-deg-lower-bound}, and therefore $\degmean^{-\frac{1}{2}}$ also lies in a compact domain. As $L(A)$ is the multiplication of $\degmean^{-1/2}, A, \degmean^{-1/2}$, every  entry of $L(A)$ also lies in compact domain.
    \item input signal $x$ has bounded $l_\infty$-norm by assumption AS\ref{as:x-compact}.
    \item  all coefficient for operators is upper bounded and independent from $n$ by AS\ref{as:filter-bound}.
    \end{enumerate}

    Since we showed the $L(A)x$ can be approximated arbitrarily well by IGN, repeating such processes and leveraging the fact that $L$ has bounded spectral norm, we can then approximate $L^k(A)x$ up to $\epsilon$ precision. The errors $\epsilon$ depend on the approximation error of the MLP to the relevant function, the previous errors, and uniform bounds as well as uniform continuity of the approximated functions.
    \end{proof}

    \begin{restatable}[]{theorem}{cignapproxcsgnn}
    \label{thm:cign-approx-csgnn}
    Given $\epsilon$, and a \sGNN{} c$\tn{SGNN}_{\theta_1}$, there exists a continuous \IGN{} c$\tn{IGN}_{\theta_2} $such that it approximates c$\tn{SGNN}_{\theta_1}$ on a compact set (input feature $X$) arbitrarily well.
    \end{restatable}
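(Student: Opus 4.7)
The plan is to mirror the proof of Theorem~\ref{thm:ign-approx-sgnn} (the discrete version) in the graphon setting, with all matrix--vector operations replaced by their natural integral-operator analogues. Concretely, it suffices to show that the action of one continuous SGNN layer on a graphon signal can be approximated to arbitrary $L_\infty$ accuracy by a composition of continuous linear equivariant (LE) layers of cIGN and MLPs acting channel-wise; then iterating across the (finitely many) SGNN layers, together with the fact that all relevant quantities stay in a compact set, gives the result.

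The continuous normalized Laplacian operator associated to $W$ acts on a graphon signal $X$ as $(L_W X)(u) = d_W(u)^{-1/2} \int W(u,v)\, d_W(v)^{-1/2} X(v)\, dv$, where $d_W(u) = \int W(u,v)\, dv$. To realise each ingredient inside cIGN, I would proceed as follows. First, $d_W$ is obtained from $W$ by the LE operator ``row integral'' from $\mathbb{R}^{[0,1]^2} \to \mathbb{R}^{[0,1]}$ (entry~2 of Table~\ref{tab:R2-R1}); this is exact, not approximate. Second, the pointwise map $t \mapsto t^{-1/2}$ is applied to the 1-tensor $d_W$; because of Assumption~\ref{as:min-deg-lower-bound} (lower bound $c_{\min}$) and $\|W\|_{L_\infty} \le c_{\max}$, the input lies in the compact interval $[c_{\min}, c_{\max}]$, so a channel-wise MLP (which is part of cIGN, cf.\ Remark~\ref{remark:IGN}) approximates it uniformly. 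Third, to form $d_W(u)^{-1/2} W(u,v) d_W(v)^{-1/2}$, broadcast $d_W^{-1/2}$ into rows and columns of a 2-tensor using the LE operators ``replicate to rows/columns'' (entries~2--3 of Table~\ref{tab:R1-R2}), concatenate with $W$ along the channel dimension, and apply a channel-wise MLP to approximate the product of three bounded scalars. This realises the continuous analogue of Lemma~\ref{lem:ign-multiplication} for $DMD$. Finally, to apply $L_W$ to $X$, diagonally embed $X$ as $\dg{X}$, broadcast it to columns, multiply pointwise with the previously constructed approximation of the normalized kernel (again via MLP), then take the row integral (entry~2 of Table~\ref{tab:R2-R1}). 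The bias and addition of the SGNN equation are then just a constant LE operator plus pointwise sum, followed by the activation $\rho$, which is shared between SGNN and cIGN.

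The main technical point is \emph{uniform} approximation. Each MLP approximates its target function on a compact subset of $\mathbb{R}^k$ (for $k\in\{1,2,3\}$), so I need all intermediate graphon signals to remain in a compact $L_\infty$ ball. This is handled exactly as in the discrete proof: entries of $W$ lie in $[0,c_{\max}]$, $d_W$ lies in $[c_{\min},c_{\max}]$, hence $d_W^{-1/2}$ lies in $[c_{\max}^{-1/2}, c_{\min}^{-1/2}]$, hence the normalized kernel lies in a bounded interval; then $X$ is bounded by AS\ref{as:x-compact}, and the operator norm (in $L_\infty$) of one layer of $L_W$ is bounded by a constant depending only on $c_{\min}$ and $c_{\max}$. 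The stability Proposition~\ref{prop:Phi-stable} (its $L_\infty$ version) then propagates the per-layer approximation errors through the network: composing the finitely many LE operators and MLPs, the accumulated error is controlled by the per-MLP precision times a constant depending on the number of layers, filter coefficient bound $A_2$, and the uniform bounds above, so by choosing each MLP precision small enough we obtain overall accuracy $\epsilon$.

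The main obstacle I anticipate is bookkeeping rather than a substantial new idea: keeping track of the $L_\infty$ bounds and Lipschitz constants of all intermediate quantities, and verifying that every step in the discrete construction of Theorem~\ref{thm:ign-approx-sgnn} has a direct continuous counterpart that is realisable as an LE operator listed in Tables~\ref{tab:R2-R2}, \ref{tab:R1-R2}, and~\ref{tab:R2-R1} (as opposed to something requiring genuine matrix--matrix multiplication, which cIGN cannot implement, cf.\ Remark~\ref{remark:IGN-mm-multiplication}). The reason the construction goes through is that the normalization involves only a \emph{diagonal} operator $d_W^{-1/2}$, so the full step can be decomposed into ``row/column broadcast + pointwise product + row integral'', all of which are available LE primitives.
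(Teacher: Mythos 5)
Your proof is correct and follows the same strategy as the paper's: carry the discrete construction of Theorem~\ref{thm:ign-approx-sgnn} over to the graphon setting, checking that each LE primitive has a continuous analogue in Tables~\ref{tab:R2-R2}--\ref{tab:R2-R1} and that all intermediate quantities (entries of $W$, $d_W$, $d_W^{-1/2}$, the normalized kernel, and $X$) remain in compact sets so the channel-wise MLP approximations are uniform. Your write-up is in fact considerably more explicit than the paper's own proof, which simply asserts that the three compactness items from the discrete argument carry over to the continuous case.
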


    \begin{proof}

In the continuous case, $Lx = \frac{1}{n} \degmean^{-\frac{1}{2}} A \degmean^{-\frac{1}{2}} x$ in the discrete case will be replaced with $D(W)^{-\frac{1}{2}} W D(W)^{-\frac{1}{2}} X$ where $D(W)$ is a diagonal graphon defined to be $D(W)(i, i) = \int_0^1 W(i, j)dj$. 

We show that all items listed in proof of \Cref{thm:ign-approx-sgnn} still holds in the continuous case
    \begin{itemize}
      \item we consider the $W$ instead in the continuous case, where all entries still lies in a compact domain $[0,1]$.
      \item similarly all entries of the continuous analog of $\degmean, \degmean^{-\frac{1}{2}}$, and $T(W)$ also lies in a compact domain according to AS\ref{as:min-deg-lower-bound}.
      \item the statements about input signal $X$ and the coefficient for linear equivariant operators also holds in the continuous setting.
    \end{itemize}
    Therefore we conclude the proof. Now we are ready to prove that those IGN that can approximate SGNN well is a subset of \smallIGN{}.
    \end{proof}

\begin{lemma}
\label{lemma:sgnn-linear-layer-commutes}
With slight abuse of notation, let $\inducedEW$ be graphon of chessboard pattern. Let $\inducedEX$ be a graphon signal with 1D chessboard pattern.
$S_n \circ \inducedEW \inducedEX = (S_n \inducedEW) (S_n \inducedEX)$. %
\end{lemma}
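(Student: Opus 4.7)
The plan is to unfold both sides of the identity explicitly, exploit the piecewise-constant structure of $\inducedEW$ and $\inducedEX$ on the regular grid, and observe that the continuous integral collapses to the same finite sum as the discrete matrix-vector product. Write $u_i = (i-1)/n$ and let $I_j = (u_j, u_{j+1}]$ be the associated half-open blocks (using the convention consistent with the definition $\inducedEW(I_i\times I_j) \coloneqq W(u_{(i)},u_{(j)})$ from the notation table, so that pointwise evaluation at the grid endpoints is unambiguous).

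First I would compute the right-hand side. By the definition of the normalized sampler $S_n$ applied to a $2$-tensor and a $1$-tensor respectively, $(S_n\inducedEW)(i,j) = \tfrac{1}{n}\inducedEW(i/n,j/n)$ and $(S_n\inducedEX)(j) = \tfrac{1}{\sqrt{n}}\inducedEX(j/n)$, so
\[
\bigl((S_n\inducedEW)(S_n\inducedEX)\bigr)(i) \;=\; \frac{1}{n\sqrt{n}} \sum_{j=1}^n \inducedEW(i/n,\,j/n)\,\inducedEX(j/n).
\]

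Next I would compute the left-hand side. The continuous action is $(\inducedEW\inducedEX)(u) = \int_0^1 \inducedEW(u,v)\,\inducedEX(v)\,dv$. Because $\inducedEW(i/n,\cdot)$ is constant on each block $I_j$ (chessboard pattern of $\inducedEW$) and $\inducedEX$ is also constant on each $I_j$ (1D chessboard of $\inducedEX$), splitting the integral block by block gives
\[
(\inducedEW\inducedEX)(i/n) \;=\; \sum_{j=1}^n \int_{I_j} \inducedEW(i/n,v)\,\inducedEX(v)\,dv \;=\; \frac{1}{n}\sum_{j=1}^n \inducedEW(i/n,\,j/n)\,\inducedEX(j/n),
\]
since each $I_j$ has length $1/n$. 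Applying $S_n$ (which for a $1$-tensor evaluates at $u=i/n$ and multiplies by $1/\sqrt{n}$) then produces exactly the displayed right-hand side, completing the identification.

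There is no real obstacle: the statement is essentially bookkeeping about normalization constants ($1/n$ from the block width matching the $1/n$ from sampling a $2$-tensor, together with the $1/\sqrt{n}$ factor of sampling a $1$-tensor on both sides). The only mildly delicate point is the boundary ambiguity at $v=j/n$, which I would resolve up front with the half-open convention mentioned above; after that, both sides become the same explicit finite sum, and the lemma follows.
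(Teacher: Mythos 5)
Your proof is correct and follows essentially the same calculation as the paper's: express both sides explicitly, use the piecewise-constant structure to turn the integral into a Riemann sum with block width $1/n$, and match the normalizations $(1/n$ from $S_n$ on the $2$-tensor, $1/\sqrt{n}$ from $S_n$ on each $1$-tensor). Your additional care about the half-open block convention at grid endpoints is a reasonable clarification but not a substantive difference.
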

\begin{proof}
 Since $ S_n \circ \inducedEW \inducedEX = S_n \circ\int_{j\in [0, 1]} \inducedEW(i, j) \inducedEX(j) dj = \left(..., \frac{1}{\sqrt{n}}\int_{j\in [0, 1]} \inducedEW(\frac{i}{n}, j) \inducedEX(j), ...\right)$, it suffices to analyze $i$-th component $\frac{1}{\sqrt{n}}\int_{j\in [0, 1]} \inducedEW(\frac{i}{n}, j) \inducedEX(j)$.

 Since $\inducedEW, \inducedEX$ are of chessboard pattern, we can replace integral with summation.
 \begin{align*}
 S_n \circ \inducedEW \inducedEX(i)
 & =  \frac{1}{\sqrt{n}}\int_{j\in [0, 1]} \inducedEW(\frac{i}{n}, j) \inducedEX(j) \\
  & = \frac{1}{\sqrt{n}} \frac{1}{n} \sum_{j \in [n]} \inducedEW(\frac{i}{n}, \frac{j}{n}) \inducedEX(\frac{j}{n}) \\
  & = \sum_{j \in [n]} \frac{1}{n} \inducedEW(\frac{i}{n}, \frac{j}{n}) ( S_n \inducedEX ) (j) \\
  & = \sum (S_n \inducedEW )(i, j) ( S_n \inducedEX ) (j) \\
  & = \left((S_n \inducedEW ) ( S_n \inducedEX ) \right)(i)
\end{align*}
Which concludes the proof. Note that our proof does make use of the property of multiplication between two numbers.
\end{proof}
\begin{remark}
\label{remark: replace-multiplication}
The whole proof only relies on that $\inducedEW$ and $\inducedEX$ have checkerboard patterns. Therefore replacing the multiplication with other operations (such as a MLP) will still hold.
\end{remark}

\smalligngcn*
\begin{proof}
To prove this, we only need to show that $S_n\Phi_{c, \tn{approx}}([\inducedEW, \inducedEf]) = \Phi_{d, \tn{approx}} S_n([\inducedEW, \inducedEf])$. Here  $\Phi_{c, \tn{approx}}$ and $\Phi_{d, \tn{approx}}$ denotes those specific IGN in \Cref{thm:ign-approx-sgnn,thm:cign-approx-csgnn} constructed to approximate SGNN. 

To build up some intuition, let $\Phi_{\tn{SGNN}}$ denotes the \sGNN{} that $\Phi_{\tn{approx}}$ approximates. it is easy to see that $S_n\Phi_{c, \tn{SGNN}}([\inducedEW, \inducedEf]) = \Phi_{d, \tn{SGNN}}S_n([\inducedEW, \inducedEf])$  due to \Cref{lemma:sgnn-linear-layer-commutes} and \Cref{lem:property-of-T}.2. 
To show the same holds for $\Phi_{\tn{approx}}$, note that the only difference between $\inducedEW\inducedEf$ implemented by SGNN and approximated by  $\Phi_{\tn{approx}}$ is that $\Phi_{\tn{approx}}$ use MLP to simulate multiplication between numbers. According to \Cref{remark: replace-multiplication}, the approximated version of $\inducedEW\inducedEf$ still commutes with $S_n$.    %

Since nonlinear layer $\sigma$ in $\Phi_{\tn{approx}}$ also commutes with $S_n$ according to \Cref{lem:property-of-T}.2, we can combine the result above and conclude that $\Phi_{\tn{approx}}$ commutes with $S_n$.  Therefore $\Phi_{\tn{approx}}$ belongs to \smallIGN{}, which finishes the proof.
\end{proof}

\section{More experiments}
\label{app:all-exps}

We next show full results to verify \Cref{thm:convergenceafterEM,thm:EW-convergence}. The main procedure is described in \Cref{sec:exp}.

As the ground truth is defined in the continuous regime, we use outputs of IGN on large graphs as the approximation of the unknown true limit. We experiment with two methods: a) we take the output of IGN from the deterministic edge weight continuous model as ground truth and b) we take graphs sampled from the stochastic edge weight continuous model as input to IGN and average the outputs over 10 random seeds. The case a) is shown In the main text. Here we include results for both a) and b).

\begin{figure*}[htp]

\begin{subfigure}%
  \centering
  \includegraphics[width=.24\linewidth]{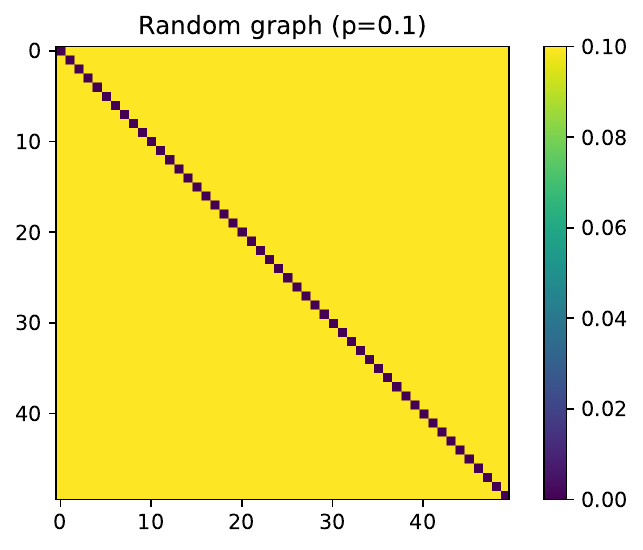}
  \includegraphics[width=.24\linewidth]{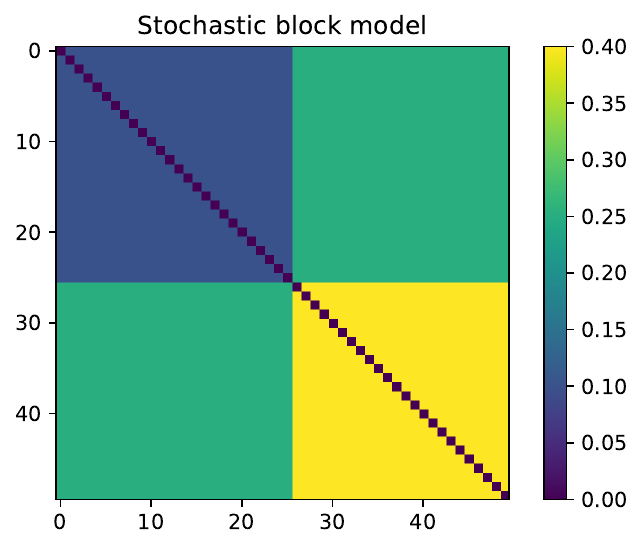}
  \includegraphics[width=.24\linewidth]{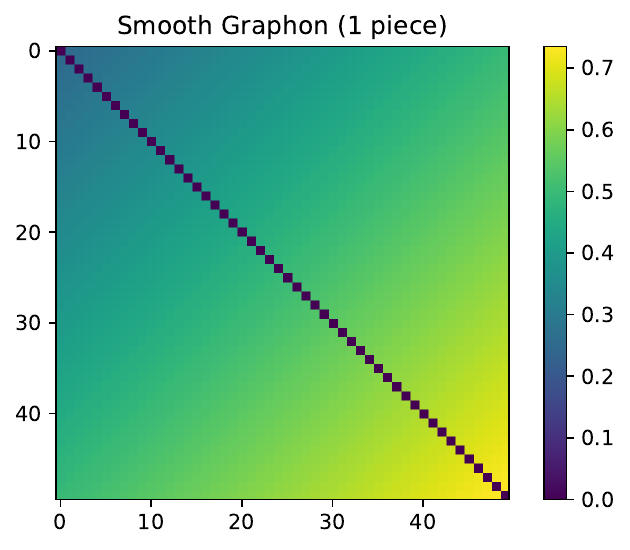}
  \includegraphics[width=.24\linewidth]{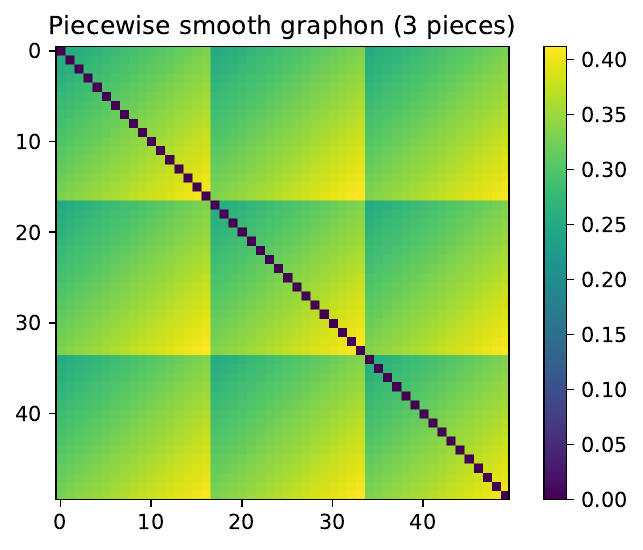}
  \caption{Four graphons of increasing complexity.}
  \label{fig:sfig0-full}
\end{subfigure}

\begin{subfigure}%
  \centering
  \includegraphics[width=.24\linewidth]{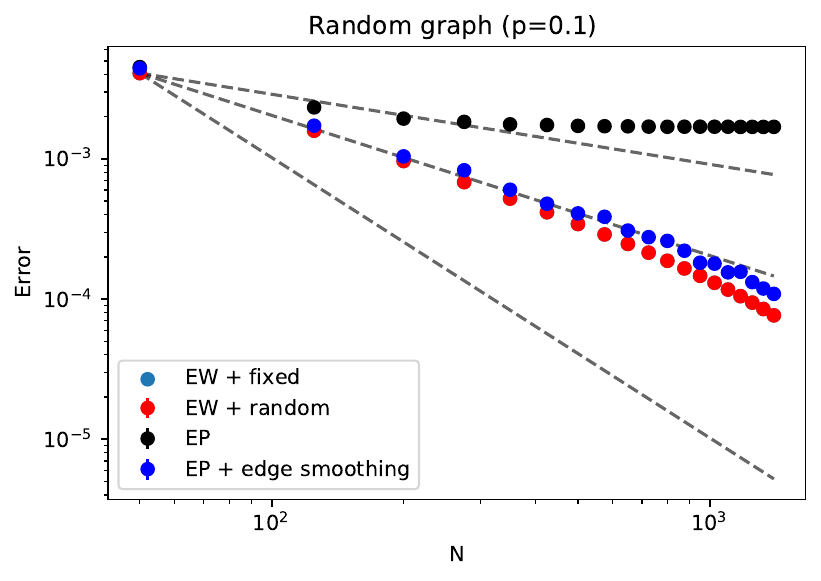}
  \includegraphics[width=.24\linewidth]{fig/sbm_78_groundtruth_randomsample.pdf}
  \includegraphics[width=.24\linewidth]{fig/graphon_78_groundtruth_randomsample.pdf}
  \includegraphics[width=.24\linewidth]{fig/graphonPW3_78_groundtruth_randomsample.pdf}
  \caption{ground truth: random sample. }
  \label{fig:sfig1-full}
\end{subfigure}

\begin{subfigure}%
  \centering
  \includegraphics[width=.24\linewidth]{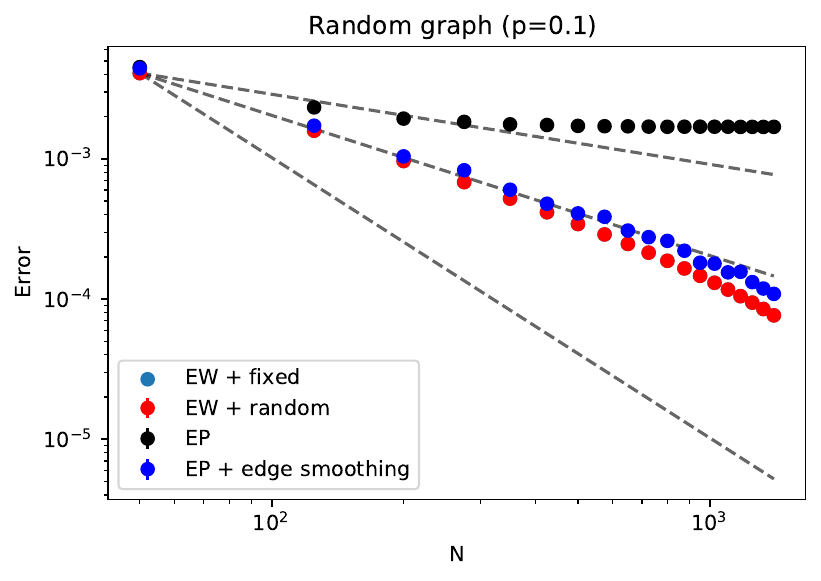}
  \includegraphics[width=.24\linewidth]{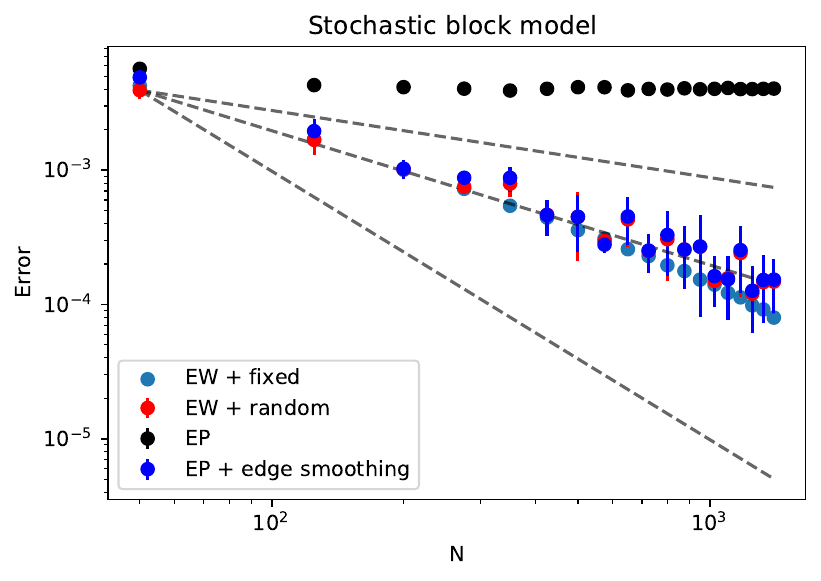}
  \includegraphics[width=.24\linewidth]{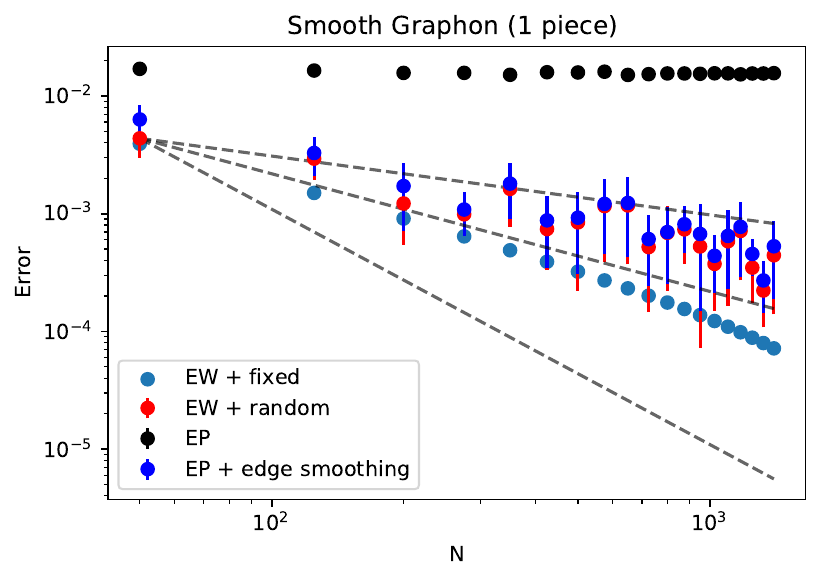}
  \includegraphics[width=.24\linewidth]{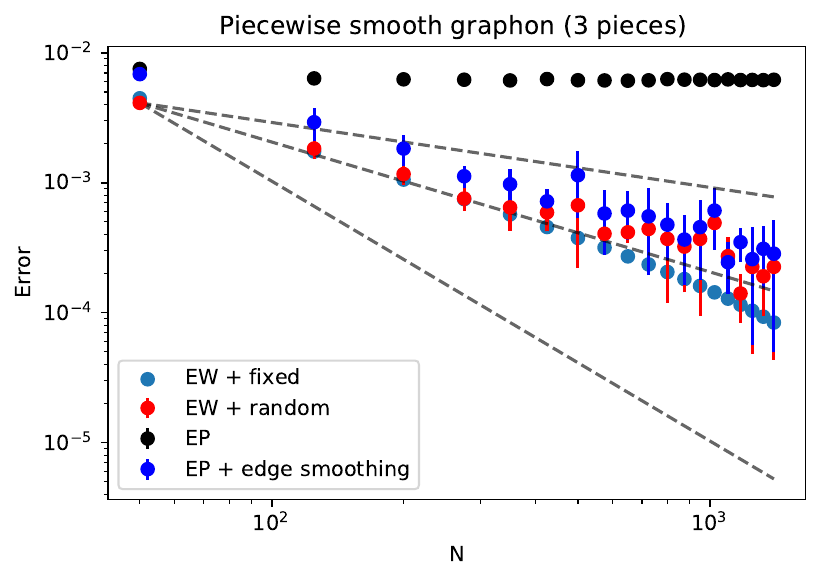}
  \caption{ground truth: grid sample.}
  \label{fig:sfig2-full}
\end{subfigure}
\caption{The convergence error for four generative models under two ways of approximating ground truth. Three dashed line in each figure indicates the decay rate of $n^{-0.5}, n^{-1}$ and $n^{-2}$. EW stands for edge weight continuous model and EP stands for edge probability discrete model. As implied by \Cref{thm:EW-convergence}, EW + fixed and EW + random both converges when $n$ increases. On the other hand, EP does not converge, which is consistent with \Cref{thm:conv-failure}. After edge probability estimation, EP + edge smoothing again converges,  which is consistent with \Cref{thm:convergenceafterEM}.}
\label{fig:convergence-full}
\end{figure*}

\section{Third-party results}
\label{app:third-party}

\subsection{Edge Probability Estimation from \citet{zhang2015estimating}}
\label{subsec:zhang}
We next restate the setting and theorem regarding the theoretical guarantee of the edge probability estimation algorithm.

\begin{definition}
For any $\delta, A_1>0$, let $\mathcal{F}_{\delta ; L}$ de note a family of piecewise Lipschitz graphon functions $f:[0,1]^{2} \rightarrow[0,1]$ such that $(i)$ there exists an integer $K \geq 1$ and a sequence $0=x_{0}<\cdots<x_{K}=1$ satisfying $\min _{0 \leqslant  s \leqslant  K-1}\left(x_{s+1}-\right.$ $\left.x_{s}\right) \geq \delta$, and (ii) both $\left|f\left(u_{1}, v\right)-f\left(u_{2}, v\right)\right| \leqslant  A_1\left|u_{1}-u_{2}\right|$ and $\left|f\left(u, v_{1}\right)-f\left(u, v_{2}\right)\right| \leqslant  A_1 \mid v_{1}-$ $v_{2} \mid$ hold for all $u, u_{1}, u_{2} \in\left[x_{s}, x_{s+1}\right], v, v_{1}, v_{2} \in\left[x_{t}, x_{t+1}\right]$ and $0 \leqslant  s, t \leqslant  K-1$
\end{definition}

Assume that $\alpha_n$ = 1. It is easy to see that the setup considered in \citet{zhang2015estimating} is slightly more general than the setup in \citet{keriven2020convergence}. The statistical guarantee of the edge smoothing algorithm is stated below.

\begin{theorem}[\citet{zhang2015estimating}]\label{thm:graphon-estimation}
Assume that $A_1$ is a global constant and $\delta=\delta(n)$ depends on $n$, satisfying $\lim_{n \rightarrow \infty} \delta /(n^{-1} \log n)^{1 / 2} \rightarrow \infty$. Then the estimator $\tilde{P}$ with neighborhood $\mathcal{N}_{i}$ defined in \citet{zhang2015estimating} and $h=C(n^{-1} \log n)^{1 / 2}$ for any global constant $C \in(0,1]$, satisfies
$
\max_{f \in \mathcal{F}_{\delta; A_1}} \operatorname{pr}\{d_{2, \infty}(\tilde{P}, P)^{2} \geq C_{1}(\frac{\log n}{n})^{1 / 2}\} \leqslant  n^{-C_{2}}
$
where $C_{1}$ and $C_{2}$ are positive global constants. Here, $d_{2, \infty}(P, Q) \coloneqq n^{-1 / 2}\|P-Q\|_{2, \infty}=\max_{i} n^{-1 / 2} \|P_{i}-Q_{i} \|_{2}$.
\end{theorem}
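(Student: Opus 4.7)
The goal is to bound $d_{2,\infty}(\tilde P, P)^2 = \max_i n^{-1} \sum_j (\tilde P_{ij} - P_{ij})^2$ with high probability, uniformly over $f \in \mathcal{F}_{\delta;A_1}$. The plan is a standard bias--variance decomposition tailored to neighborhood smoothing, where the neighborhood $\mathcal{N}_i$ is data-dependent and must itself be analyzed.

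First, I would fix a row $i$ and decompose
\begin{equation*}
\tilde P_{ij} - P_{ij} \;=\; \underbrace{\tilde P_{ij} - \mathbb{E}[\tilde P_{ij}\mid U, \mathcal{N}_i]}_{\text{variance}} \;+\; \underbrace{\mathbb{E}[\tilde P_{ij}\mid U, \mathcal{N}_i] - P_{ij}}_{\text{bias}}.
\end{equation*}
The variance term is an average of $|\mathcal{N}_i|$ centered Bernoullis, for which a Bernstein or Hoeffding bound combined with a union bound over $j \in [n]$ gives a deviation of order $(\log n / |\mathcal{N}_i|)^{1/2}$ per entry, hence a squared row error of order $\log n / |\mathcal{N}_i|$. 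The neighborhood bandwidth $h = C(n^{-1}\log n)^{1/2}$ ensures $|\mathcal{N}_i| \asymp nh$ (so that the row is large enough to concentrate but not so large that the bias dominates); this balance is exactly the classical kernel-smoothing rate.

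The bias term is the place where the piecewise-Lipschitz structure of $\mathcal{F}_{\delta; A_1}$ enters. I would prove that, with probability at least $1 - n^{-C_2}$, every $i' \in \mathcal{N}_i$ has latent position $u_{i'}$ in the same Lipschitz piece as $u_i$ and within distance $O(h)$ of it. This is the technical heart of the argument: the dissimilarity statistic used to build $\mathcal{N}_i$ (typically $\max_{k \neq i,i'} |\langle A_{i,\cdot}, A_{k,\cdot}\rangle - \langle A_{i',\cdot}, A_{k,\cdot}\rangle|/n$ or the analogous ``neighborhood-distance'' from Zhang et al.) is a noisy estimate of an integral functional of $f$ evaluated at $(u_i, u_{i'})$. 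I would (i) show the statistic concentrates uniformly around its population counterpart at rate $(\log n / n)^{1/2}$ via Bernstein with a union bound over the $O(n^2)$ pairs, and (ii) show that its population counterpart is lower-bounded by a strictly increasing function of $|u_i - u_{i'}|$ within each Lipschitz piece, so that small sample dissimilarity forces small latent distance. The assumption $\delta \gg (n^{-1}\log n)^{1/2}$ guarantees that breakpoints between pieces are resolvable, so the neighborhood essentially never crosses a discontinuity. Once $|u_i - u_{i'}| = O(h)$ inside a common piece, the Lipschitz bound gives $|P_{i'j} - P_{ij}| = O(h)$, so the squared bias contribution per row is $O(h^2) = O(\log n / n)$.

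Combining the two contributions, the per-row squared error is $O(\log n / (nh)) + O(h^2)$, which at the chosen $h = C(n^{-1}\log n)^{1/2}$ both collapse to $O((\log n / n)^{1/2})$. A final union bound over $i \in [n]$ upgrades this to a uniform statement on $d_{2,\infty}$, and tracking constants gives the explicit $C_1,C_2$. The main obstacle I anticipate is step (ii) of the bias analysis: controlling the ``good neighborhood'' event uniformly in $f \in \mathcal{F}_{\delta;A_1}$ and over all pairs $(i,i')$, including handling the conditioning on $U$ cleanly and avoiding circular dependence between the data-dependent $\mathcal{N}_i$ and the adjacency entries $A_{i'j}$ being averaged; the standard remedy is a sample-splitting or leave-out argument so the indices used to build $\mathcal{N}_i$ are independent of those used in the final average.
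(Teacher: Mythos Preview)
This theorem is not proved in the paper at all: it appears in the appendix section titled ``Third-party results'' and is simply a restatement of a result from \citet{zhang2015estimating}, which the authors invoke as a black box to control the third (estimation-error) term in the proof of \Cref{thm:convergenceafterEM}. There is therefore no ``paper's own proof'' to compare your proposal against.

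That said, your sketch is a reasonable outline of how the original Zhang et al.\ argument proceeds, and the bias--variance decomposition with concentration of the dissimilarity statistic is indeed the mechanism behind their result. One point worth flagging: your step (ii) assumes that small population dissimilarity forces small latent distance $|u_i - u_{i'}|$, but this identifiability need not hold for arbitrary $f \in \mathcal{F}_{\delta;A_1}$ (two latent positions can have nearly identical row-profiles without being close). The Zhang et al.\ proof handles this by showing that small dissimilarity forces the \emph{rows} $P_{i,\cdot}$ and $P_{i',\cdot}$ to be close in $\ell_2$, which is what is actually needed for the bias bound, rather than closeness of the latent variables themselves. Your circular-dependence concern is real and is addressed in the original paper, though not via sample splitting.
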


\subsection{IGN Details}
\label{subsec:IGN-details}

\begin{remark}[independence from $n$]
Although for large $n$, the result in \citet{maron2018invariant} is correct. But as noted by \citet{finzi2021practical}, this does not hold when $n$ is small, which is not an issue as we consider cases when $n$ goes to infinity in this paper. %
\end{remark}

\begin{remark}[multi-channel IGN contains MLP]
\label{remark:IGN}
For simplicity, in the main text, we focus on the case when the input and output tensor channel number is 1. The general case of multiple input and output channels is presented in Equation 9 of \citet{maron2018invariant}. The main takeaway is that permutation equivariance does not constrain the mixing over feature channels, i.e., the space of linear equivariant maps from $\mb{R}^{n^{\ell}\times d_1} \rightarrow \mb{R}^{n^{m}\times d_2}$ if of dimension $d_1d_2\bell{l+m}$. Therefore IGN contains MLP.  %
\end{remark}

\end{document}